\documentclass[11pt, a4paper, logo]{expevolve}

\usepackage[authoryear, sort&compress, round]{natbib}
\usepackage{xurl}

\usepackage[most]{tcolorbox}
\usepackage{subfigure}
\usepackage{tikz}
\tikzset{every label/.append style={font=\footnotesize}} % label size used by the appendix diagrams
\usepackage{mathtools}
\usepackage{multirow}
\usepackage{algorithm,algorithmic}
\usepackage{xspace}

\definecolor{kleinblue}{rgb}{0,0.18,0.65}
\hypersetup{colorlinks=true,citecolor=kleinblue, linkcolor=kleinblue, urlcolor=kleinblue}

\newtheorem{theorem}{Theorem}[section]
\newtheorem{definition}[theorem]{Definition}

\def\eqref#1{equation~\ref{#1}}
\newcommand{\E}{\mathbb{E}}

\newenvironment{myquotation}{\setlength{\leftmargini}{0em}\quotation}{\endquotation}

\usepackage[hyperpageref]{backref}

\renewcommand*{\backrefalt}[4]{
  \ifcase #1 \relax
  \or
    (Cited on page #2)
  \else
    (Cited on pages #2)
  \fi
}

\newcommand{\ours}{\texttt{ExpEvolve}\xspace}
\newcommand{\coat}{\texttt{COAT}\xspace}
\newcommand{\shinka}{\texttt{ShinkaEvolve}\xspace}
\newcommand{\alevolve}{\texttt{AlphaEvolve}\xspace}
\newcommand{\scs}{\texttt{SCS}\xspace}
\newcommand{\openrsi}{\texttt{OpenRSI}\xspace}

\usetikzlibrary{shapes, positioning, arrows.meta, fit, calc, backgrounds, shadows}

\definecolor{boxMem}{RGB}{225,245,254}
\definecolor{boxAct}{RGB}{255,249,196}
\definecolor{boxEnv}{RGB}{232,245,233}
\definecolor{boxWorld}{RGB}{245,245,245}
\definecolor{lineColor}{RGB}{80,80,80}
\definecolor{memBlue}{RGB}{220,235,255}
\definecolor{progGreen}{RGB}{220,255,220}
\definecolor{obsGray}{RGB}{240,240,240}
\definecolor{borderGray}{RGB}{100,100,100}

\DeclareFontFamily{T1}{optimistic}{}
\DeclareFontShape{T1}{optimistic}{m}{n}{<-> s * [0.88] assets/optimistic}{}
\DeclareFontShape{T1}{optimistic}{b}{n}{<-> s * [0.88] assets/optimistic}{}
\pdfmapline{+optimistic < assets/Optimistic.ttf <T1-WGL4.enc}
\newcommand{\wordmarkfont}{\fontfamily{optimistic}\selectfont}

\title{ExpEvolve: Structuring and Reusing Experience for Open-Ended Discovery}

\author[1,2]{Yongqiang Chen\textsuperscript{*$\dagger$}}
\author[3]{Chenxi Liu\textsuperscript{*}}
\author[4]{Deyu Zou\textsuperscript{*}}
\author[1]{Zhenhao Chen}
\author[5,1]{Tongliang Liu}
\author[3]{Bo Han}
\author[1,2]{Kun Zhang}

\affil[1]{MBZUAI}
\affil[2]{Carnegie Mellon University}
\affil[3]{TMLR Group, Hong Kong Baptist University}
\affil[4]{The Chinese University of Hong Kong}
\affil[5]{SAIC Centre, The University of Sydney}

\firstpagenote{This paper is an extended version of \emph{CausalEvolve} (\href{https://arxiv.org/abs/2603.14575}{arXiv:2603.14575}).}
\authornote{\textsuperscript{*}Core contributors. \textsuperscript{$\dagger$}Project lead.\\[2pt]
   {\color{gray}\texttt{yqchen24@gmail.com}}}

\begin{abstract}
Evolutionary coding agents such as AlphaEvolve are one of the notable successes in using Large Language Models (LLMs) to build AI Scientists. These agents tackle open-ended scientific problems by iteratively improving and evolving programs, leveraging the prior knowledge and reasoning capabilities of LLMs. Despite their success, mutations over programs are mostly low-level upon surface heuristics, without a persistent unit of control. In contrast, science compounds through ideas. In this work, we propose ExpEvolve, which injects idea-level extraction and control into evolution: it discovers and leverages the high-level factors from programs that explain scoring differences among different programs. We characterize theoretical conditions on when discovery with idea factors improves sample efficiency. We also build a practical framework to show that the extracted high-level idea factors are helpful for guiding exploitation from experience and exploration when discovery comes to a plateau.
Across $11$ open-ended mathematical tasks, \ours significantly improves the final performance and sample efficiency up to $5.0\times$. The improvements also transfer to $4$ other LLMs and to $6$ machine learning tasks from MLS-Bench. Analysis shows that \ours can discover idea-level factors to provide better exploration and exploitation for evolutionary discovery.
\end{abstract}

\begin{document}

\maketitle

\section{Introduction}
Large language models (LLMs) are now widely used to build agents that operate on increasingly long-horizon tasks~\citep{plaat2025agentic}. One of the most ambitious applications is AI scientist agents~\citep{ZHENG2025FromAT}, in which LLMs automate the scientific discovery process, including literature surveys~\citep{asai2026openscholar}, hypothesis generation~\citep{wang2024scimon,yang2024large}, data-driven analysis~\citep{majumder2025discoverybench}, experiment design~\citep{Li2025CanLL}, or even the whole discovery workflow~\citep{lu2024ai,yamada2025ai,gottweis2026coscientist,Mitchener2025KosmosAA}. LLM-driven discovery has shown strong results in a variety of domains, such as mathematics~\citep{openai2026unitdistance,openai2026tenadvances}, de novo protein design~\citep{anthropic2026protein}, algorithm discovery~\citep{novikov2025alphaevolve,lange2025shinkaevolve}, and the research that develops LLMs themselves, an early form of recursive self-improvement (RSI)~\citep{openai2026researchacceleration,favaro2026rsi,lyu2026mlsbench,karpathy2026autoresearch}.
At the heart of the discovery loop is the \textit{iteration over scientific hypotheses}, where scientific ideas compound throughout the evaluation of promising hypotheses~\citep{wallace2026conjecture}.
Therefore, it is essential to \textit{reuse the discovery experience} to generate high-quality hypotheses, e.g., avoiding getting stuck in repetitive mistakes or suboptimal solutions and encoding promising exploratory proposals.

In this work, we study experience reuse in open-ended discovery with evolutionary coding agents, where hypotheses are represented as \textit{solution programs} to the underlying scientific problem, and experiences are accumulated as \textit{evaluated programs and their scores} given by an explicit evaluator~\citep{novikov2025alphaevolve}.
Typically, existing evolutionary coding agents such as \shinka directly incorporate successful programs or rely on LLMs to summarize reusable insights and incorporate them as memory for future mutations with LLMs~\citep{lange2025shinkaevolve,Gideoni2026SimpleBA,karpathy2026autoresearch}.
However, program mutation contains an open-ended search space including low-level details such as hyperparameters and high-level insights such as optimization strategies. LLM-driven guidance is prone to focusing on low-level details~\citep{karpathy2026autoresearch,pelleriti2026evotrace}, which may not bring genuine new scientific discoveries.
In contrast, human scientists tend to recognize the high-level contrasts among programs, accumulate and exploit scientific ideas from experience~\citep{sci_revolution,sep-abduction}.
The gap raises a challenging research question:
\begin{myquotation}\centering
    \textit{How can we reuse experience for evolutionary discovery like human scientists?}
\end{myquotation}

To tackle the question, we formulate evolutionary discovery as a Partially Observable Markov Decision Process
(POMDP)~\citep{kaelbling1998planning} (Sec.~\ref{sec:formulation}).
In each turn, the agent needs to propose meaningful solutions to uncover the underlying causal mechanism for solving the scientific problem, i.e., maximizing the evaluation score~\citep{wallace1981causality,ClarkOutline}.
Under the formulation, we introduce the notion of \textit{ideas} as high-level mechanistic properties of the programs that determine the evaluation scores, e.g., the construction family or the optimization strategy (Def.~\ref{def:idea}).
Without uncovering the ideas, the agent has to treat programs as unrelated and cannot tell whether a revision changes an idea or only a low-level detail, so discovery degenerates into exhaustive search (Thm.~\ref{thm:structured_reuse}).
Meanwhile, the identification of ideas also requires \textit{sufficient exploration} to contrast programs with and without each idea (Thm.~\ref{thm:degenerated_exploration}). Hence, an effective discovery agent needs to extract ideas from experience, and keep exploring programs where the ideas vary.

\begin{figure}
    \centering
    \includegraphics[width=\linewidth]{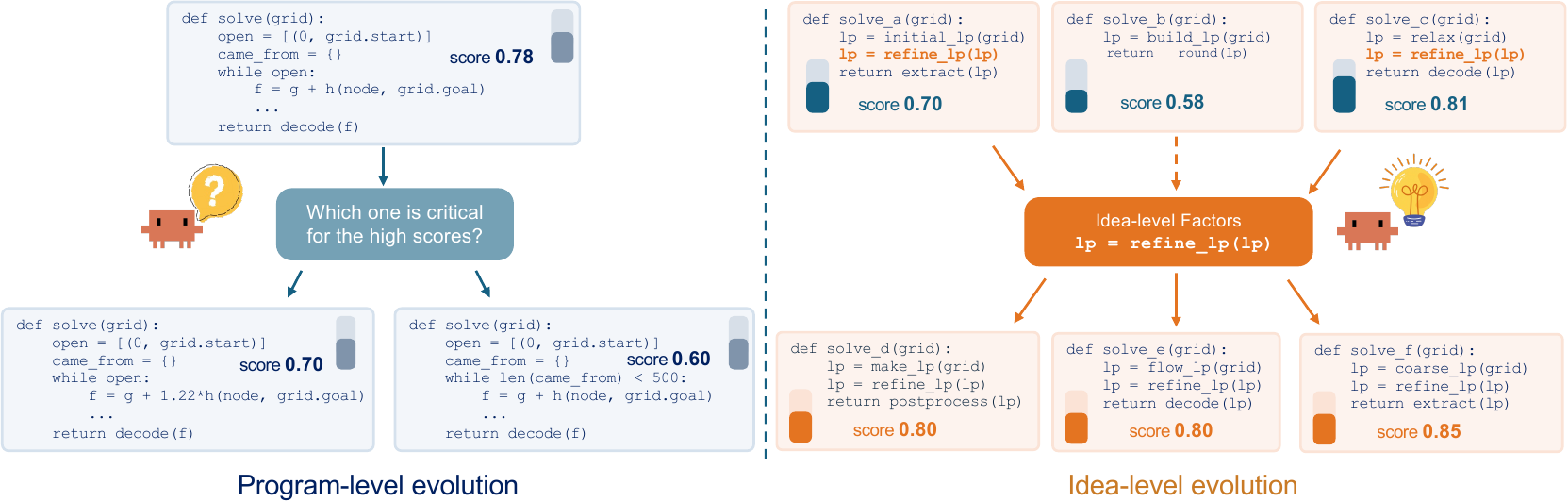}
    \caption{Comparison between experience reuse at program-level and idea-level. All programs solve the same grid task. \textbf{Left:} Experience reuse at program-level is prone to low-level edits. Program-level evolution derives new programs from a high-scoring parent, such as changing the heuristic weight or bounding the search loop. Both edited programs score lower. \textbf{Right:} Experience reuse at idea-level compares structurally different programs through a shared procedure-level factor, \texttt{lp = refine\_lp(lp)}. Programs that contain the factor score higher than the program that omits it. This contrast singles out the factor, which is then inserted into new programs of different structures.}
    \label{fig:illustration}
\end{figure}

Motivated by the analysis, we propose \ours, which recognizes, structures, and leverages the high-level idea factors from discovery experience (Fig.~\ref{fig:illustration}).
Specifically, \ours realizes the discovery of ideas via identifying high-level factors that help explain the program scoring differences. Built upon \coat~\citep{coat2025discoveringreasoningcausalityhidden}, the discovered idea factors are expected to predict which program performs better given any pair of programs in the experience.
Then, the admitted idea factors will be incorporated as an action of a multi-armed bandit that controls the exploration and exploitation of different ideas during discovery.
To enable sufficient exploration, \ours also allocates slots for random exploration~\citep{Gideoni2026SimpleBA}, and idea-guided exploration when the evolution comes to a plateau. The idea-guided exploration enforces LLMs to explore diverse ideas other than those widely used in top-scoring programs.
The idea-based control in \ours allows efficient exploitation of promising ideas while continuing to explore programs with varied ideas.

We evaluate \ours on $11$ open-ended mathematical discovery tasks with a budget of $200$ evaluated programs, using \scs as the backbone and the main baseline (Sec.~\ref{sec:exp}). Under two configurations that split the budget into $20$ generations of $10$ programs and $10$ generations of $20$ programs, \ours improves the final performance over \scs and \shinka in most settings with smaller budgets up to $5.0\times$.
Further analyses show that the discovered idea factors order the archived programs better than their code length on all $11$ tasks. We further integrate \ours into \openrsi~\citep{yang2026frontis} on six tasks from MLS-Bench~\citep{lyu2026mlsbench}, where it improves the mean final best-so-far score of \openrsi on all six tasks.

\section{Related Work}
\label{sec:related_work}
In this section, we discuss the closest related works and defer a more detailed one in Appendix~\ref{appdx:related_work}.

\textbf{AI Scientist agents.} With the significant advancement in LLM capacity and the development of agentic systems, building LLM-based AI scientists has gained increasing attention~\citep{Zhou2025AutonomousAF}.
One research line automates part or the whole of the pipelines in scientific activities through agentic workflows~\citep{lu2026aiscientist,yamada2025ai,gottweis2026coscientist,ghareeb2026robin,Mitchener2025KosmosAA,internscience2026internagent,lyu2026evoscientist}.
Evolutionary coding agents are one of the most representative AI scientist agents that integrate LLMs into evolutionary search over programs scored by an explicit evaluator~\citep{romeraparedes2024funsearch,novikov2025alphaevolve}.
AI scientist agents, including evolutionary coding agents, have made impact in a variety of domains, such as mathematics~\citep{openai2026tenadvances}, materials science~\citep{ghafarollahi2025sciagents},  biology~\citep{anthropic2026protein}, and the development of LLMs themselves, e.g., accelerating LLM training and research toward recursive self-improvement~\citep{karpathy2026autoresearch,lyu2026mlsbench,openai2026researchacceleration,favaro2026rsi,zhang2025dgm}.
As generation scales up, validating the generated hypotheses becomes the main cost~\citep{wallace2026conjecture}, and ideas that look promising often degrade once executed~\citep{si2026gap}.
The expanding paradigm of LLM-driven scientific discovery makes the reuse of experience \textit{increasingly more essential}, which is central to our work.

\textbf{Experience reuse in evolutionary coding agents.}
Existing evolutionary coding agents typically reuse experience via conditioning on evaluated programs~\citep{novikov2025alphaevolve,Gideoni2026SimpleBA} or summarizing textual insights from programs into memory for future mutations~\citep{lange2025shinkaevolve,liu2024eoh,ye2024reevo,volkov2026evomem}.
The useful experience can also be exploited for future search control, e.g., selection inspirations across diverse strategy clusters~\citep{lange2025shinkaevolve,luo2026seaevo}.
Different from existing works that concentrate on the unit of experience as texts or programs, \ours discovers and exploits high-level idea factors to guide program mutations.

\section{Theoretical Formulation of Evolutionary Discovery}
\label{sec:formulation}

In this section, we formalize evolutionary discovery and introduce the notion of ideas.
\definecolor{BlueFill}{HTML}{EAF1F9}\definecolor{BlueLine}{HTML}{2B6CB0}
\definecolor{WarmFill}{HTML}{FDF3E7}\definecolor{WarmLine}{HTML}{B4560F}
\definecolor{GreenFill}{HTML}{E9F6EE}\definecolor{GreenLine}{HTML}{1F6B45}
\definecolor{GrayFill}{HTML}{F1F4F8}\definecolor{GrayLine}{HTML}{4A5568}
\definecolor{Hairline}{HTML}{C6CEDA}\definecolor{PanelBg}{HTML}{FBFCFE}
\definecolor{TextColor}{HTML}{171E29}
\providecommand{\OvPanelFont}{\fontsize{10.1}{12}\selectfont\bfseries}
\providecommand{\OvGroupFont}{\fontsize{7.8}{9.2}\selectfont\bfseries}
\providecommand{\OvTitleFont}{\fontsize{7.8}{9.4}\selectfont\bfseries}
\providecommand{\OvBodyFont}{\fontsize{6.6}{7.9}\selectfont}
\providecommand{\OvEdgeFont}{\fontsize{6.4}{7.5}\selectfont\itshape}
\begin{figure}[t]
\centering
{\hyphenpenalty=10000\exhyphenpenalty=10000
\resizebox{\linewidth}{!}{%
\begin{tikzpicture}[x=1cm,y=1cm,
  every node/.style={text=TextColor, font=\sffamily},
  box/.style={rounded corners=3pt, line width=0.7pt, inner xsep=4pt, inner ysep=3pt, align=center},
  memBox/.style={box, draw=BlueLine, fill=BlueFill},
  actBox/.style={box, draw=WarmLine, fill=WarmFill},
  evalBox/.style={box, draw=GreenLine, fill=GreenFill},
  extBox/.style={box, draw=GrayLine, fill=GrayFill},
  head/.style={-{Latex[length=1.7mm,width=1.15mm,round]}},
  flow/.style={head, line width=0.7pt, draw=GrayLine, rounded corners=2pt},
  feedback/.style={head, line width=0.7pt, draw=BlueLine, rounded corners=2pt},
  constraint/.style={head, line width=0.7pt, draw=GrayLine, dash pattern=on 2.4pt off 1.8pt},
  slice/.style={draw=Hairline, dash pattern=on 2.4pt off 1.8pt, rounded corners=3pt, line width=0.6pt},
  circleNode/.style={circle, minimum size=7.4mm, inner sep=0pt, line width=0.7pt, align=center},
  memNode/.style={circleNode, draw=BlueLine, fill=BlueFill},
  progNode/.style={circleNode, draw=WarmLine, fill=WarmFill},
  outNode/.style={circleNode, draw=GreenLine, fill=GreenFill},
  thetaNode/.style={circleNode, draw=GrayLine, fill=GrayFill},
  elab/.style={font=\OvEdgeFont, text=GrayLine, fill=white, inner sep=1.2pt}
]
\node[anchor=west] at (0.05, 9.50) {\OvPanelFont A.\hspace{3pt} Evolutionary coding agent loop};
\node[anchor=west] at (10.60, 9.50) {\OvPanelFont B.\hspace{3pt} Temporal graphical model};
\fill[PanelBg, rounded corners=4pt] (2.95, 4.20) rectangle (9.95, 9.10);
\draw[draw=Hairline, line width=0.7pt, rounded corners=4pt] (2.95, 4.20) rectangle (9.95, 9.10);
\node[anchor=north, text=GrayLine] at (6.30, 9.05) {\OvGroupFont Evolutionary coding agent};
\node[extBox, minimum width=2.80cm, minimum height=0.95cm, text width=2.60cm] (task) at (1.45, 8.05) {
  {\OvTitleFont Task specification}\\[1.5pt]
  {\OvBodyFont objective, constraints}
};
\node[extBox, minimum width=2.80cm, minimum height=0.95cm, text width=2.60cm] (thetaA) at (1.45, 4.90) {
  {\OvTitleFont Hidden mechanism}\\[1.5pt]
  {\OvBodyFont $\theta_{\mathrm{sci}}$, static, unknown}
};
\node[memBox, minimum width=6.00cm, minimum height=0.95cm, text width=5.70cm] (memory) at (6.30, 8.05) {
  {\OvTitleFont Memory $m_t \rightarrow m_{t+1}$}\\[1.5pt]
  {\OvBodyFont scored programs, summarized insights}
};
\node[actBox, minimum width=6.00cm, minimum height=0.60cm, text width=5.70cm] (context) at (6.30, 6.95) {
  {\OvTitleFont Build context} {\OvBodyFont from the memory}
};
\node[actBox, minimum width=6.00cm, minimum height=0.60cm, text width=5.70cm] (propose) at (6.30, 6.00) {
  {\OvTitleFont Propose programs} {\OvBodyFont with an LLM $G(\cdot)$}
};
\node[evalBox, minimum width=2.90cm, minimum height=0.95cm, text width=2.65cm] (evaluate) at (4.75, 4.90) {
  {\OvTitleFont Evaluate}\\[1.5pt]
  {\OvBodyFont score, metadata}
};
\node[memBox, minimum width=2.90cm, minimum height=0.95cm, text width=2.65cm] (archive) at (7.85, 4.90) {
  {\OvTitleFont Archive}\\[1.5pt]
  {\OvBodyFont scored programs}
};
\draw[flow] (memory.south) -- (context.north);
\draw[flow] (context.south) -- (propose.north);
\draw[flow] (propose.south -| evaluate.north) -- (evaluate.north);
\draw[flow] (evaluate.east) -- (archive.west);
\draw[flow] (task.east) -- (memory.west);
\draw[constraint] (thetaA.east) -- (evaluate.west);
\draw[feedback] (archive.east) -- (9.65, 4.90) -- (9.65, 8.05) -- (memory.east);
\node[elab, text=BlueLine, rotate=-90, fill=PanelBg] at (9.65, 6.48) {memory updates};
\draw[draw=Hairline, line width=0.6pt] (10.40, 3.85) -- (10.40, 9.10);
\foreach \x/\step in {11.60/$t$, 13.60/$t+1$, 15.60/$t+2$}{
  \draw[slice] (\x-0.62, 5.52) rectangle (\x+0.62, 8.55);
  \node[anchor=south, text=GrayLine] at (\x, 8.58) {\OvTitleFont \step};
}
\node[memNode]  (m0) at (11.60, 8.05) {$m_t$};
\node[progNode] (p0) at (11.60, 7.00) {$p_t$};
\node[outNode]  (y0) at (11.60, 6.00) {$y_t$};
\node[memNode]  (m1) at (13.60, 8.05) {$m_{t+1}$};
\node[progNode] (p1) at (13.60, 7.00) {$p_{t+1}$};
\node[outNode]  (y1) at (13.60, 6.00) {$y_{t+1}$};
\node[memNode]  (m2) at (15.60, 8.05) {$m_{t+2}$};
\node[progNode] (p2) at (15.60, 7.00) {$p_{t+2}$};
\node[outNode]  (y2) at (15.60, 6.00) {$y_{t+2}$};
\node[thetaNode] (thetaB) at (13.60, 4.90) {$\theta_{\mathrm{sci}}$};
\foreach \u/\v in {m0/p0, p0/y0, m1/p1, p1/y1, m2/p2, p2/y2}{\draw[flow] (\u) -- (\v);}
\draw[feedback] (m0.east) -- (m1.west);
\draw[feedback] (m1.east) -- (m2.west);
\draw[feedback] (y0.east) .. controls (12.66,6.15) and (12.42,7.65) .. (m1.south west);
\draw[feedback] (y1.east) .. controls (14.66,6.15) and (14.42,7.65) .. (m2.south west);
\draw[flow] (thetaB.north west) .. controls (12.55,5.22) and (11.60,5.28) .. (y0.south);
\draw[flow] (thetaB.north) -- (y1.south);
\draw[flow] (thetaB.north east) .. controls (14.65,5.22) and (15.60,5.28) .. (y2.south);
\node[text=Hairline] at (10.72, 8.05) {$\cdots$};
\node[text=Hairline] at (16.48, 8.05) {$\cdots$};
\node[memNode, minimum size=4.6mm, font=\OvBodyFont] at (10.85, 4.62) {$m$};
\node[anchor=west] at (11.10, 4.62) {\OvBodyFont Memory};
\node[progNode, minimum size=4.6mm, font=\OvBodyFont] at (10.85, 4.12) {$p$};
\node[anchor=west] at (11.10, 4.12) {\OvBodyFont Program};
\node[outNode, minimum size=4.6mm, font=\OvBodyFont] at (14.55, 4.62) {$y$};
\node[anchor=west] at (14.80, 4.62) {\OvBodyFont Score};
\node[thetaNode, minimum size=4.6mm, font=\OvBodyFont] at (14.55, 4.12) {$\theta$};
\node[anchor=west] at (14.80, 4.12) {\OvBodyFont Hidden mechanism};
\end{tikzpicture}%
}}
\caption{Evolutionary discovery as a POMDP. \textbf{(A)} In each iteration, an evolutionary coding agent builds the proposal context from its memory, e.g., parent and inspiration programs or summarized insights, and generates candidate programs with an LLM $G$. The evaluator scores every program, whose outcome depends on the hidden mechanism $\theta_{\mathrm{sci}}$, and the scored programs enter the archive and update the memory. \textbf{(B)} The corresponding temporal graphical model. The memory $m_t$ guides the next program $p_t$, and the score $y_t$ depends on both $p_t$ and the static but unknown mechanism $\theta_{\mathrm{sci}}$, which in turn updates $m_{t+1}$.}
\label{fig:overview}
\end{figure}

\subsection{Evolutionary Discovery as a POMDP}
\label{sec:pomdp}

Scientific discovery aims to uncover the causal mechanisms underlying a problem through interactions with the world~\citep{sci_revolution}. In evolutionary discovery, the agent interacts with the problem through an evaluator. We denote by $\theta_{\mathrm{sci}}\in\Theta$ the hidden mechanism of the problem, which is fixed during the search and unknown to the agent. At turn $t$, the agent proposes a program $p_t\in\mathcal{P}$ and observes a score $y_t$ with $\E[y_t\mid p_t]=F(p_t;\theta_{\mathrm{sci}})$, where $F:\mathcal{P}\times\Theta\to\mathbb{R}$ is the objective, e.g., the value of a mathematical construction, the efficiency of a kernel, or the performance of a trained model~\citep{novikov2025alphaevolve}. The goal is to uncover $\theta_{\mathrm{sci}}$ by optimizing for the evaluation score.

\textbf{POMDP.} As shown in Fig.~\ref{fig:overview}B, the interaction forms a Partially Observable Markov Decision Process (POMDP)~\citep{kaelbling1998planning} with a static hidden state $s_t=\theta_{\mathrm{sci}}$, actions $a_t=p_t$, observations $o_t=y_t$, an identity transition kernel, and the observation kernel $O(y_t\mid \theta_{\mathrm{sci}},p_t)$. Given a budget of $T$ evaluations, the agent chooses $p_0,\dots,p_{T-1}$ to find a program $\hat p$ with high $F(\hat p;\theta_{\mathrm{sci}})$.

\textbf{Evaluation as intervention.} $\theta_{\mathrm{sci}}$ can be parameterized as a structural causal model over variables of interest~\citep{spirtes2000causation}, in which a program specifies a configuration $x_p$ of design variables $X$. Each evaluation can be considered as the expected outcome of the intervention:
\begin{equation}\label{eq:evaluation_scm}
    F(p;\theta_{\mathrm{sci}}) = \E\big[\,Y \,\big|\, \mathrm{do}(X=x_p),\ \theta_{\mathrm{sci}}\big].
\end{equation}
In this work, we leverage the causal view to provide a more in-depth understanding, while neither the analysis nor the method requires recovering the causal graph. A full discussion on the relations with causality can be found in Appendix~\ref{appdx:formulation}.

\textbf{Experience-driven discovery.}
We define $b_t$ as the agent's Bayesian belief after accumulating \textit{experience} $h_t=\{(p_0,y_0),\dots,(p_{t-1},y_{t-1})\}$,
which is essentially a probability distribution on $\Theta$, $b_t(B) = \Pr(\Theta_{\mathrm{sci}}\in B \mid h_t)$ for $B\subseteq \Theta$.
In the ideal Bayesian formalism, the belief $b_t$ is a sufficient
statistic for decision-making~\citep{kaelbling1998planning}. In practice, the AI Scientist maintains an internal belief within LLMs, or an external one realized as memory $m_t = \Phi(h_t)$ for some (possibly learnable) summarization function $\Phi$~\citep{lange2025shinkaevolve}, to represent the \emph{approximate representation} of its knowledge about $\theta_{\mathrm{sci}}$ and the
landscape of $F(\cdot; \theta_{\mathrm{sci}})$.
Each evaluation step $(p_t, y_t)$ thus updates $m_t$, which in turn updates the
agent's belief about $\theta_{\mathrm{sci}}$ and determines the next action $p_{t+1}$.

\subsection{Reusing Experience with Ideas}
\label{sec:ideas}
If the evaluator fully reveals $\theta_{\mathrm{sci}}$ and the evaluation budget is unlimited, then enumerating every possible program can directly lead to discovery. However, in reality, the evaluation score only reveals partial information about $\theta_{\mathrm{sci}}$ and evaluations can be expensive, e.g., training a model or running a physical experiment, so the agent can afford only a limited number of them~\citep{wallace2026conjecture}.
Therefore, it is \textit{essential} to reuse experience from past evaluations.

Without properly reusing experience, a single evaluation cannot reveal further insights about other unseen programs, and the agent then has to evaluate every candidate program to find the best one.
Reusing experience requires some structure shared across programs, so that the evaluation of one program also informs the others. We formalize the contrast as follows.

\begin{theorem}[Structured experience reuse, informal]
\label{thm:structured_reuse}
Let $\mathcal{P}=\{p_1,\dots,p_K\}$, and let each evaluation return $y_t=F(p_t;\theta_{\mathrm{sci}})+\varepsilon_t$ with $\varepsilon_t\sim\mathcal{N}(0,\sigma^2)$ i.i.d.
\begin{enumerate}[label=(\alph*), leftmargin=*]
\item \emph{With reuse.} If each program has a known representation $\phi(p)\in\mathbb{R}^d$ with $\|\phi(p)\|_2\le 1$ and $F(p;\theta_{\mathrm{sci}})=\langle \phi(p),w^\star\rangle$ for an unknown $w^\star\in\mathbb{R}^d$, then under a coverage condition on the representation, there exists a policy that, with probability at least $1-\delta$, outputs a program within $\epsilon$ of the optimal value after $O\big(d\,\frac{\sigma^2}{\epsilon^2}\log\frac{K}{\delta}\big)$ evaluations.
\item \emph{Without reuse.} If programs are unrelated, i.e., $F(p_i;\theta_{\mathrm{sci}})=\mu_i$ for an arbitrary $\mu\in\mathbb{R}^K$, then any policy with the same guarantee for every $\mu$ needs $\Omega\big(K\,\frac{\sigma^2}{\epsilon^2}\log\frac{1}{\delta}\big)$ evaluations.
\end{enumerate}
\end{theorem}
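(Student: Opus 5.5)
For part (a), the plan is a non-adaptive design-based policy from linear bandit pure exploration. I would interpret the coverage condition as the existence of a design $\lambda$ on $\mathcal{P}$ whose information matrix $A(\lambda)=\sum_p \lambda(p)\phi(p)\phi(p)^\top$ satisfies $\max_p \|\phi(p)\|^2_{A(\lambda)^{-1}}\le C d$ for a constant $C$. The Kiefer--Wolfowitz theorem supplies such a G-optimal design with $C=1$ whenever the $\phi(p)$ span $\mathbb{R}^d$; if they do not, I would restrict to their span. The policy then proceeds in three steps.
\begin{enumerate}
\item Round $\lambda$ to an integer allocation of $n$ evaluations, losing only a constant factor, and with support of size $O(d^2)$ or $O(d\log\log d)$ via Carathéodory or Frank--Wolfe rounding.
\item Compute the least-squares estimate $\hat w=V^{-1}\sum_t \phi(p_t)y_t$, where $V=\sum_t\phi(p_t)\phi(p_t)^\top\succeq nA(\lambda)/c$.
\item Output $\hat p=\arg\max_p\langle\phi(p),\hat w\rangle$.
\end{enumerate}
Because the design is fixed in advance and the noise is Gaussian, $\langle\phi(p),\hat w-w^\star\rangle$ is exactly $\mathcal{N}\big(0,\sigma^2\|\phi(p)\|^2_{V^{-1}}\big)$, with variance at most $c\,C\sigma^2 d/n$. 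A union bound over the $K$ programs gives, with probability at least $1-\delta$, the uniform bound $|\langle\phi(p),\hat w-w^\star\rangle|\le \sqrt{2cC\sigma^2 d\log(2K/\delta)/n}$. On that event, $F(\hat p)\ge F(p^\star)-2\cdot(\text{deviation})$. Choosing $n=O(d\sigma^2\epsilon^{-2}\log(K/\delta))$ makes this gap at most $\epsilon$, which is the claimed rate.

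For part (b), I would use a change-of-measure lower bound in the style of Mannor--Tsitsiklis and Kaufmann--Cappé--Garivier. Fix the base instance $\mu^{(0)}=(2\epsilon,0,\dots,0)$. For each $j\ge 2$, define $\mu^{(j)}$ from $\mu^{(0)}$ by raising coordinate $j$ to $4\epsilon$, so that on instance $j$ the only $\epsilon$-optimal arm is $j$, while on instance $0$ it is arm $1$. Let $N_j$ denote the number of pulls of arm $j$, and take $\delta<1/4$. The transportation lemma gives
\[
\E_0[N_j]\,\frac{(4\epsilon)^2}{2\sigma^2}\ \ge\ \mathrm{kl}\big(\Pr_0(E_j),\Pr_j(E_j)\big),
\]
where $E_j=\{\hat p=j\}$. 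Correctness forces $\Pr_0(E_j)\le\delta$ and $\Pr_j(E_j)\ge 1-\delta$, so the right-hand side is at least $\mathrm{kl}(\delta,1-\delta)\ge\log\frac{1}{2.4\delta}$. Hence $\E_0[N_j]\gtrsim \frac{\sigma^2}{\epsilon^2}\log\frac1\delta$ for every $j\ge2$. Summing over the $K-1$ such arms gives $\E_0[T]=\Omega\big(K\frac{\sigma^2}{\epsilon^2}\log\frac1\delta\big)$. If the paper intends a deterministic budget $T$ rather than an expected stopping time, the same argument applies with $T$ fixed, since $\sum_j \E_0[N_j]=T$.

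\textbf{Main obstacle.} The hard step is pinning down the informal coverage condition so that part (a) yields exactly $d$ and not $d^2$ or a dependence on the minimum eigenvalue. I would state it as a G-optimal design bound, noting that the Kiefer--Wolfowitz theorem makes it automatic. A second concern is tracking constants in the rounding step. In part (b), the care needed is that each alternative instance changes only one coordinate and that the $\epsilon$-optimal sets of instance $0$ and instance $j$ are disjoint. That is why arm $j$ is raised by $4\epsilon$ rather than $2\epsilon$.
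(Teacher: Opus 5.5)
Your proposal is correct. Part (b) is the paper's argument up to cosmetic changes. The paper's instances $\mu^{(0)}=(0,-2\epsilon,\dots,-2\epsilon)$ and $\mu^{(i)}$ (coordinate $i$ raised to $+2\epsilon$) are yours shifted by $2\epsilon$, so each pull of arm $i$ again costs $8\epsilon^2/\sigma^2$. The paper lower-bounds $\mathrm{KL}(P_0\|P_i)$ through the event $\{\hat p=p_1\}$ rather than $\{\hat p=p_j\}$, and then sums $\E_{P_0}[N_i]$ over $i\ge 2$ as you do. Your constant $\log\frac{1}{2.4\delta}$ is the safe one. The paper asserts $\mathrm{kl}(1-\delta,\delta)\ge\log\frac{1}{2\delta}$ for all $\delta<1/4$, which fails near $1/4$: at $\delta=0.2$ one gets $0.6\log 4\approx 0.83<\log 2.5\approx 0.92$. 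This only affects constants.

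Part (a) is where you take a genuinely different route. The paper reads the coverage condition literally, as the existence of $d$ programs with $\phi(p^{(i)})=e_i$. It evaluates each of them $n$ times and estimates $w^\star$ by coordinatewise sample means. This gives $\widehat f(p)-f(p)\sim\mathcal{N}(0,\sigma^2\|\phi(p)\|_2^2/n)$, and the assumption $\|\phi(p)\|_2\le 1$ closes the variance bound with $T=nd$.

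That is exactly your argument with $\lambda$ uniform on the basis programs. In that case $A(\lambda)=I/d$ and $\|\phi(p)\|^2_{A(\lambda)^{-1}}=d\|\phi(p)\|_2^2\le d$. Least squares reduces to sample means, and no rounding is needed.

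Your G-optimal design buys a strictly stronger theorem. By Kiefer--Wolfowitz, the coverage condition becomes vacuous for any finite feature set once you restrict to its span. The norm bound is never used, since Kiefer--Wolfowitz needs no normalization.

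The price is the design machinery plus rounding, and the rounding is slightly less free than you suggest. Allocating $\lceil n\lambda(p)\rceil$ evaluations gives $V\succeq nA(\lambda)$ at a cost of up to $n+|\mathrm{supp}\,\lambda|$ evaluations. The $d(d+1)/2$ Carath\'eodory support therefore adds an $O(d^2)$ term. That term is not absorbed into $O(d\sigma^2\epsilon^{-2}\log(K/\delta))$ when $\sigma^2\epsilon^{-2}\log(K/\delta)\ll d$. Using the $O(d\log\log d)$-support design with $C=2$ cuts the overhead to $O(d\log\log d)$. That is close to the additive $d$ the paper's own bound also hides, since it needs integer $n\ge 1$ and hence $T\ge d$.
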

The formal statement and proof are given in Appendix~\ref{appdx:thm1-static}.
Theorem~\ref{thm:structured_reuse} implies that reusing experience at the high-level design choices can significantly improve sample efficiency from $\Omega(K)$ to $O(d\log K)$, where $K\gg d$. In evolutionary discovery, $K$ counts the candidate program space, which grow combinatorially with the high-level design choices and further with the ways of implementing each choice, while $d$ is determined by the design choices that affect the score.
Most existing agents retain experience at the level of programs~\citep{novikov2025alphaevolve,lange2025shinkaevolve}, which ignores the high-level design choices.
In contrast, human scientists recognize and record what they learn from experiments in terms of reusable concepts, e.g., a construction or an optimization technique, which carry over to later experiments. We are hence inspired to introduce the notion of \textit{ideas}.

\begin{definition}[Idea]
\label{def:idea}
An idea is an interpretable mapping $z:\mathcal{P}\to\mathcal{V}$ from programs to a binary or categorical value. It describes a high-level mechanistic property that can be shared by syntactically different programs, such as the construction family, initialization, or optimization procedure. A set of ideas $\{z_1,\dots,z_r\}$ defines the idea representation $Z(p)=(z_1(p),\dots,z_r(p))$.
\end{definition}

A program realizes a set of ideas together with low-level details.
We call an idea representation $Z$ \emph{faithful} if the score depends on a program only through its ideas, i.e., $F(p;\theta_{\mathrm{sci}})=g(Z(p))$ for some function $g$ determined by $\theta_{\mathrm{sci}}$. Under a faithful idea representation, the hidden mechanism acts on a program only through its ideas, and Theorem~\ref{thm:structured_reuse} applies with $\phi(p)$ being the one-hot encoding of $Z(p)$, whose dimension is determined by the number of ideas.

\textbf{Identifiability of ideas.} However, the agent does not know in advance which representation is faithful, and the ideas have to be identified from the evaluated programs, while which programs are evaluated is decided by the search itself. When the search stops varying the candidate ideas, the exploration of ideas degenerates, as formalized below.

\begin{theorem}[Degenerated exploration of ideas]
\label{thm:degenerated_exploration}
Consider two candidate ideas $z_0$ and $z_1$, and two mechanisms $\theta_0$ and $\theta_1$ under which the score depends on $z_0$ and $z_1$, respectively, i.e., $F(p;\theta_i)=g(Z(p),z_i(p))$ for a common representation $Z$ and function $g$, with the same observation noise. Suppose the optimal programs under $\theta_0$ and $\theta_1$ are different and unique with a margin $\Delta>0$. Then for any policy that only evaluates programs on which $z_0$ and $z_1$ agree, there exists $i\in\{0,1\}$ such that for every budget $T$, the output $\hat p$ satisfies $\E\big[\max_{p\in\mathcal{P}}F(p;\theta_i)-F(\hat p;\theta_i)\big]\ge\Delta/2$.
\end{theorem}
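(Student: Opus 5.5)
The plan is an indistinguishability (two-point) argument. Let $\mathcal{A}=\{p\in\mathcal{P}: z_0(p)=z_1(p)\}$ denote the set of programs on which the two candidate ideas agree. For every $p\in\mathcal{A}$ we have
\[
F(p;\theta_0)=g\big(Z(p),z_0(p)\big)=g\big(Z(p),z_1(p)\big)=F(p;\theta_1).
\]
Since the observation noise is the same under both mechanisms, the observation kernel $O(y\mid\theta_0,p)$ coincides with $O(y\mid\theta_1,p)$ for every $p\in\mathcal{A}$. I would then show by induction on $t$ that, for any (possibly randomized, adaptive) policy whose queries stay in $\mathcal{A}$, the joint law of the history $h_t=\{(p_0,y_0),\dots,(p_{t-1},y_{t-1})\}$ is identical under $\theta_0$ and $\theta_1$.

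The induction step uses the POMDP structure of Sec.~\ref{sec:pomdp}. The next action $p_t$ is drawn from a kernel depending only on $h_t$ (through the memory $m_t=\Phi(h_t)$ and the policy's internal randomness), so its conditional law is the same under both mechanisms. By assumption $p_t\in\mathcal{A}$, so $y_t$ given $(h_t,p_t)$ also has the same law. Applying this once more to the output rule, the final output $\hat p$, a function of $h_T$ and independent randomness, has a common distribution $Q$ under $\theta_0$ and $\theta_1$ for every budget $T$.

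Next I convert the shared law into regret. Let $p^\star_i$ be the unique optimum under $\theta_i$, with $p^\star_0\neq p^\star_1$. I read the margin assumption as saying that every $p\neq p^\star_i$ has $F(p^\star_i;\theta_i)-F(p;\theta_i)\ge\Delta$. Then the regret under $\theta_i$ is at least $\Delta\cdot\mathbf{1}[\hat p\neq p^\star_i]$, so
\[
\E_{\theta_0}[\mathrm{regret}]+\E_{\theta_1}[\mathrm{regret}]\ \ge\ \Delta\big(Q(\hat p\neq p^\star_0)+Q(\hat p\neq p^\star_1)\big)\ \ge\ \Delta,
\]
because the events $\{\hat p=p^\star_0\}$ and $\{\hat p=p^\star_1\}$ are disjoint, so at least one of the two complements holds. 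Hence $\max_i\E_{\theta_i}[\mathrm{regret}]\ge\Delta/2$.

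The delicate point is that the index $i$ must not depend on $T$. The bound above holds for each $T$ separately, but the worse mechanism could alternate as $T$ varies. I would handle this by observing that the policy has nested output distributions $Q_T$, and defining $i$ through
\[
i=\arg\min_{j}\,\limsup_{T\to\infty}Q_T(\hat p=p^\star_j).
\]
Failing that, I would read the statement in the standard minimax sense, for each $T$, which is how the argument is naturally proved. I would also make explicit that $p^\star_i$ may lie outside $\mathcal{A}$. The policy is allowed to output such a program, but it gains no information distinguishing the mechanisms. Identical laws of $h_T$, and hence of $\hat p$, are the main step. Everything else is bookkeeping: measurability of the policy kernels and handling continuous noise densities in the induction.
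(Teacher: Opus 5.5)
Your main argument is correct and is the same as the paper's proof. The two mechanisms agree on the agreement set $\mathcal{A}$, so by induction on $t$ the transcript, and hence $\hat p$, has a common law $Q$. Then $Q(\hat p=p^\star_0)+Q(\hat p=p^\star_1)\le 1$ forces one of the two expected regrets to be at least $\Delta/2$.

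\textbf{The quantifier in $T$.} Your concern about the quantifier order is justified, but the repair via $i=\arg\min_j\limsup_{T}Q_T(\hat p=p^\star_j)$ does not work, and no repair can. The paper's proof also fixes $T$ before choosing $i$: its $Q$ is the law of $\hat p$ after $T$ rounds. So it too only yields $\max_i\E_{\theta_i}[\mathrm{regret}_T]\ge\Delta/2$ for each $T$, with $i$ depending on $T$.

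To see that a $T$-independent $i$ need not exist, take a policy that evaluates any programs in $\mathcal{A}$ and outputs $p^\star_0$ when $T$ is even and $p^\star_1$ when $T$ is odd. This policy is admissible, because only the evaluations are restricted to $\mathcal{A}$ and the output rule may depend on $T$, e.g.\ through the length of $h_T$. Both limsups in your definition equal $1$. Under $\theta_0$ the regret is $0$ for every even $T$, and under $\theta_1$ it is $0$ for every odd $T$. So the literal ``there exists $i$ such that for every $T$'' fails for this policy.

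You should therefore commit to your fallback, the per-$T$ minimax bound. If you want a single index, pigeonhole over $T$ gives an $i$ whose expected regret is at least $\Delta/2$ for infinitely many $T$.

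A bound uniform in $T$ does hold if outputs are also restricted to $\mathcal{A}$. At least one $p^\star_j$ lies outside $\mathcal{A}$. If both were inside, then $F(p^\star_1;\theta_0)=F(p^\star_1;\theta_1)\ge F(p^\star_0;\theta_1)+\Delta=F(p^\star_0;\theta_0)+\Delta$, contradicting the optimality of $p^\star_0$. In that case the regret under $\theta_j$ is at least $\Delta$ for every $T$.
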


The formal statement and proof are given in Appendix~\ref{appdx:prop2}. Theorem~\ref{thm:degenerated_exploration} can be considered as a consequence of the partial observability of $\theta_\mathrm{sci}$, since the scores reveal $\theta_\mathrm{sci}$ only through the evaluated programs. If two candidate ideas agree on all explored programs, the two mechanisms produce the same observations, and one cannot distinguish which idea determines the score.
This situation is common in evolutionary search: as the search concentrates around its current best programs, the ideas shared by these programs stop varying. Identifying ideas thus requires sufficient exploration to contrast programs with and without each idea, to isolate contributions of ideas in the program space.

\section{\ours: Discovering and Exploiting Ideas from Experience}
\label{sec:method}

Theoretical analysis in Sec.~\ref{sec:formulation} suggests that a discovery agent should extract ideas from its experience, and keep exploring programs where the ideas vary. We realize the theoretical principles in \ours.
The full procedure and implementation details are given in Alg.~\ref{alg:expevolve} and Appendix~\ref{appdx:method}.

\subsection{Discovering Ideas from Experience}
\label{sec:method_extract}

We leverage the discovery loop of the \coat framework~\citep{coat2025discoveringreasoningcausalityhidden} to find idea factors according to their theoretical properties.

\begin{figure}[ht]
    \centering
    \includegraphics[width=0.95\linewidth]{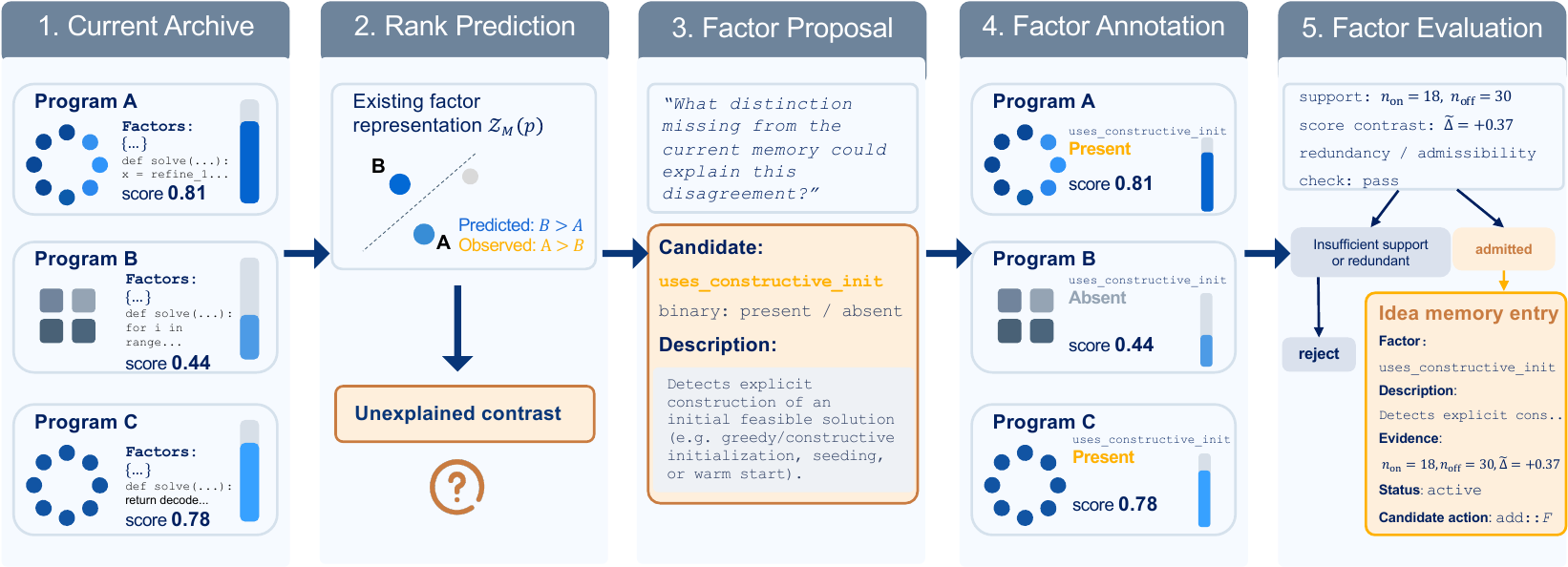}
    \caption{Illustration of idea factor discovery.}
    \label{fig:idea_discovery}
\end{figure}

\textbf{Idea memory.} During the discovery, \ours maintains an idea memory $\mathcal{M}_t$ which is a set of idea factors. Each idea factor is stored with a name, a description of its code pattern, a type, i.e., binary or categorical, and a value set $\mathcal{V}$.
For each program in the experience archive, the values of idea factors are annotated by an annotator LLM, which instantiates the idea representation $Z(p)$.

\textbf{Explainability of idea factors.} A faithful idea representation is expected to be able to predict the score of a program, i.e., $F(p;\theta_{\mathrm{sci}})=g(Z(p))$ (Sec.~\ref{sec:ideas}). However, predicting the exact scores from ideas is difficult, since the scores of different tasks lie on different scales, the unknown function $g$ can be highly nonlinear, and the archive only contains a limited number of noisy evaluations.
Therefore, we define the explainability of an idea representation by \textit{its pairwise predictability}, i.e., whether $Z(p)$ and $Z(p')$ predict which of two programs $p$ and $p'$ is better.
The ranking is already informative for optimizing the objective,  invariant to the scale of the scores, and more robust to noise than the scores.

\textbf{Discovering idea factors from unexplained pairs.} The idea discovery process in \ours is an iterative loop: each time the discovery is triggered, \ours fits a pairwise ranker on the annotated idea factors, and selects the pairs of archived programs with a large score gap that the current idea factors fail to explain. The unexplained pairs suggest the \textit{existence of missing idea factors} that separate the programs in each pair.
For the selected pairs, a separate proposer LLM reads the programs to propose candidate idea factors that distinguish them. Each candidate is then annotated on the whole archive, and is admitted into the idea memory if each of its values is supported by enough programs and its values separate the scores.

\begin{figure}[ht]
    \centering
    \includegraphics[width=0.95\linewidth]{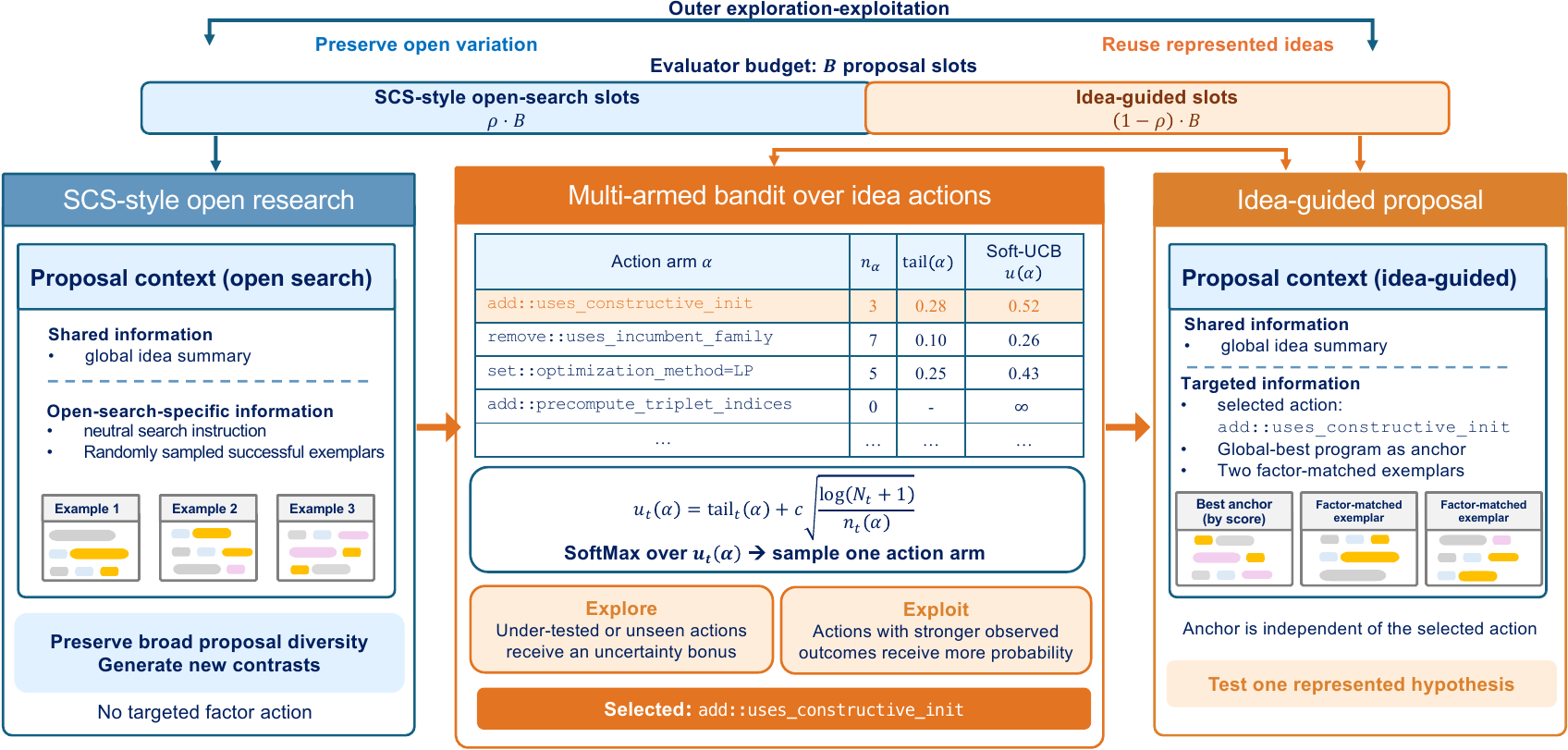}
    \caption{Illustration of two-level exploration-exploitation with discovered idea factors.}
    \label{fig:idea_MAB}
\end{figure}

\subsection{Exploiting Idea Factors for Discovery}
\label{sec:method_guide}
We build a two-level exploration and exploitation framework with a multi-armed bandit (MAB) to fully exploit discovered idea factors while exploring varying design spaces.

\textbf{Exploration and MAB exploitation.} In each generation, \ours keeps at least a fraction $\rho$ of the slots for random exploration that follows \scs~\citep{Gideoni2026SimpleBA} with randomly sampled example programs, which explores programs beyond the current ideas and provides new material for discovery.
The remaining slots are allocated to idea-guided actions by MAB.

\begin{figure}[ht]
    \centering
    \includegraphics[width=0.9\linewidth]{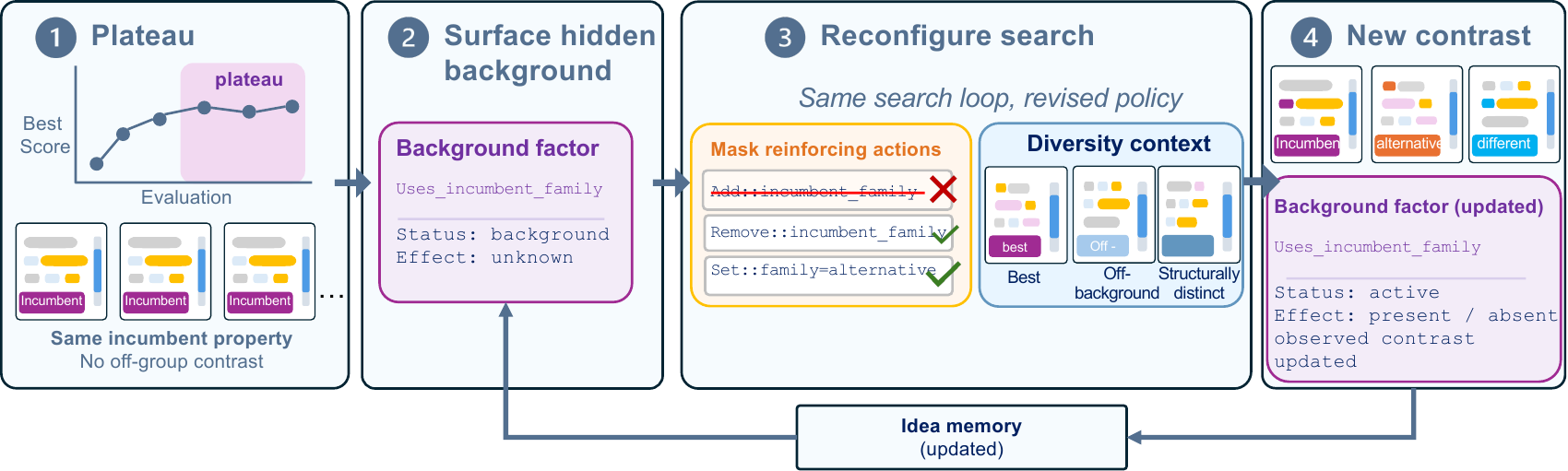}
    \caption{Illustration of idea-guided exploration at plateaus.}
    \label{fig:idea_plateau}
\end{figure}

\textbf{Idea factors as MAB actions.} To exploit an idea factor $z$, \ours compiles it into edit actions, e.g., $\mathsf{ADD}(z)$ and $\mathsf{REMOVE}(z)$ for a binary idea factor. An action adds to the context an instruction that describes $z$, together with demonstration programs in the target state, to ask the mutator LLM to realize the action in the new program. The MAB selects the action for each idea-guided slot. The reward of an action is the score of the program generated under the action relative to the random exploration in the same generation, i.e.,$
    r=\frac{y-\bar{y}_{\mathrm{rand}}}{\sigma_{\mathrm{rand}}},
$
where $y$ is the score of the generated program, and $\bar{y}_{\mathrm{rand}}$ and $\sigma_{\mathrm{rand}}$ are the mean and the standard deviation of the scores of random exploration in the same generation. The normalization removes the drift of scores across generations and the differences of score scales across tasks. The MAB selects the actions with an upper confidence bound (UCB) rule on the maximum reward of each action (Appendix~\ref{appdx:method_actions}).

\textbf{Exploration at plateaus.}\label{sec:method_plateau} Oftentimes, evolutionary discovery can concentrate around its best programs and a single strategy. Consequently, the ideas widely used in top-scoring programs stop varying, and their roles cannot be identified from the archive (Thm.~\ref{thm:degenerated_exploration}). When the best score does not improve for $\kappa$ generations, \ours switches to idea-guided exploration, which enforces the mutator LLM to explore diverse ideas other than those widely used in top-scoring programs. Specifically, a separate LLM probes the properties shared by the top-scoring programs as background idea factors, and an LLM judge determines which idea factors in the memory depend on the background. \ours then masks the actions of the dependent idea factors, instructs the mutator LLM to avoid them, and assigns more slots to random exploration with diversified demonstrations.

\section{Experiments}
\label{sec:exp}

We evaluate \ours on $11$ open-ended mathematical discovery tasks and further integrate it into an evolutionary agent for Machine Learning Engineering (MLE) tasks.

\subsection{Experimental Setup}
\label{sec:exp_setting}

\textbf{Tasks.} We evaluate on $11$ open-ended mathematical discovery tasks. Nine tasks are the problems studied by \scs~\citep{Gideoni2026SimpleBA}, which are taken from \alevolve~\citep{novikov2025alphaevolve}, including the first and second autocorrelation inequalities, the uncertainty inequality, the Erd\H{o}s minimum overlap problem, sums and differences of sets, the max--min distance ratio, the Heilbronn problem for triangles, the kissing number in $11$ dimensions, and circle packing. The other two tasks are the Hadamard matrix problem~\citep{Wang2025ThetaEvolveTL} and a variant of circle packing that recovers the radii by linear programming. We report the best score at the end of the budget, as the mean and standard deviation over seeds, and the best seed. Full details of the tasks are given in Appendix~\ref{appdx:tasks}.

\textbf{Budget.} Each run evaluates $200$ programs. We consider two configurations with the same budget, i.e., $10$ generations of $20$ programs as in \scs~\citep{Gideoni2026SimpleBA} and $20$ generations of $10$ programs, which presents more challenges to accumulating reusable knowledge from experience.

\textbf{Implementation.} We compare \ours with \scs~\citep{Gideoni2026SimpleBA}, which is the backbone of \ours and shows competitive results against more elaborate evolutionary coding agents. We further compare with \shinka~\citep{lange2025shinkaevolve}, which maintains an island-based archive and summarizes the insights of recent programs with a meta-summary.
Details are given in Appendix~\ref{appdx:baselines}.
We use DeepSeek-V4-Flash by default, and each setting is run with $3$ seeds. \ours uses a random-exploration fraction $\rho=0.60$ for $20$-gen. and $\rho=0.40$ for $10$-gen. to keep a similar number of random-exploration slots, i.e., $6$ and $8$. We donot tune other hyperparameters.

\subsection{Main Results}
\label{sec:exp_main}

\begin{table*}[t]
\vspace{-0.1in}
\centering
\caption{Final performance at the budget of $200$ evaluated programs. Mean reports the seed mean $\pm$ seed standard deviation over $3$ seeds, and Best reports the best seed. }
\vspace{-0.05in}
\label{tab:main_results}
\setlength{\tabcolsep}{3.2pt}
\renewcommand{\arraystretch}{1.12}
\resizebox{\linewidth}{!}{%
\begin{tabular}{@{}l*{10}{c}@{}}
\toprule
& \multicolumn{4}{c}{\textbf{20 generations\ $\times$ 10 programs}}
& \multicolumn{4}{c}{\textbf{10 generations\ $\times$ 20 programs}}
& \multicolumn{2}{c}{\multirow{2}{*}{\textbf{\shinka}}} \\
\cmidrule(lr){2-5}\cmidrule(lr){6-9}
\textbf{Task}
& \multicolumn{2}{c}{\textbf{\ours}} & \multicolumn{2}{c}{\textbf{\scs}}
& \multicolumn{2}{c}{\textbf{\ours}} & \multicolumn{2}{c}{\textbf{\scs}}
& & \\
\cmidrule(lr){2-3}\cmidrule(lr){4-5}\cmidrule(lr){6-7}\cmidrule(lr){8-9}\cmidrule(l){10-11}
& Mean & Best & Mean & Best & Mean & Best & Mean & Best & Mean & Best \\
\midrule
Hadamard $\uparrow$
& \textbf{0.9211 $\pm$ 0.000} & \textbf{0.9211}
& 0.6641 $\pm$ 0.169 & 0.8596
& \textbf{0.9259 $\pm$ 0.008} & \textbf{0.9357}
& 0.8349 $\pm$ 0.101 & 0.9211
& 0.6467 $\pm$ 0.139 & 0.8066 \\
Pack $\uparrow$
& \textbf{2.63542 $\pm$ 0.001} & \textbf{2.63598}
& 2.61697 $\pm$ 0.028 & 2.63572
& \textbf{2.63499 $\pm$ 0.002} & \textbf{2.63598}
& 2.62305 $\pm$ 0.011 & 2.63104
& 2.58201 $\pm$ 0.037 & 2.61408 \\
Linprog $\uparrow$
& \textbf{2.63542 $\pm$ 0.001} & \textbf{2.63598}
& \textbf{2.63542 $\pm$ 0.001} & \textbf{2.63598}
& \textbf{2.63542 $\pm$ 0.001} & \textbf{2.63598}
& 2.55161 $\pm$ 0.006 & 2.55608
& 2.51434 $\pm$ 0.026 & 2.54142 \\
SumDiff $\uparrow$
& \textbf{1.11609 $\pm$ 0.006} & \textbf{1.11937}
& 1.11087 $\pm$ 0.001 & 1.11143
& \textbf{1.12224 $\pm$ 0.001} & \textbf{1.12281}
& 1.11608 $\pm$ 0.005 & 1.12006
& 1.10954 $\pm$ 0.000 & 1.10954 \\
Heilbronn $\uparrow$
& \textbf{0.0353 $\pm$ 0.001} & \textbf{0.0359}
& 0.0313 $\pm$ 0.003 & 0.0340
& \textbf{0.0358 $\pm$ 0.001} & \textbf{0.0365}
& 0.0336 $\pm$ 0.001 & 0.0343
& 0.0303 $\pm$ 0.002 & 0.0324 \\
Min overlap $\downarrow$
& \textbf{0.3810 $\pm$ 0.000} & \textbf{0.3810}
& 0.3816 $\pm$ 0.001 & \textbf{0.3810}
& \textbf{0.3810 $\pm$ 0.000} & \textbf{0.3810}
& \textbf{0.3810 $\pm$ 0.000} & \textbf{0.3810}
& 0.3824 $\pm$ 0.000 & 0.3824 \\
Kissing $\uparrow$
& \textbf{526 $\pm$ 97} & \textbf{582}
& 413 $\pm$ 23 & 438
& \textbf{582 $\pm$ 0} & \textbf{582}
& \textbf{582 $\pm$ 0} & \textbf{582}
& 462 $\pm$ 90 & 566 \\
Second autocorr.\ $\uparrow$
& \textbf{0.8825 $\pm$ 0.000} & \textbf{0.8825}
& 0.8398 $\pm$ 0.016 & 0.8499
& \textbf{0.8825 $\pm$ 0.000} & \textbf{0.8825}
& \textbf{0.8825 $\pm$ 0.000} & \textbf{0.8825}
& 0.8745 $\pm$ 0.008 & 0.8825 \\
First autocorr.\ $\downarrow$
& \textbf{1.5113 $\pm$ 0.002} & \textbf{1.5098}
& 1.5235 $\pm$ 0.011 & 1.5151
& \textbf{1.5195 $\pm$ 0.006} & \textbf{1.5151}
& 1.5508 $\pm$ 0.011 & 1.5378
& 1.5540 $\pm$ 0.029 & 1.5203 \\
Max--min ratio $\downarrow$
& \textbf{12.9262 $\pm$ 0.064} & \textbf{12.8892}
& 12.9338 $\pm$ 0.077 & \textbf{12.8892}
& \textbf{12.8892 $\pm$ 0.000} & \textbf{12.8892}
& 12.9808 $\pm$ 0.033 & 12.9425
& 13.0000 $\pm$ 0.000 & 13.0000 \\
Hermite $\downarrow$
& \textbf{0.35473 $\pm$ 0.003} & \textbf{0.35263}
& 0.39431 $\pm$ 0.059 & 0.35945
& \textbf{0.35089 $\pm$ 0.002} & \textbf{0.34820}
& 0.35956 $\pm$ 0.005 & 0.35394
& 4.72779 $\pm$ 3.788 & 0.35429 \\
\bottomrule
\end{tabular}}
\end{table*}

\textbf{\ours consistently improves the final performance.} Table~\ref{tab:main_results} shows the final performance at the budget of $200$ programs. \ours improves the mean and best final performances across most settings over \scs and \shinka. Moreover, the improvements are robust to how the budget is parallelized. From $10$ generations of $20$ programs to $20$ generations of $10$ programs, the mean final score of \scs can drop dramatically because of limited capability in experience reuse, while \ours still maintains competitive performance. More details are given in Appendix~\ref{appdx:results}.

\begin{figure}[ht]
    \centering
    \includegraphics[width=\textwidth]{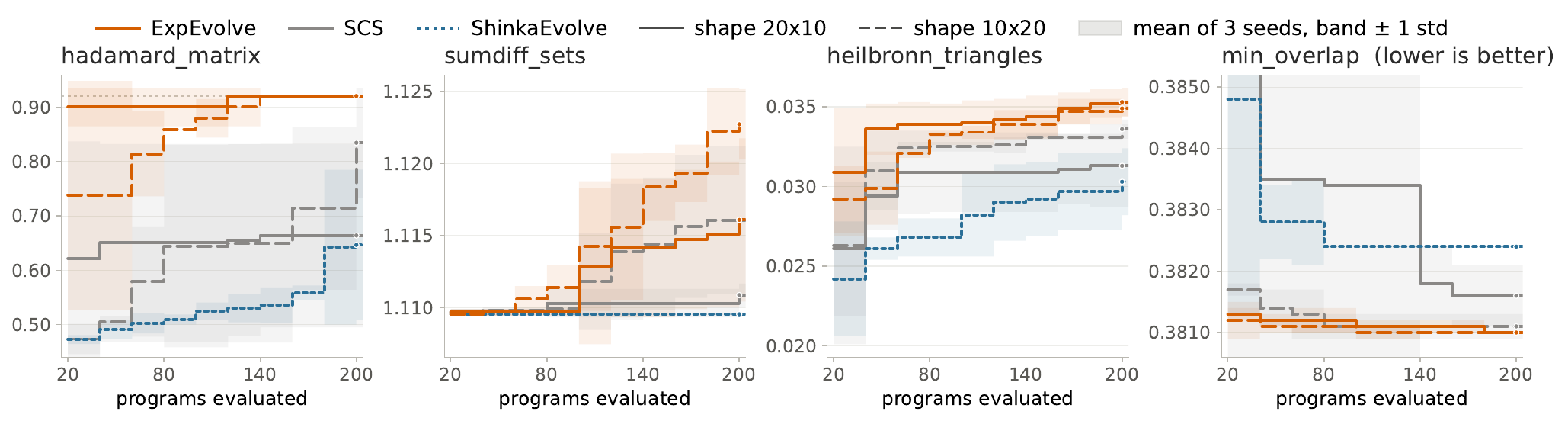}
    \caption{Best-so-far scores along the budget of $200$ evaluated programs. }
    \label{fig:curves}
\end{figure}

\textbf{\ours improves sample efficiency.} Fig.~\ref{fig:curves} shows the best-so-far scores along the budget on Hadamard, Sums and differences, Heilbronn, and Minimum overlap. On these four tasks, \ours needs $1.7\times$ to $26.2\times$ fewer evaluations than \scs under both configurations.

\begin{figure}[ht]
    \centering
    \includegraphics[width=\textwidth]{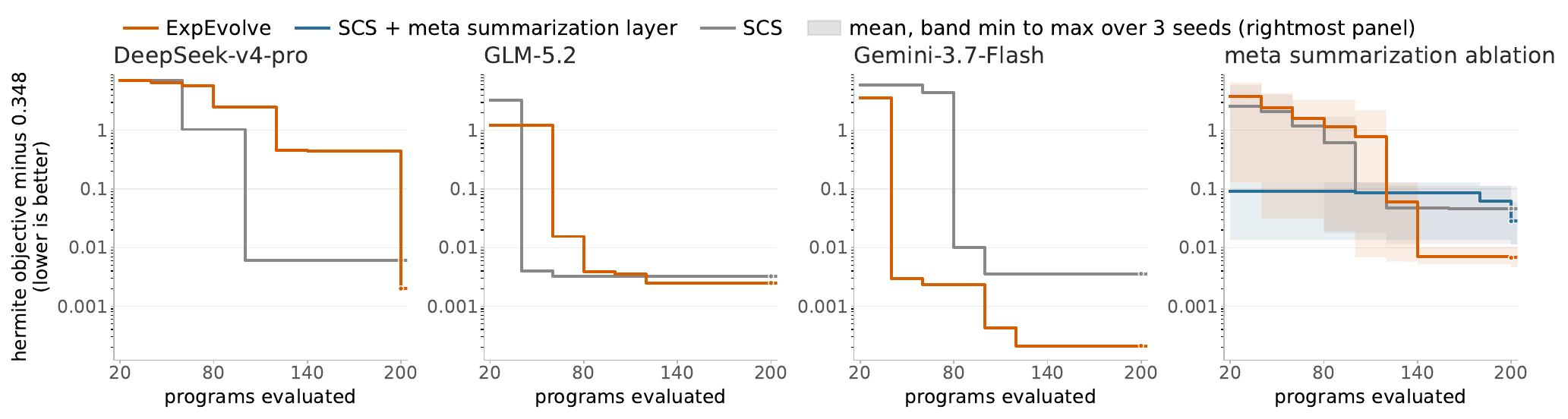}
    \caption{(a) Best-so-far scores of \scs and \ours with other LLMs. (b) Best-so-far scores of \scs, SCS+meta, and \ours. }
    \label{fig:meta_llm}
\end{figure}

\textbf{The improvements transfer to other LLMs.} Fig.~\ref{fig:meta_llm} (a) shows that the advantages of \ours over \scs transfers to other LLMs such as DeepSeek-v4-pro, GLM-5.2 and Gemini-3.7-Flash.

\textbf{Summarizing experience as memory helps the early search but saturates.} \shinka and SCS+meta retain the experience as a textual summary of recent programs, which provides additional context to the mutator LLM without controlling which ideas are evaluated. Fig.~\ref{fig:meta_llm} (b) shows that SCS+meta improves over \scs in the early stage. However, its improvement slows down afterwards while \ours keep improving. It again verifies the advantages of idea-guided evolution.

\begin{figure}[t]
    \centering
    \includegraphics[width=0.95\textwidth]{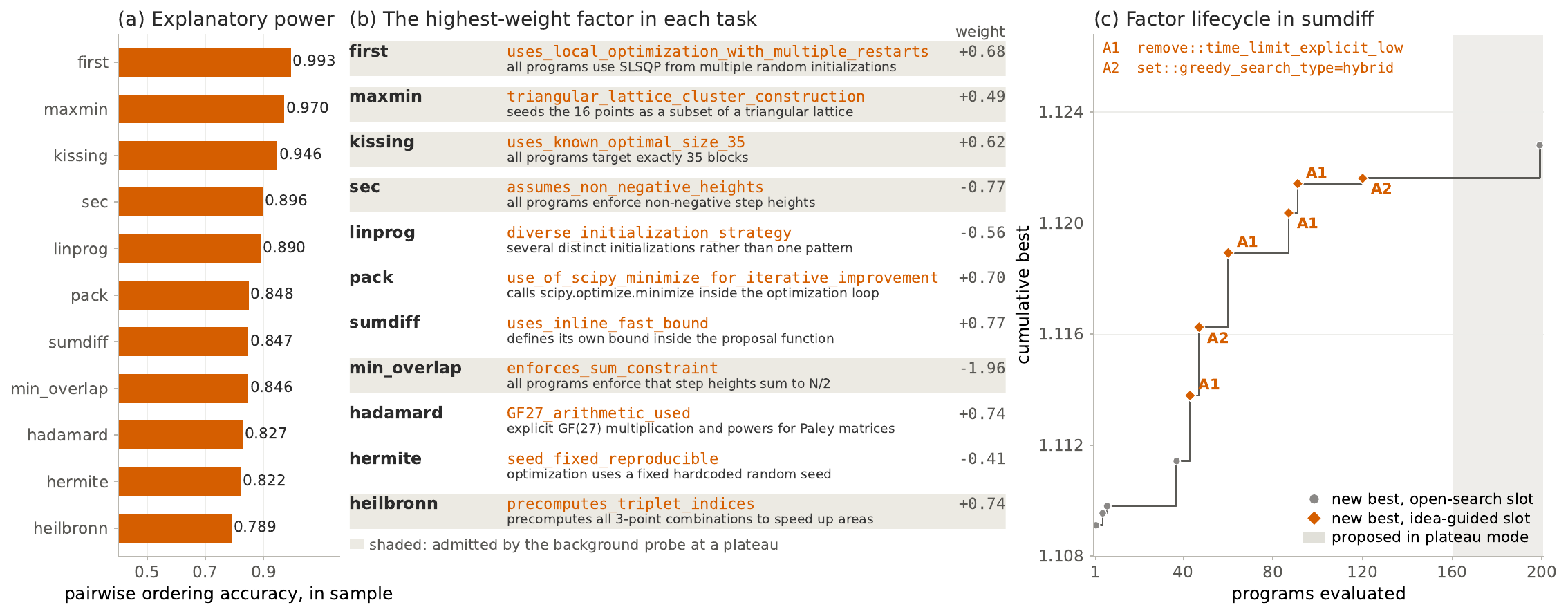}
    \caption{Quality of discovered idea factors. (a) Pairwise accuracy of the discovered idea factors. (b) The idea factor with the largest weight. (c) Illustration of usage of factors in sumdiff task.}
    \label{fig:factors}
\end{figure}

\textbf{The discovered idea factors explain the order of archived programs.} Following the definition of explainability in Sec.~\ref{sec:method_extract}, we measure the fraction of archived program pairs that discovered factors can order correctly. As shown in Fig.~\ref{fig:factors}(a), the discovered idea factors reach an accuracy of $0.79$ to $0.99$. Fig.~\ref{fig:factors}(b) shows the idea factor with the largest weight on each task.

\textbf{In an example run, idea-guided actions produce about half of the new best programs.} Fig.~\ref{fig:factors}(c) shows a run on Sums and differences with $10$ generations of $20$ programs. $6$ of the $11$ new best programs come from idea-guided slots. The $92$ idea-guided slots of the run select $7$ distinct actions, and all the new best programs from idea-guided slots come from $2$ of them.

\begin{figure*}[t]
    \centering
    \subfigure[Final BSF performance\label{fig:mle-final}]{
        \includegraphics[width=0.3\textwidth]{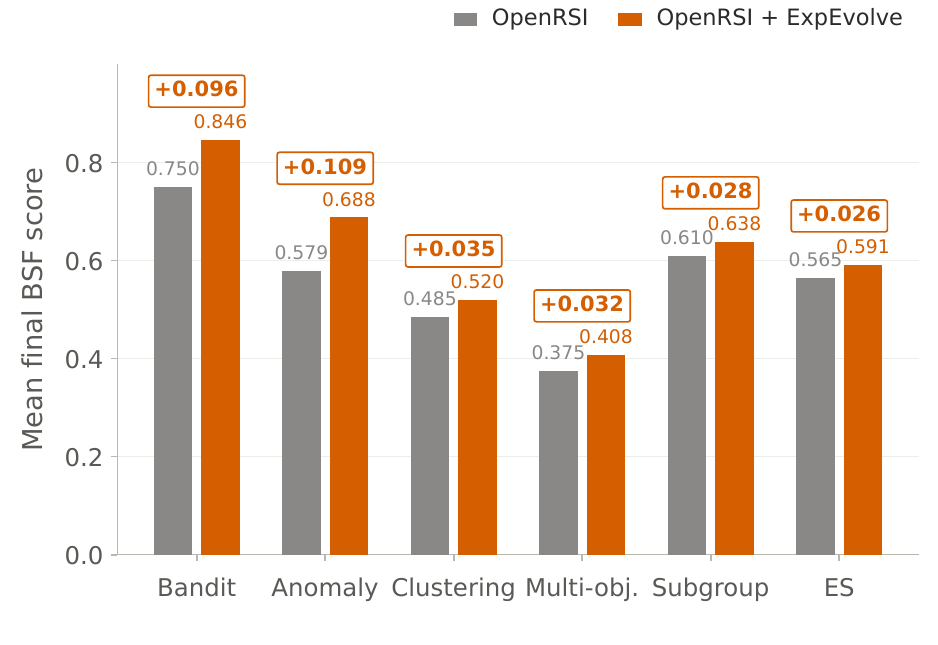}
    }
    \hfill
    \subfigure[Multi-objective optimization\label{fig:mle-guided-case}]{
        \includegraphics[width=0.3\textwidth]{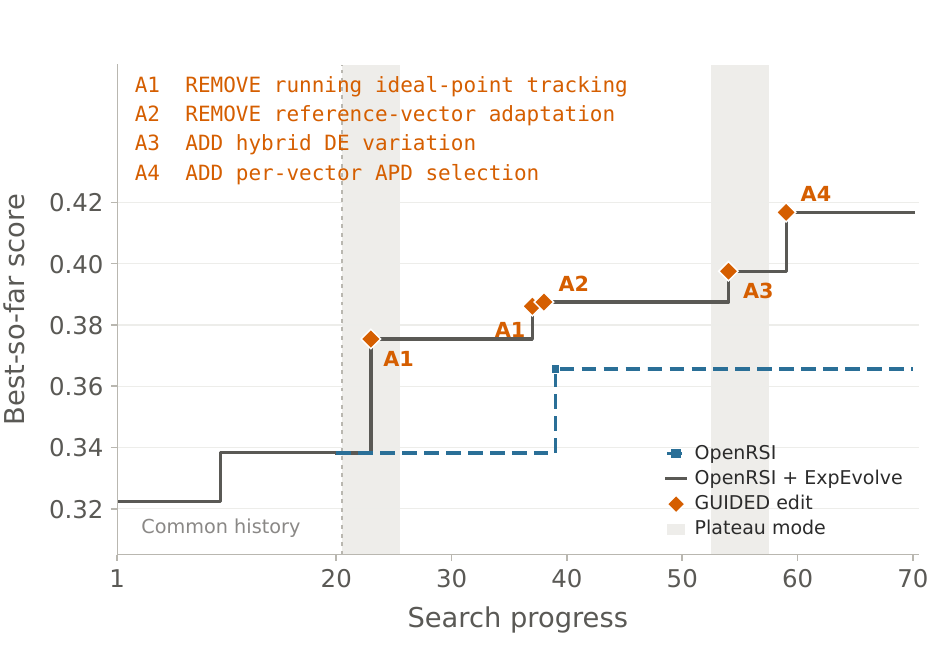}
    }
    \hfill
    \subfigure[Evolution strategy\label{fig:mle-plateau-case}]{
        \includegraphics[width=0.3\textwidth]{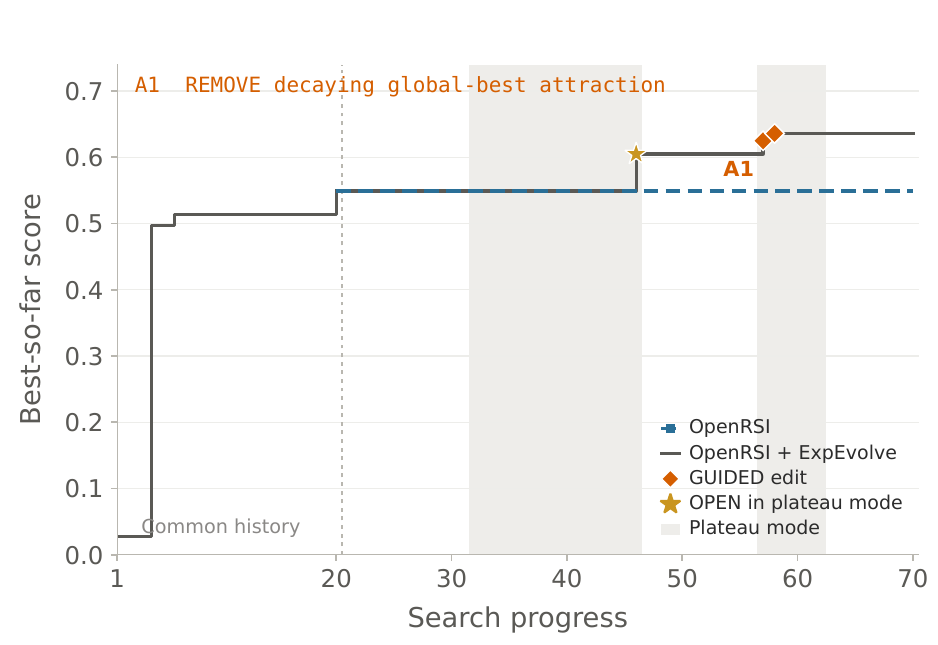}
    }
    \caption{
        Results on MLS-Bench.
        (a) Mean final scores.
        (b,c) Trajectories on two example tasks.
    }
    \label{fig:mle-main}
\end{figure*}

\subsection{Additional Experiments on Machine Learning Engineering Tasks}
\label{sec:mle-experiments}

We further study whether the effectiveness of \ours transfers to MLE by integrating it into OpenRSI~\citep{yang2026frontis} and evaluating it on $6$ tasks from MLS-Bench~\citep{lyu2026mlsbench}: \emph{Online bandit}, \emph{Anomaly detection}, \emph{Clustering}, \emph{Multi-objective optimization}, \emph{Subgroup calibration shift}, and \emph{Evolution strategy}.
Search leverages the scores and execution feedback from each task.

\textbf{Experimental setup.}
We use the evolutionary harness from OpenRSI, which organizes search through \textsc{Draft}, \textsc{Improve}, \textsc{Debug}, and \textsc{Crossover} operators.
For each task, we retain a fixed history containing 20 completed search slots from an earlier OpenRSI run. The OpenRSI baseline and OpenRSI equipped with \ours continue from this same history for \textbf{50} additional evaluator calls, with \textbf{3} seeds.  Both methods use \textbf{5} search slots per generation.
All program generation uses Qwen3.6-35B-A3B.
The same checkpoint performs factor discovery and annotation in \ours.
For each task and method, we average the final best-so-far (BSF) scores over three seeds.

\textbf{Idea-guided evolution transfers to MLE tasks.}
Fig.~\ref{fig:mle-final} compares OpenRSI with OpenRSI+\allowbreak\ours across the six tasks, where \ours shows superior performance on all six tasks, with the largest gains on Online bandit and Anomaly detection.

\textbf{Idea-guided actions produce better exploration and exploitation on MLE tasks.}
Fig.~\ref{fig:mle-guided-case} and Fig.~\ref{fig:mle-plateau-case} show two concrete trajectories in the tasks of multi-objective optimization and evolution strategy, respectively. We can find that \ours can effectively discover useful ideas such as removing running ideal-point tracking (A1), removing reference-vector adaptation (A2), and adding hybrid differential-evolution variation (A3) to effectively boost the performance. When the evolution gets stuck at plateaus, exploiting useful strategies proposed by \ours in multi-objective optimization, or exploration guided by idea factors in \ours can effectively break the plateaus and lead to improvements. More details about the implementation can be found in Appendix~\ref{appdx:mle-experiments}.

\section{Conclusions}
\label{sec:conclusion}

We introduced \ours, which structures the experience of evolutionary discovery into idea factors and reuses them to guide the search. \ours discovers idea factors from the pairs of programs that the current idea factors fail to explain, exploits them as the actions of a multi-armed bandit, and keeps exploring programs where the ideas vary, through random exploration and idea-guided exploration at plateaus. Our analysis shows that reusing a faithful idea representation reduces the number of evaluations from $\Omega(K)$ to $O(d\log K)$, and that identifying ideas requires exploring programs where the ideas vary. On $11$ mathematical discovery tasks and $6$ tasks from MLS-Bench, \ours improves the final performance and the sample efficiency of evolutionary discovery under the same evaluation budget.

\bibliography{ref_0_ai_sci,ref_1_llm,ref_2_causality,ref_3_others}

\begin{thebibliography}{116}
\providecommand{\natexlab}[1]{#1}
\providecommand{\url}[1]{\texttt{#1}}
\expandafter\ifx\csname urlstyle\endcsname\relax
  \providecommand{\doi}[1]{doi: #1}\else
  \providecommand{\doi}{doi: \begingroup \urlstyle{rm}\Url}\fi

\bibitem[Ahuja et~al.(2023)Ahuja, Mahajan, Wang, and
  Bengio]{ahuja2023interventional}
K.~Ahuja, D.~Mahajan, Y.~Wang, and Y.~Bengio.
\newblock Interventional causal representation learning.
\newblock In \emph{International Conference on Machine Learning (ICML)}, 2023.

\bibitem[{Anthropic}(2026)]{anthropic2026protein}
{Anthropic}.
\newblock Claude accelerates protein design and analytical chemistry.
\newblock
  \url{https://www.anthropic.com/research/Claude-accelerates-protein-design},
  Aug. 2026.
\newblock 354 binders against 14 of 15 targets from 1,320 designs, produced and
  tested independently by Adaptyv Bio and Twist Bioscience.

\bibitem[Asai et~al.(2026)Asai, He, Shao, Shi, Singh, Chang, Lo, Soldaini,
  Feldman, et~al.]{asai2026openscholar}
A.~Asai, J.~He, R.~Shao, W.~Shi, A.~Singh, J.~C. Chang, K.~Lo, L.~Soldaini,
  S.~Feldman, et~al.
\newblock Synthesizing scientific literature with retrieval-augmented language
  models.
\newblock \emph{Nature}, 2026.
\newblock \doi{10.1038/s41586-025-10072-4}.

\bibitem[Assump{\c{c}}{\~a}o et~al.(2025)Assump{\c{c}}{\~a}o, Ferreira, Campos,
  and Murai]{assumpcao2025codeevolve}
H.~Assump{\c{c}}{\~a}o, D.~Ferreira, L.~Campos, and F.~Murai.
\newblock {CodeEvolve}: an open source evolutionary coding agent for algorithm
  discovery and optimization.
\newblock \emph{arXiv preprint arXiv:2510.14150}, 2025.

\bibitem[Boiko et~al.(2023)Boiko, MacKnight, Kline, and
  Gomes]{boiko2023autonomous}
D.~A. Boiko, R.~MacKnight, B.~Kline, and G.~Gomes.
\newblock Autonomous chemical research with large language models.
\newblock \emph{Nature}, 624\penalty0 (7992):\penalty0 570--578, 2023.

\bibitem[Brouillard et~al.(2020)Brouillard, Lachapelle, Lacoste,
  Lacoste-Julien, and Drouin]{brouillard2020differentiable}
P.~Brouillard, S.~Lachapelle, A.~Lacoste, S.~Lacoste-Julien, and A.~Drouin.
\newblock Differentiable causal discovery from interventional data.
\newblock \emph{Advances in Neural Information Processing Systems},
  33:\penalty0 21865--21877, 2020.

\bibitem[Bubeck et~al.(2025)Bubeck, Coester, Eldan, Gowers, Lee, Lupsasca,
  Sawhney, Scherrer, Sellke, Spears, Unutmaz, Weil, Yin, and
  Zhivotovskiy]{Bubeck2025EarlySA}
S.~Bubeck, C.~Coester, R.~Eldan, T.~Gowers, Y.~T. Lee, A.~Lupsasca, M.~Sawhney,
  R.~Scherrer, M.~Sellke, B.~K. Spears, D.~Unutmaz, K.~Weil, S.~Yin, and
  N.~Zhivotovskiy.
\newblock Early science acceleration experiments with gpt-5.
\newblock \emph{ArXiv}, abs/2511.16072, 2025.

\bibitem[Cemri et~al.(2026)]{cemri2026adaevolve}
M.~Cemri et~al.
\newblock {AdaEvolve}: Adaptive {LLM} driven zeroth-order optimization.
\newblock \emph{arXiv preprint arXiv:2602.20133}, 2026.

\bibitem[Chalupka et~al.(2015)Chalupka, Perona, and
  Eberhardt]{chalupka2015visual}
K.~Chalupka, P.~Perona, and F.~Eberhardt.
\newblock Visual causal feature learning.
\newblock In \emph{Proceedings of the Thirty-First Conference on Uncertainty in
  Artificial Intelligence}, 2015.

\bibitem[Chan et~al.(2025)Chan, Chowdhury, Jaffe, Aung, Sherburn, Mays,
  Starace, Liu, Maksin, Patwardhan, Madry, and Weng]{chan2024mle-bench}
J.~S. Chan, N.~Chowdhury, O.~Jaffe, J.~Aung, D.~Sherburn, E.~Mays, G.~Starace,
  K.~Liu, L.~Maksin, T.~Patwardhan, A.~Madry, and L.~Weng.
\newblock {MLE}-bench: Evaluating machine learning agents on machine learning
  engineering.
\newblock In \emph{ICLR}, 2025.
\newblock arXiv:2410.07095.

\bibitem[Chen et~al.(2023)Chen, Huang, Zhou, Bian, Han, and Cheng]{feat}
Y.~Chen, W.~Huang, K.~Zhou, Y.~Bian, B.~Han, and J.~Cheng.
\newblock Understanding and improving feature learning for out-of-distribution
  generalization.
\newblock In \emph{Advances in Neural Information Processing Systems}, 2023.

\bibitem[Cheng et~al.(2025)Cheng, Liu, Pan, Li, Wang, Krentsel, Xia, Cemri,
  Park, Yang, Chen, Agrawal, Desai, Xing, Sen, Zaharia, and
  Stoica]{Cheng2025BarbariansAT}
A.~Cheng, S.~Liu, M.~Z. Pan, Z.~Li, B.~Wang, A.~Krentsel, T.~Xia, M.~Cemri,
  J.~Park, S.~Yang, J.~Chen, L.~A. Agrawal, A.~Desai, J.~Xing, K.~Sen,
  M.~Zaharia, and I.~Stoica.
\newblock Barbarians at the gate: How ai is upending systems research.
\newblock \emph{ArXiv}, abs/2510.06189, 2025.

\bibitem[Da~Costa et~al.(2008)Da~Costa, Fialho, Schoenauer, and
  Sebag]{dacosta2008aos}
L.~Da~Costa, {\'A}.~Fialho, M.~Schoenauer, and M.~Sebag.
\newblock Adaptive operator selection with dynamic multi-armed bandits.
\newblock In \emph{Proceedings of the Genetic and Evolutionary Computation
  Conference (GECCO)}, pages 913--920, 2008.

\bibitem[Dhawan et~al.(2024)Dhawan, Cotta, Ullrich, Krishnan, and
  Maddison]{dhawan2024end}
N.~Dhawan, L.~Cotta, K.~Ullrich, R.~G. Krishnan, and C.~J. Maddison.
\newblock End-to-end causal effect estimation from unstructured natural
  language data.
\newblock \emph{Advances in Neural Information Processing Systems},
  37:\penalty0 77165--77199, 2024.

\bibitem[Douven(2025)]{sep-abduction}
I.~Douven.
\newblock {Abduction}.
\newblock In E.~N. Zalta and U.~Nodelman, editors, \emph{The {Stanford}
  Encyclopedia of Philosophy}. Metaphysics Research Lab, Stanford University,
  {W}inter 2025 edition, 2025.

\bibitem[Favaro and Clark(2026)]{favaro2026rsi}
M.~Favaro and J.~Clark.
\newblock When {AI} builds itself: Our progress toward recursive
  self-improvement and its implications.
\newblock Anthropic Institute,
  \url{https://www.anthropic.com/institute/recursive-self-improvement}, June
  2026.

\bibitem[Feng et~al.(2026{\natexlab{a}})Feng, Lv, Ye, Wang, Huo, Yu, Xu, Zhang,
  Bai, He, and Li]{feng2025earth}
P.~Feng, Z.~Lv, J.~Ye, X.~Wang, X.~Huo, J.~Yu, W.~Xu, W.~Zhang, L.~Bai, C.~He,
  and W.~Li.
\newblock {Earth-Agent}: Unlocking the full landscape of earth observation with
  agents.
\newblock In \emph{ICLR}, 2026{\natexlab{a}}.
\newblock arXiv:2509.23141.

\bibitem[Feng et~al.(2026{\natexlab{b}})Feng, Ma, Yan,
  et~al.]{internscience2026internagent}
S.~Feng, R.~Ma, X.~Yan, et~al.
\newblock {InternAgent-1.5}: A unified agentic framework for long-horizon
  autonomous scientific discovery.
\newblock \emph{arXiv preprint arXiv:2602.08990}, 2026{\natexlab{b}}.

\bibitem[Georgiev et~al.(2025)Georgiev, G{\'o}mez-Serrano, Tao, and
  Wagner]{Georgiev2025MathematicalEA}
B.~Georgiev, J.~G{\'o}mez-Serrano, T.~Tao, and A.~Z. Wagner.
\newblock Mathematical exploration and discovery at scale.
\newblock \emph{ArXiv}, abs/2511.02864, 2025.

\bibitem[Ghafarollahi and Buehler(2025)]{ghafarollahi2025sciagents}
A.~Ghafarollahi and M.~J. Buehler.
\newblock Sciagents: automating scientific discovery through bioinspired
  multi-agent intelligent graph reasoning.
\newblock \emph{Advanced Materials}, 37\penalty0 (22):\penalty0 2413523, 2025.

\bibitem[Ghareeb et~al.(2026)]{ghareeb2026robin}
A.~E. Ghareeb et~al.
\newblock A multi-agent system for automating scientific discovery.
\newblock \emph{Nature}, 655\penalty0 (8122):\penalty0 497--505, 2026.
\newblock \doi{10.1038/s41586-026-10652-y}.

\bibitem[Gideoni et~al.(2026)Gideoni, Risi, and Gal]{Gideoni2026SimpleBA}
Y.~Gideoni, S.~Risi, and Y.~Gal.
\newblock Simple baselines are competitive with code evolution.
\newblock In \emph{ICLR 2026 Workshop on AI with Recursive Self-Improvement},
  2026.
\newblock arXiv:2602.16805.

\bibitem[Glymour()]{ClarkOutline}
C.~Glymour.
\newblock An outline of the history of methods of discovering causality.
\newblock URL
  \url{https://www.cmu.edu/dietrich/philosophy/docs/glymour/an-outline-of-the-history-of-methods-of-discovering-causality.pdf}.
\newblock Accessed: 2026-01-29.

\bibitem[Gottweis et~al.(2026)]{gottweis2026coscientist}
J.~Gottweis et~al.
\newblock Accelerating scientific discovery with co-scientist.
\newblock \emph{Nature}, 655\penalty0 (8122):\penalty0 487--496, 2026.
\newblock \doi{10.1038/s41586-026-10644-y}.

\bibitem[Greenland et~al.(1999)Greenland, Pearl, and
  Robins]{greenland1999causal}
S.~Greenland, J.~Pearl, and J.~M. Robins.
\newblock Causal diagrams for epidemiologic research.
\newblock \emph{Epidemiology}, 10\penalty0 (1):\penalty0 37--48, 1999.

\bibitem[Hoyer et~al.(2008)Hoyer, Janzing, Mooij, Peters, and
  Sch{\"o}lkopf]{hoyer2008nonlinear}
P.~Hoyer, D.~Janzing, J.~M. Mooij, J.~Peters, and B.~Sch{\"o}lkopf.
\newblock Nonlinear causal discovery with additive noise models.
\newblock \emph{Advances in neural information processing systems}, 21, 2008.

\bibitem[Huang et~al.(2020)Huang, Zhang, Zhang, Ramsey, Sanchez-Romero,
  Glymour, and Sch{\"o}lkopf]{huang2020causal}
B.~Huang, K.~Zhang, J.~Zhang, J.~Ramsey, R.~Sanchez-Romero, C.~Glymour, and
  B.~Sch{\"o}lkopf.
\newblock Causal discovery from heterogeneous/nonstationary data.
\newblock \emph{Journal of Machine Learning Research}, 21\penalty0
  (89):\penalty0 1--53, 2020.

\bibitem[Huang et~al.(2025{\natexlab{a}})Huang, Jin, Li, Li, Candes, and
  Leskovec]{huangautomated}
K.~Huang, Y.~Jin, R.~Li, M.~Y. Li, E.~Candes, and J.~Leskovec.
\newblock Automated hypothesis validation with agentic sequential
  falsifications.
\newblock In \emph{ICML}, 2025{\natexlab{a}}.

\bibitem[Huang et~al.(2025{\natexlab{b}})Huang, Chen, Zhang, Li, Zhou, Fang,
  Yang, Li, Shang, Xu, et~al.]{huang2025deep}
Y.~Huang, Y.~Chen, H.~Zhang, K.~Li, H.~Zhou, M.~Fang, L.~Yang, X.~Li, L.~Shang,
  S.~Xu, et~al.
\newblock Deep research agents: A systematic examination and roadmap.
\newblock \emph{arXiv preprint arXiv:2506.18096}, 2025{\natexlab{b}}.

\bibitem[Hyttinen et~al.(2013)Hyttinen, Eberhardt, and
  Hoyer]{hyttinen2013experiment}
A.~Hyttinen, F.~Eberhardt, and P.~O. Hoyer.
\newblock Experiment selection for causal discovery.
\newblock \emph{The Journal of Machine Learning Research}, 14\penalty0
  (1):\penalty0 3041--3071, 2013.

\bibitem[Jing et~al.(2025)Jing, Huang, Wang, Yao, Yu, Ma, Zhang, Du, and
  Yu]{jing2024dsbenchfardatascience}
L.~Jing, Z.~Huang, X.~Wang, W.~Yao, W.~Yu, K.~Ma, H.~Zhang, X.~Du, and D.~Yu.
\newblock {DSBench}: How far are data science agents from becoming data science
  experts?
\newblock In \emph{ICLR}, 2025.
\newblock arXiv:2409.07703.

\bibitem[Kaelbling et~al.(1998)Kaelbling, Littman, and
  Cassandra]{kaelbling1998planning}
L.~P. Kaelbling, M.~L. Littman, and A.~R. Cassandra.
\newblock Planning and acting in partially observable stochastic domains.
\newblock \emph{Artificial intelligence}, 101\penalty0 (1-2):\penalty0 99--134,
  1998.

\bibitem[Karpathy(2026)]{karpathy2026autoresearch}
A.~Karpathy.
\newblock autoresearch: session reports of an autonomous single-{GPU}
  training-loop agent.
\newblock GitHub discussions \#32 and \#43,
  \url{https://github.com/karpathy/autoresearch/discussions/32},
  \url{https://github.com/karpathy/autoresearch/discussions/43}, 2026.
\newblock Discussion \#43 is the 126-experiment log classified in the paper;
  discussion \#32 (8 March 2026) is an 89-experiment log with the same format.

\bibitem[Kaufmann et~al.(2016)Kaufmann, Capp{\'e}, and
  Garivier]{kaufmann2016complexity}
E.~Kaufmann, O.~Capp{\'e}, and A.~Garivier.
\newblock On the complexity of best-arm identification in multi-armed bandit
  models.
\newblock \emph{Journal of Machine Learning Research}, 17\penalty0
  (1):\penalty0 1--42, 2016.

\bibitem[Khrulkov et~al.(2025)Khrulkov, Galichin, et~al.]{khrulkov2025gigaevo}
V.~Khrulkov, A.~V. Galichin, et~al.
\newblock {GigaEvo}: An open source optimization framework powered by {LLMs}
  and evolution algorithms.
\newblock \emph{arXiv preprint arXiv:2511.17592}, 2025.

\bibitem[Kuhn and Hawkins(1963)]{sci_revolution}
T.~S. Kuhn and D.~Hawkins.
\newblock The structure of scientific revolutions.
\newblock \emph{American Journal of Physics}, 31:\penalty0 554--555, 1963.

\bibitem[Lange et~al.(2026)Lange, Imajuku, and Cetin]{lange2025shinkaevolve}
R.~T. Lange, Y.~Imajuku, and E.~Cetin.
\newblock {ShinkaEvolve}: Towards open-ended and sample-efficient program
  evolution.
\newblock In \emph{ICLR}, 2026.
\newblock arXiv:2509.19349.

\bibitem[Laurent et~al.(2024)Laurent, Janizek, Ruzo, Hinks, Hammerling,
  Narayanan, Ponnapati, White, and Rodriques]{laurent2024lab}
J.~M. Laurent, J.~D. Janizek, M.~Ruzo, M.~M. Hinks, M.~J. Hammerling,
  S.~Narayanan, M.~Ponnapati, A.~D. White, and S.~G. Rodriques.
\newblock Lab-bench: Measuring capabilities of language models for biology
  research.
\newblock \emph{arXiv preprint arXiv:2407.10362}, 2024.

\bibitem[Lehman and Stanley(2011)]{lehman2011novelty}
J.~Lehman and K.~O. Stanley.
\newblock Abandoning objectives: Evolution through the search for novelty
  alone.
\newblock \emph{Evolutionary Computation}, 19\penalty0 (2):\penalty0 189--223,
  2011.

\bibitem[Li et~al.(2025{\natexlab{a}})Li, Chen, Liu, Cai, Liu, Han, Zhang, and
  Xiong]{Li2025CanLL}
J.~Li, Y.~Chen, C.~Liu, Q.~Cai, T.~Liu, B.~Han, K.~Zhang, and H.~Xiong.
\newblock Can large language models help experimental design for causal
  discovery?
\newblock \emph{ArXiv}, abs/2503.01139, 2025{\natexlab{a}}.
\newblock URL \url{https://arxiv.org/abs/2503.01139}.

\bibitem[Li et~al.(2025{\natexlab{b}})Li, Wang, Zhang, Liu, Liu, Cao, Yu, and
  Chen]{lirevealing}
J.~Li, S.~Wang, Q.~Zhang, F.~Liu, T.~Liu, L.~Cao, S.~Yu, and F.~Chen.
\newblock Revealing multimodal causality with large language models.
\newblock In \emph{The Thirty-ninth Annual Conference on Neural Information
  Processing Systems}, 2025{\natexlab{b}}.

\bibitem[Li et~al.(2025{\natexlab{c}})Li, Xu, Guo, Zhao, Li, Yuan, Zhang,
  Jiang, Xin, Dang, Rong, Zhao, Feng, and Bing]{li2024chain}
L.~Li, W.~Xu, J.~Guo, R.~Zhao, X.~Li, Y.~Yuan, B.~Zhang, Y.~Jiang, Y.~Xin,
  R.~Dang, Y.~Rong, D.~Zhao, T.~Feng, and L.~Bing.
\newblock Chain of ideas: Revolutionizing research via novel idea development
  with {LLM} agents.
\newblock In \emph{Findings of the Association for Computational Linguistics:
  EMNLP 2025}, pages 8971--9004, 2025{\natexlab{c}}.
\newblock \doi{10.18653/v1/2025.findings-emnlp.477}.

\bibitem[Li et~al.(2024)Li, Vajipey, Goodman, and Fox]{li2024critical}
M.~Y. Li, V.~Vajipey, N.~D. Goodman, and E.~B. Fox.
\newblock Critical: Critic automation with language models.
\newblock \emph{arXiv preprint arXiv:2411.06590}, 2024.

\bibitem[Li and Liu(2025)]{li2025efficient}
X.-C. Li and T.~Liu.
\newblock Efficient and trustworthy causal discovery with latent variables and
  complex relations.
\newblock In \emph{The Thirteenth International Conference on Learning
  Representations}, 2025.

\bibitem[Li et~al.(2025{\natexlab{d}})Li, Wang, and Liu]{li2025recovery}
X.-C. Li, J.~Wang, and T.~Liu.
\newblock Recovery of causal graph involving latent variables via homologous
  surrogates.
\newblock In \emph{The Thirteenth International Conference on Learning
  Representations}, 2025{\natexlab{d}}.

\bibitem[Liang et~al.(2024)Liang, Zhang, Cao, Wang, Ding, Yang, Vodrahalli, He,
  Smith, Yin, et~al.]{liang2024can}
W.~Liang, Y.~Zhang, H.~Cao, B.~Wang, D.~Y. Ding, X.~Yang, K.~Vodrahalli, S.~He,
  D.~S. Smith, Y.~Yin, et~al.
\newblock Can large language models provide useful feedback on research papers?
  a large-scale empirical analysis.
\newblock \emph{NEJM AI}, 1\penalty0 (8):\penalty0 AIoa2400196, 2024.

\bibitem[Liu et~al.(2025)Liu, Chen, Liu, Gong, Cheng, Han, and
  Zhang]{coat2025discoveringreasoningcausalityhidden}
C.~Liu, Y.~Chen, T.~Liu, M.~Gong, J.~Cheng, B.~Han, and K.~Zhang.
\newblock Discovering and reasoning of causality in the hidden world with large
  language models, 2025.
\newblock URL \url{https://arxiv.org/abs/2402.03941}.

\bibitem[Liu et~al.(2024)Liu, Tong, Yuan, Lin, Luo, Wang, Lu, and
  Zhang]{liu2024eoh}
F.~Liu, X.~Tong, M.~Yuan, X.~Lin, F.~Luo, Z.~Wang, Z.~Lu, and Q.~Zhang.
\newblock Evolution of heuristics: Towards efficient automatic algorithm design
  using large language model.
\newblock In \emph{International Conference on Machine Learning (ICML)}, 2024.

\bibitem[Liu et~al.(2026)Liu, Yang, Xie, Ni, Gao, Li, Tang, Ouyang, Cambria,
  and Zhou]{liu2025researchbench}
Y.~Liu, Z.~Yang, T.~Xie, J.~Ni, B.~Gao, Y.~Li, S.~Tang, W.~Ouyang, E.~Cambria,
  and D.~Zhou.
\newblock {ResearchBench}: Benchmarking {LLMs} in scientific discovery via
  inspiration-based task decomposition.
\newblock In \emph{Findings of the Association for Computational Linguistics:
  ACL 2026}, pages 13187--13207, 2026.

\bibitem[Lu et~al.(2024)Lu, Lu, Lange, Foerster, Clune, and Ha]{lu2024ai}
C.~Lu, C.~Lu, R.~T. Lange, J.~Foerster, J.~Clune, and D.~Ha.
\newblock The ai scientist: Towards fully automated open-ended scientific
  discovery.
\newblock \emph{arXiv preprint arXiv:2408.06292}, 2024.

\bibitem[Lu et~al.(2026)Lu, Lu, Lange, Yamada, Hu, Foerster, Ha, and
  Clune]{lu2026aiscientist}
C.~Lu, C.~Lu, R.~T. Lange, Y.~Yamada, S.~Hu, J.~Foerster, D.~Ha, and J.~Clune.
\newblock Towards end-to-end automation of {AI} research.
\newblock \emph{Nature}, 651\penalty0 (8107):\penalty0 914--919, 2026.
\newblock \doi{10.1038/s41586-026-10265-5}.

\bibitem[Luo et~al.(2026)Luo, Huang, Luo, Liu, Deng, Li, Yao, Hu, Feng, and
  Liu]{luo2026seaevo}
S.~Luo, Y.~Huang, H.~Luo, F.~Liu, G.~Deng, L.~Li, Q.~Yao, Z.~Hu, J.~Feng, and
  Q.~Liu.
\newblock {SeaEvo}: Advancing algorithm discovery with strategy space
  evolution.
\newblock \emph{arXiv preprint arXiv:2604.24372}, 2026.

\bibitem[Lv et~al.(2026)Lv, Lu, Liu, Zhang, and Tang]{lv2026muevo}
H.~Lv, N.~Lu, S.~Liu, S.~Zhang, and K.~Tang.
\newblock {MuEvo}: {LLM}-driven evolution of multi-heuristic ensemble.
\newblock \emph{arXiv preprint arXiv:2608.03636}, 2026.

\bibitem[Lyu et~al.(2026{\natexlab{a}})Lyu, Yang, Huang, Zhang, Xu, Li, Han,
  Zhang, Zhang, Huang, Yang, Chen, Guo, Yang, Ai, Chai, Cao, Yang, Wang, Jiang,
  Gao, Tang, Shi, Du, Simchowitz, Jiao, Song, and Jin]{lyu2026mlsbench}
B.~Lyu, Y.~Yang, S.~Huang, J.~Zhang, Q.~Xu, X.~Li, X.~Han, Y.~Zhang, H.~Zhang,
  R.~Huang, K.~Yang, Z.~Chen, W.~Guo, J.~Yang, X.~Ai, W.~Chai, Y.~Cao, Z.~Yang,
  K.~Wang, D.~Jiang, H.-a. Gao, S.~Tang, C.~Shi, S.~S. Du, M.~Simchowitz,
  J.~Jiao, D.~Song, and C.~Jin.
\newblock {MLS-Bench}: A holistic and rigorous assessment of {AI} systems on
  building better {AI}.
\newblock \emph{arXiv preprint arXiv:2605.08678}, 2026{\natexlab{a}}.

\bibitem[Lyu et~al.(2026{\natexlab{b}})Lyu, Zhang, Yi, Zhao, Guo, Hu,
  Piotrowski, Kaliski, Urbani, Meng, Zhou, and Yan]{lyu2026evoscientist}
Y.~Lyu, X.~Zhang, X.~Yi, Y.~Zhao, S.~Guo, W.~Hu, J.~Piotrowski, J.~Kaliski,
  J.~Urbani, Z.~Meng, L.~Zhou, and X.~Yan.
\newblock {EvoScientist}: Towards multi-agent evolving {AI} scientists for
  end-to-end scientific discovery.
\newblock \emph{arXiv preprint arXiv:2603.08127}, 2026{\natexlab{b}}.

\bibitem[Majumder et~al.(2025)Majumder, Surana, Agarwal, Mishra, Meena,
  Prakhar, Vora, Khot, Sabharwal, and Clark]{majumder2025discoverybench}
B.~P. Majumder, H.~Surana, D.~Agarwal, B.~D. Mishra, A.~Meena, A.~Prakhar,
  T.~Vora, T.~Khot, A.~Sabharwal, and P.~Clark.
\newblock {DiscoveryBench}: Towards data-driven discovery with large language
  models.
\newblock In \emph{International Conference on Learning Representations
  (ICLR)}, 2025.

\bibitem[Mandal et~al.(2025)Mandal, Soni, Zaki, Smedskjaer, Wondraczek,
  Wondraczek, Gosvami, and Krishnan]{mandal2025evaluating}
I.~Mandal, J.~Soni, M.~Zaki, M.~M. Smedskjaer, K.~Wondraczek, L.~Wondraczek,
  N.~N. Gosvami, and N.~A. Krishnan.
\newblock Evaluating large language model agents for automation of atomic force
  microscopy.
\newblock \emph{Nature Communications}, 16\penalty0 (1):\penalty0 9104, 2025.

\bibitem[Mitchener et~al.(2025)Mitchener, Yiu, Chang, Bourdenx, Nadolski,
  Sulovari, Landsness, Barab{\'a}si, Narayanan, Evans, Reddy, Foiani, Kamal,
  Shriver, Cao, Wassie, Laurent, Melville-Green, Ramos, Bou, Roberts, Zagorac,
  Orr, Orr, Zwezdaryk, Ghareeb, McCoy, Gomes, Ashley, Duff, Buonassisi,
  Rainforth, Bateman, Skarlinski, Rodriques, Hinks, and
  White]{Mitchener2025KosmosAA}
L.~Mitchener, A.~Yiu, B.~Chang, M.~Bourdenx, T.~Nadolski, A.~Sulovari, E.~C.
  Landsness, D.~L. Barab{\'a}si, S.~Narayanan, N.~Evans, S.~Reddy, M.~S.
  Foiani, A.~Kamal, L.~P. Shriver, F.~Cao, A.~T. Wassie, J.~M. Laurent,
  E.~Melville-Green, M.~C. Ramos, A.~Bou, K.~F. Roberts, S.~Zagorac, T.~C. Orr,
  M.~E. Orr, K.~J. Zwezdaryk, A.~E. Ghareeb, L.~McCoy, B.~Gomes, E.~A. Ashley,
  K.~E. Duff, T.~Buonassisi, T.~Rainforth, R.~J. Bateman, M.~Skarlinski, S.~G.
  Rodriques, M.~M. Hinks, and A.~D. White.
\newblock Kosmos: An ai scientist for autonomous discovery.
\newblock \emph{ArXiv}, abs/2511.02824, 2025.

\bibitem[Mooij et~al.(2020)Mooij, Magliacane, and Claassen]{mooij2020joint}
J.~M. Mooij, S.~Magliacane, and T.~Claassen.
\newblock Joint causal inference from multiple contexts.
\newblock \emph{Journal of machine learning research}, 21\penalty0
  (99):\penalty0 1--108, 2020.

\bibitem[Mouret and Clune(2015)]{mouret2015mapelites}
J.-B. Mouret and J.~Clune.
\newblock Illuminating search spaces by mapping elites.
\newblock \emph{arXiv preprint arXiv:1504.04909}, 2015.

\bibitem[Novikov et~al.(2025)Novikov, V{\~u}, Eisenberger, Dupont, Huang,
  Wagner, Shirobokov, Kozlovskii, Ruiz, Mehrabian,
  et~al.]{novikov2025alphaevolve}
A.~Novikov, N.~V{\~u}, M.~Eisenberger, E.~Dupont, P.-S. Huang, A.~Z. Wagner,
  S.~Shirobokov, B.~Kozlovskii, F.~J. Ruiz, A.~Mehrabian, et~al.
\newblock Alphaevolve: A coding agent for scientific and algorithmic discovery.
\newblock \emph{arXiv preprint arXiv:2506.13131}, 2025.

\bibitem[{OpenAI}(2026{\natexlab{a}})]{openai2026researchacceleration}
{OpenAI}.
\newblock Research acceleration: The view inside {OpenAI}.
\newblock OpenAI blog, 6 September 2026, Sept. 2026{\natexlab{a}}.
\newblock Reports the automated research intern milestone and 3.1
  agent-workdays per human workday in the research organization.

\bibitem[{OpenAI}(2026{\natexlab{b}})]{openai2026tenadvances}
{OpenAI}.
\newblock Ten advances in mathematics and theoretical computer science.
\newblock \url{https://openai.com/index/ten-advances-in-mathematics/}, Aug.
  2026{\natexlab{b}}.
\newblock Results on ten open problems with Lean 4 formal proofs; peer review
  pending at the time of writing.

\bibitem[{OpenAI}(2026{\natexlab{c}})]{openai2026unitdistance}
{OpenAI}.
\newblock An {OpenAI} model has disproved a central conjecture in discrete
  geometry.
\newblock
  \url{https://openai.com/index/model-disproves-discrete-geometry-conjecture/},
  May 2026{\natexlab{c}}.
\newblock Counterexample to the Erd\H{o}s unit distance conjecture, verified by
  external mathematicians with a companion paper.

\bibitem[Pearl(2009)]{pearl2009causality}
J.~Pearl.
\newblock \emph{Causality}.
\newblock Cambridge university press, 2009.

\bibitem[Pelleriti et~al.(2026)]{pelleriti2026evotrace}
N.~Pelleriti et~al.
\newblock What do evolutionary coding agents evolve?
\newblock \emph{arXiv preprint arXiv:2605.20086}, 2026.

\bibitem[Perry et~al.(2022)Perry, Von~K{\"u}gelgen, and
  Sch{\"o}lkopf]{perry2022causal}
R.~Perry, J.~Von~K{\"u}gelgen, and B.~Sch{\"o}lkopf.
\newblock Causal discovery in heterogeneous environments under the sparse
  mechanism shift hypothesis.
\newblock \emph{Advances in Neural Information Processing Systems},
  35:\penalty0 10904--10917, 2022.

\bibitem[Plaat et~al.(2025)Plaat, van Duijn, van Stein, Preuss, van~der Putten,
  and Batenburg]{plaat2025agentic}
A.~Plaat, M.~van Duijn, N.~van Stein, M.~Preuss, P.~van~der Putten, and K.~J.
  Batenburg.
\newblock Agentic large language models, a survey.
\newblock \emph{arXiv preprint arXiv:2503.23037}, 2025.

\bibitem[Quinonero-Candela et~al.(2008)Quinonero-Candela, Sugiyama,
  Schwaighofer, and Lawrence]{datasetshift}
J.~Quinonero-Candela, M.~Sugiyama, A.~Schwaighofer, and N.~D. Lawrence.
\newblock \emph{Dataset shift in machine learning}.
\newblock Mit Press, 2008.

\bibitem[Roch et~al.(2020)]{roch2020chemos}
L.~M. Roch et~al.
\newblock Chemos: An orchestration software to democratize autonomous
  discovery.
\newblock \emph{PLOS ONE}, 15\penalty0 (4):\penalty0 e0229862, 2020.

\bibitem[Romera-Paredes et~al.(2024)]{romeraparedes2024funsearch}
B.~Romera-Paredes et~al.
\newblock Mathematical discoveries from program search with large language
  models.
\newblock \emph{Nature}, 625:\penalty0 468--475, 2024.

\bibitem[Rubenstein et~al.(2017)Rubenstein, Weichwald, Bongers, Mooij, Janzing,
  Grosse-Wentrup, and Sch{\"o}lkopf]{rubenstein2017causal}
P.~K. Rubenstein, S.~Weichwald, S.~Bongers, J.~M. Mooij, D.~Janzing,
  M.~Grosse-Wentrup, and B.~Sch{\"o}lkopf.
\newblock Causal consistency of structural equation models.
\newblock In \emph{Proceedings of the Thirty-Third Conference on Uncertainty in
  Artificial Intelligence}, 2017.

\bibitem[Sch{\"o}lkopf et~al.(2021)Sch{\"o}lkopf, Locatello, Bauer, Ke,
  Kalchbrenner, Goyal, and Bengio]{scholkopf2021toward}
B.~Sch{\"o}lkopf, F.~Locatello, S.~Bauer, N.~R. Ke, N.~Kalchbrenner, A.~Goyal,
  and Y.~Bengio.
\newblock Toward causal representation learning.
\newblock \emph{Proceedings of the IEEE}, 109\penalty0 (5):\penalty0 612--634,
  2021.

\bibitem[Sharma(2025)]{openevolve}
A.~Sharma.
\newblock Openevolve: an open-source evolutionary coding agent, 2025.
\newblock URL \url{https://github.com/algorithmicsuperintelligence/openevolve}.

\bibitem[Sheth et~al.(2025)Sheth, Jin, Wilder, Janzing, and
  Fritz]{sheth2025can}
I.~Sheth, Z.~Jin, B.~Wilder, D.~Janzing, and M.~Fritz.
\newblock Can llms propose instrumental variables for causal reasoning?
\newblock In \emph{NeurIPS 2025 Workshop on CauScien: Uncovering Causality in
  Science}, 2025.

\bibitem[Shimizu et~al.(2006)Shimizu, Hoyer, Hyv{\"a}rinen, Kerminen, and
  Jordan]{shimizu2006linear}
S.~Shimizu, P.~O. Hoyer, A.~Hyv{\"a}rinen, A.~Kerminen, and M.~Jordan.
\newblock A linear non-gaussian acyclic model for causal discovery.
\newblock \emph{Journal of Machine Learning Research}, 7\penalty0 (10), 2006.

\bibitem[Shojaee et~al.(2025{\natexlab{a}})Shojaee, Meidani, Gupta,
  Barati~Farimani, and Reddy]{shojaee2025llmsr}
P.~Shojaee, K.~Meidani, S.~Gupta, A.~Barati~Farimani, and C.~K. Reddy.
\newblock {LLM-SR}: Scientific equation discovery via programming with large
  language models.
\newblock In \emph{ICLR}, 2025{\natexlab{a}}.

\bibitem[Shojaee et~al.(2025{\natexlab{b}})Shojaee, Nguyen, Meidani, Farimani,
  Doan, and Reddy]{shojaeellm}
P.~Shojaee, N.-H. Nguyen, K.~Meidani, A.~B. Farimani, K.~D. Doan, and C.~K.
  Reddy.
\newblock Llm-srbench: A new benchmark for scientific equation discovery with
  large language models.
\newblock In \emph{ICML}, 2025{\natexlab{b}}.

\bibitem[Si et~al.(2026)Si, Hashimoto, and Yang]{si2026gap}
C.~Si, T.~Hashimoto, and D.~Yang.
\newblock The ideation--execution gap: Execution outcomes of {LLM}-generated
  versus human research ideas.
\newblock In \emph{International Conference on Learning Representations
  (ICLR)}, 2026.

\bibitem[Soare et~al.(2014)Soare, Lazaric, and Munos]{soare2014bestarm}
M.~Soare, A.~Lazaric, and R.~Munos.
\newblock Best-arm identification in linear bandits.
\newblock In \emph{Advances in Neural Information Processing Systems
  (NeurIPS)}, 2014.

\bibitem[Spirtes et~al.(1995)Spirtes, Meek, and Richardson]{spirtes1995causal}
P.~Spirtes, C.~Meek, and T.~Richardson.
\newblock Causal inference in the presence of latent variables and selection
  bias.
\newblock In \emph{Proceedings of the Eleventh conference on Uncertainty in
  artificial intelligence}, pages 499--506, 1995.

\bibitem[Spirtes et~al.(2000)Spirtes, Glymour, and
  Scheines]{spirtes2000causation}
P.~Spirtes, C.~N. Glymour, and R.~Scheines.
\newblock \emph{Causation, prediction, and search}.
\newblock MIT press, 2000.

\bibitem[Starace et~al.(2025)Starace, Jaffe, Sherburn, Aung, Chan, Maksin,
  Dias, Mays, Kinsella, Thompson, et~al.]{staracepaperbench}
G.~Starace, O.~Jaffe, D.~Sherburn, J.~Aung, J.~S. Chan, L.~Maksin, R.~Dias,
  E.~Mays, B.~Kinsella, W.~Thompson, et~al.
\newblock Paperbench: Evaluating ai’s ability to replicate ai research.
\newblock In \emph{Proceedings of the 42nd International Conference on Machine
  Learning}, volume 267, pages 56843--56873, 2025.

\bibitem[{\v{S}}urina et~al.(2025){\v{S}}urina, Mansouri, Quaedvlieg, Seddas,
  Viazovska, Abbe, and Gulcehre]{surina2025evotune}
A.~{\v{S}}urina, A.~Mansouri, L.~C. P.~M. Quaedvlieg, A.~Seddas, M.~Viazovska,
  E.~Abbe, and C.~Gulcehre.
\newblock Algorithm discovery with {LLMs}: Evolutionary search meets
  reinforcement learning.
\newblock In \emph{Conference on Language Modeling (COLM)}, 2025.
\newblock arXiv:2504.05108.

\bibitem[Swanson et~al.(2025)Swanson, Wu, Bulaong, Pak, and
  Zou]{swanson2025virtual}
K.~Swanson, W.~Wu, N.~L. Bulaong, J.~E. Pak, and J.~Zou.
\newblock The virtual lab of ai agents designs new sars-cov-2 nanobodies.
\newblock \emph{Nature}, 646\penalty0 (8085):\penalty0 716--723, 2025.

\bibitem[Tie et~al.(2025)Tie, Zhou, and Sun]{tie2025survey}
G.~Tie, P.~Zhou, and L.~Sun.
\newblock A survey of {AI} scientists.
\newblock \emph{arXiv preprint arXiv:2510.23045}, 2025.

\bibitem[Tom et~al.(2024)Tom, Schmid, Baird, Cao, Darvish, Hao, Lo,
  Pablo-Garc{\'\i}a, Rajaonson, Skreta, et~al.]{tom2024self}
G.~Tom, S.~P. Schmid, S.~G. Baird, Y.~Cao, K.~Darvish, H.~Hao, S.~Lo,
  S.~Pablo-Garc{\'\i}a, E.~M. Rajaonson, M.~Skreta, et~al.
\newblock Self-driving laboratories for chemistry and materials science.
\newblock \emph{Chemical Reviews}, 124\penalty0 (16):\penalty0 9633--9732,
  2024.

\bibitem[Truhn et~al.(2026)Truhn, Azizi, Zou, Cerda-Alberich, Mahmood, and
  Kather]{truhn2026artificial}
D.~Truhn, S.~Azizi, J.~Zou, L.~Cerda-Alberich, F.~Mahmood, and J.~N. Kather.
\newblock Artificial intelligence agents in cancer research and oncology.
\newblock \emph{Nature Reviews Cancer}, 26\penalty0 (4):\penalty0 256--269,
  2026.

\bibitem[Volkov et~al.(2026)Volkov, Khrulkov, Galichin, Sivtsov, Glazkov,
  Volkova, Pchelin, Bespalov, Dylov, Anokhin, and Oseledets]{volkov2026evomem}
V.~Volkov, V.~Khrulkov, A.~V. Galichin, D.~Sivtsov, N.~Glazkov, O.~Volkova,
  K.~Pchelin, I.~Bespalov, D.~V. Dylov, P.~Anokhin, and I.~Oseledets.
\newblock {EvoMem}: Memory-augmented evolution for code optimization.
\newblock \emph{arXiv preprint arXiv:2608.10795}, 2026.

\bibitem[Wallace et~al.(2026)Wallace, Griffin, O'Neill, Luong, and
  Larter]{wallace2026conjecture}
D.~Wallace, C.~Griffin, S.~O'Neill, T.~Luong, and O.~Larter.
\newblock Conjecture machines: {AI} agents and the new validation bottleneck in
  science.
\newblock Google DeepMind,
  \url{https://deepmind.google/public-policy/conjecture-machines-ai-agents-and-the-new-validation-bottleneck-in-science/},
  July 2026.

\bibitem[Wallace(1981)]{wallace1981causality}
W.~Wallace.
\newblock \emph{Causality and Scientific Explanation}.
\newblock Number v. 2 in Causality and Scientific Explanation. University Press
  of America, 1981.
\newblock ISBN 9780819114815.

\bibitem[Wan et~al.(2026)Wan, Yang, Yu, Tu, Lu, Yu, Cao, Gao, Xie, Wang,
  et~al.]{wan2025deepresearch}
H.~Wan, C.~Yang, J.~Yu, M.~Tu, J.~Lu, D.~Yu, J.~Cao, B.~Gao, J.~Xie, A.~Wang,
  et~al.
\newblock Deepresearch arena: The first exam of llms' research abilities via
  seminar-grounded tasks.
\newblock In \emph{Proceedings of the AAAI Conference on Artificial
  Intelligence}, volume~40, pages 33341--33349, 2026.

\bibitem[Wang et~al.(2025{\natexlab{a}})Wang, He, Coelho, Bucci, Nazir, Chen,
  Trinh, Zhang, Lu, Huang, et~al.]{wang2025spatialagent}
H.~Wang, Y.~He, P.~P. Coelho, M.~Bucci, A.~Nazir, B.~Chen, L.~T. Trinh,
  S.~Zhang, Z.~Lu, K.~Huang, et~al.
\newblock {SpatialAgent}: An autonomous {AI} agent for spatial biology.
\newblock \emph{bioRxiv}, 2025{\natexlab{a}}.
\newblock \doi{10.1101/2025.04.03.646459}.

\bibitem[Wang et~al.(2024)Wang, Downey, Ji, and Hope]{wang2024scimon}
Q.~Wang, D.~Downey, H.~Ji, and T.~Hope.
\newblock Scimon: Scientific inspiration machines optimized for novelty.
\newblock In \emph{Proceedings of the 62nd Annual Meeting of the Association
  for Computational Linguistics (Volume 1: Long Papers)}, pages 279--299, 2024.

\bibitem[Wang et~al.(2025{\natexlab{b}})Wang, Pi{\k{e}}kos, Nanbo, Laakom,
  Chen, Ostaszewski, Zhuge, and Schmidhuber]{wang2025hgm}
W.~Wang, P.~Pi{\k{e}}kos, L.~Nanbo, F.~Laakom, Y.~Chen, M.~Ostaszewski,
  M.~Zhuge, and J.~Schmidhuber.
\newblock {Huxley-G{\"o}del Machine}: Human-level coding agent development by
  an approximation of the optimal self-improving machine.
\newblock \emph{arXiv preprint arXiv:2510.21614}, 2025{\natexlab{b}}.

\bibitem[Wang et~al.(2026)Wang, Su, Zeng, Xu, Ren, Yang, Huang, He, Ma, Peng,
  Cheng, He, Chen, Wang, Du, and Shen]{Wang2025ThetaEvolveTL}
Y.~Wang, S.-R. Su, Z.~Zeng, E.~Xu, L.~Ren, X.~Yang, Z.~Huang, X.~He, L.~Ma,
  B.~Peng, H.~Cheng, P.~He, W.~Chen, S.~Wang, S.~S. Du, and Y.~Shen.
\newblock Thetaevolve: Test-time learning on open problems.
\newblock In \emph{International Conference on Machine Learning (ICML)}, 2026.
\newblock arXiv:2511.23473.

\bibitem[Wang et~al.(2025{\natexlab{c}})Wang, Danek, and Sun]{wang2025biodsa}
Z.~Wang, B.~Danek, and J.~Sun.
\newblock Biodsa-1k: Benchmarking data science agents for biomedical research.
\newblock \emph{arXiv preprint arXiv:2505.16100}, 2025{\natexlab{c}}.

\bibitem[Woodruff et~al.(2026)Woodruff, Cohen-Addad, Jain, Mao, Zuo, Bateni,
  Br{\^a}nzei, Brenner, Chen, Feng, Fortnow, Fu, Guan, Hadizadeh, Hajiaghayi,
  JafariRaviz, Javanmard, {Karthik C. S.}, ichi Kawarabayashi, Kumar, Lattanzi,
  Lee, Li, Panageas, Paparas, Przybocki, Subercaseaux, Svensson, Taherijam, Wu,
  Yogev, Zadimoghaddam, Zhou, Matias, Manyika, and
  Mirrokni]{Woodruff2026AcceleratingSR}
D.~P. Woodruff, V.~Cohen-Addad, L.~Jain, J.~Mao, S.~Zuo, M.~Bateni,
  S.~Br{\^a}nzei, M.~P. Brenner, L.~Chen, Y.~Feng, L.~Fortnow, G.~Fu, Z.~Guan,
  Z.~Hadizadeh, M.~T. Hajiaghayi, M.~JafariRaviz, A.~Javanmard, {Karthik C.
  S.}, K.~ichi Kawarabayashi, R.~Kumar, S.~Lattanzi, E.~Lee, Y.~Li,
  I.~Panageas, D.~Paparas, B.~Przybocki, B.~Subercaseaux, O.~Svensson,
  S.~Taherijam, X.~Wu, E.~Yogev, M.~Zadimoghaddam, S.~Zhou, Y.~Matias,
  J.~Manyika, and V.~S. Mirrokni.
\newblock Accelerating scientific research with gemini: Case studies and common
  techniques.
\newblock \emph{ArXiv}, abs/2602.03837, 2026.

\bibitem[Wu et~al.(2026)Wu, Fu, Si, Huang, Jiang, Li, Xia, Sun, Xu, Xiao, Lu,
  Cai, Ye, Zhu, Hu, and Liu]{wu2025innovatorbench}
Y.~Wu, D.~Fu, W.~Si, Z.~Huang, M.~Jiang, K.~Li, S.~Xia, J.~Sun, T.~Xu, Y.~Xiao,
  P.~Lu, X.~Cai, L.~Ye, W.~Zhu, X.~Hu, and P.~Liu.
\newblock {InnovatorBench}: Evaluating agents' ability to conduct innovative
  {AI} research.
\newblock In \emph{ICLR}, 2026.
\newblock arXiv:2510.27598.

\bibitem[Yamada et~al.(2025)Yamada, Lange, Lu, Hu, Lu, Foerster, Clune, and
  Ha]{yamada2025ai}
Y.~Yamada, R.~T. Lange, C.~Lu, S.~Hu, C.~Lu, J.~Foerster, J.~Clune, and D.~Ha.
\newblock The ai scientist-v2: Workshop-level automated scientific discovery
  via agentic tree search.
\newblock \emph{arXiv preprint arXiv:2504.08066}, 2025.

\bibitem[Yang et~al.(2026{\natexlab{a}})Yang, Lu, Wan, Yu, and
  Qin]{yang2025multi}
C.~Yang, J.~Lu, H.~Wan, J.~Yu, and F.~Qin.
\newblock From what to why: A multi-agent system for evidence-based chemical
  reaction condition reasoning.
\newblock In \emph{ICLR}, 2026{\natexlab{a}}.

\bibitem[Yang et~al.(2026{\natexlab{b}})Yang, Jiang, Fu, Luo, Ren, Wang, Zhao,
  Liu, Zuo, Wang, et~al.]{yang2026frontis}
J.~Yang, C.~Jiang, Y.~Fu, T.~Luo, C.~Ren, W.~Wang, K.~Zhao, H.~Liu, Y.~Zuo,
  Y.~Wang, et~al.
\newblock Frontis-ma1: Training an ai4ai model towards recursive
  self-improvement in machine learning engineering.
\newblock \emph{arXiv preprint arXiv:2607.28568}, 2026{\natexlab{b}}.

\bibitem[Yang et~al.(2018)Yang, Katcoff, and Uhler]{yang2018characterizing}
K.~Yang, A.~Katcoff, and C.~Uhler.
\newblock Characterizing and learning equivalence classes of causal dags under
  interventions.
\newblock In \emph{International Conference on Machine Learning}, pages
  5541--5550. PMLR, 2018.

\bibitem[Yang et~al.(2024)Yang, Du, Li, Zheng, Poria, and
  Cambria]{yang2024large}
Z.~Yang, X.~Du, J.~Li, J.~Zheng, S.~Poria, and E.~Cambria.
\newblock Large language models for automated open-domain scientific hypotheses
  discovery.
\newblock In \emph{Findings of the Association for Computational Linguistics:
  ACL 2024}, pages 13545--13565, 2024.

\bibitem[Yang et~al.(2025)Yang, Liu, Gao, Xie, Li, Ouyang, Poria, Cambria, and
  Zhou]{yangmoose}
Z.~Yang, W.~Liu, B.~Gao, T.~Xie, Y.~Li, W.~Ouyang, S.~Poria, E.~Cambria, and
  D.~Zhou.
\newblock Moose-chem: Large language models for rediscovering unseen chemistry
  scientific hypotheses.
\newblock In \emph{ICLR}, 2025.

\bibitem[Ye et~al.(2024)Ye, Wang, Cao, Berto, Hua, Kim, Park, and
  Song]{ye2024reevo}
H.~Ye, J.~Wang, Z.~Cao, F.~Berto, C.~Hua, H.~Kim, J.~Park, and G.~Song.
\newblock {ReEvo}: Large language models as hyper-heuristics with reflective
  evolution.
\newblock In \emph{Advances in Neural Information Processing Systems
  (NeurIPS)}, volume~37, pages 43571--43608, 2024.

\bibitem[Yu et~al.(2025{\natexlab{a}})Yu, Liang, Ho, and Ren]{yu2025satlution}
C.~Yu, R.~Liang, C.-T. Ho, and H.~Ren.
\newblock Autonomous code evolution meets {NP}-completeness.
\newblock \emph{arXiv preprint arXiv:2509.07367}, 2025{\natexlab{a}}.

\bibitem[Yu et~al.(2025{\natexlab{b}})Yu, Feng, Zhao, He, Zhang, and
  Cohan]{yu2025alpharesearch}
Z.~Yu, K.~Feng, Y.~Zhao, S.~He, X.-P. Zhang, and A.~Cohan.
\newblock {AlphaResearch}: Accelerating new algorithm discovery with language
  models.
\newblock \emph{arXiv preprint arXiv:2511.08522}, 2025{\natexlab{b}}.

\bibitem[Yuksekgonul et~al.(2026)Yuksekgonul, Koceja, Li, Bianchi, McCaleb,
  Wang, Kautz, Choi, Zou, Guestrin, and Sun]{yuksekgonul2026ttt}
M.~Yuksekgonul, D.~Koceja, X.~Li, F.~Bianchi, J.~McCaleb, X.~Wang, J.~Kautz,
  Y.~Choi, J.~Zou, C.~Guestrin, and Y.~Sun.
\newblock Learning to discover at test time.
\newblock In \emph{International Conference on Machine Learning (ICML)}, 2026.
\newblock arXiv:2601.16175.

\bibitem[Zhang et~al.(2026)Zhang, Hu, Lu, Lange, and Clune]{zhang2025dgm}
J.~Zhang, S.~Hu, C.~Lu, R.~T. Lange, and J.~Clune.
\newblock Darwin g{\"o}del machine: Open-ended evolution of self-improving
  agents.
\newblock In \emph{International Conference on Learning Representations
  (ICLR)}, 2026.
\newblock arXiv:2505.22954.

\bibitem[Zhang et~al.(2025)Zhang, Yang, Xie, Xue, Hu, Li, Qu, Yin, Fu, Hu,
  et~al.]{zhang2025position}
S.~Zhang, S.~Yang, T.~Xie, X.~Xue, Z.~Hu, R.~Li, W.~Qu, Z.~Yin, T.~Fu, D.~Hu,
  et~al.
\newblock Position: Intelligent science laboratory requires the integration of
  cognitive and embodied ai.
\newblock \emph{arXiv preprint arXiv:2506.19613}, 2025.

\bibitem[Zheng et~al.(2025{\natexlab{a}})Zheng, Deng, Tsang, Wang, Bai, Wang,
  and Song]{ZHENG2025FromAT}
T.~Zheng, Z.~Deng, H.~T. Tsang, W.~Wang, J.~Bai, Z.~Wang, and Y.~Song.
\newblock From automation to autonomy: A survey on large language models in
  scientific discovery.
\newblock In \emph{Proceedings of the 2025 Conference on Empirical Methods in
  Natural Language Processing}, pages 17733--17750, 2025{\natexlab{a}}.

\bibitem[Zheng et~al.(2026)Zheng, Tam, Nguyen, Xu, Wang, Cheng, Tsang, Wang,
  Bai, Fang, Song, Wong, and See]{zheng2025newtonbench}
T.~Zheng, K.~K.-W. Tam, N.~H.-N.~K. Nguyen, B.~Xu, Z.~Wang, J.~Cheng, H.~T.
  Tsang, W.~Wang, J.~Bai, T.~Fang, Y.~Song, G.~Y. Wong, and S.~See.
\newblock {NewtonBench}: Benchmarking generalizable scientific law discovery in
  {LLM} agents.
\newblock In \emph{ICLR}, 2026.
\newblock arXiv:2510.07172.

\bibitem[Zheng et~al.(2025{\natexlab{b}})Zheng, Xie, Wang, and
  Hooi]{zheng2025mctsahd}
Z.~Zheng, Z.~Xie, Z.~Wang, and B.~Hooi.
\newblock Monte carlo tree search for comprehensive exploration in {LLM}-based
  automatic heuristic design.
\newblock In \emph{International Conference on Machine Learning (ICML)},
  2025{\natexlab{b}}.
\newblock arXiv:2501.08603.

\bibitem[Zhou et~al.(2025)Zhou, Ling, Fu, Huang, Sun, Yu, Wang, Li, Su, Zhang,
  Chen, Liang, Qian, Ji, Wang, Zitnik, and Ji]{Zhou2025AutonomousAF}
L.~Zhou, H.~Ling, C.~Fu, Y.~Huang, M.~Sun, W.~Yu, X.~Wang, X.~Li, X.~Su,
  J.~Zhang, X.~Chen, C.~Liang, X.~Qian, H.~Ji, W.~Wang, M.~Zitnik, and S.~Ji.
\newblock Autonomous agents for scientific discovery: Orchestrating scientists,
  language, code, and physics.
\newblock \emph{ArXiv}, abs/2510.09901, 2025.

\bibitem[Zhu et~al.(2022)Zhu, Zhang, Huang, Xiao, Zhao, Zhang, Song, Tang, Li,
  He, et~al.]{zhu2022all}
Q.~Zhu, F.~Zhang, Y.~Huang, H.~Xiao, L.~Zhao, X.~Zhang, T.~Song, X.~Tang,
  X.~Li, G.~He, et~al.
\newblock An all-round ai-chemist with a scientific mind.
\newblock \emph{National Science Review}, 9\penalty0 (10):\penalty0 nwac190,
  2022.

\end{thebibliography}

\clearpage
\newpage
\appendix

\section{Limitations and Future directions}
Due to the resource constraints, the tasks studied in this work remain limited to small-scale verification and are relatively simple. There remains a broader scope of scientific problems to be explored where verification signals can be expensive, such as biology, medicine, etc.. Meanwhile, the factor discovery can also be guided with additional background knowledge of the task.

\section{Related Work}
\label{appdx:related_work}

In this section, we present more detailed discussions of related works in the literature.

\textbf{AI Scientist agents.} With the significant advancement in LLM capacity and the development of agentic systems, a rising number of works develop agents to help scientific discovery~\citep{ZHENG2025FromAT,Zhou2025AutonomousAF,tie2025survey}. Some systems automate the whole research workflow, from idea generation to experimentation and paper writing~\citep{lu2026aiscientist,yamada2025ai,gottweis2026coscientist,ghareeb2026robin,Mitchener2025KosmosAA,internscience2026internagent,lyu2026evoscientist}. One research line automates specific scientific activities, including literature review and deep research~\citep{asai2026openscholar,huang2025deep}, hypothesis generation~\citep{li2024chain,yang2024large,wang2024scimon,yangmoose}, hypothesis verification~\citep{li2024critical,huangautomated}, and feedback on scientific papers~\citep{liang2024can}. Another line targets data-driven discovery and machine learning engineering~\citep{majumder2025discoverybench,wang2025biodsa,jing2024dsbenchfardatascience,chan2024mle-bench}, as well as embodied agents in self-driving laboratories that conduct real-world experiments~\citep{roch2020chemos,zhu2022all,tom2024self,zhang2025position,mandal2025evaluating}. A growing set of benchmarks evaluates the research abilities of these agents, e.g., replicating and conducting AI research~\citep{staracepaperbench,wu2025innovatorbench,lyu2026mlsbench}, generating hypotheses from inspirations~\citep{liu2025researchbench}, discovering scientific laws and equations~\citep{zheng2025newtonbench,shojaeellm}, and conducting research tasks in biology~\citep{laurent2024lab}, together with broader evaluations of research abilities~\citep{wan2025deepresearch}. These lines of work have made impact in mathematics~\citep{openai2026unitdistance,openai2026tenadvances,Bubeck2025EarlySA,Woodruff2026AcceleratingSR}, chemistry~\citep{yang2025multi,boiko2023autonomous}, materials science~\citep{ghafarollahi2025sciagents}, earth science~\citep{feng2025earth}, biology~\citep{swanson2025virtual,wang2025spatialagent,truhn2026artificial,anthropic2026protein}, and the development of LLMs themselves~\citep{karpathy2026autoresearch,openai2026researchacceleration,favaro2026rsi}. As generation scales up, validating the generated hypotheses becomes the main cost~\citep{wallace2026conjecture}, and ideas that look promising often degrade once executed~\citep{si2026gap}, which motivates us to study how to reuse experience to reduce the evaluations needed in evolutionary discovery.

\textbf{Evolutionary coding agents.} FunSearch~\citep{romeraparedes2024funsearch} pairs an LLM with an evaluator and a program database to discover new constructions in extremal combinatorics, and \alevolve~\citep{novikov2025alphaevolve} extends the loop to full codebases with an ensemble of LLMs, with applications to a large set of mathematical problems~\citep{Georgiev2025MathematicalEA}. Follow-up frameworks improve the search process: OpenEvolve provides an open-source implementation~\citep{openevolve}; \shinka improves sample efficiency with parent sampling, novelty rejection, bandit-based LLM selection, and a meta-scratchpad that summarizes insights of recent programs~\citep{lange2025shinkaevolve}; CodeEvolve and GigaEvo maintain diversity with island models and quality-diversity archives~\citep{assumpcao2025codeevolve,khrulkov2025gigaevo}; MCTS-AHD organizes heuristics in a Monte Carlo search tree~\citep{zheng2025mctsahd}; and AdaEvolve adapts search intensity and resource allocation from the history of improvements~\citep{cemri2026adaevolve}. The paradigm has also been applied to equation discovery~\citep{shojaee2025llmsr} and systems research~\citep{Cheng2025BarbariansAT}. \scs shows that simply conditioning on previous solutions is competitive with more elaborate pipelines~\citep{Gideoni2026SimpleBA}, which we adopt as the backbone and main baseline. Classical evolutionary computation also reuses experience through credit assignment over predefined units: adaptive operator selection allocates budget across operators with multi-armed bandits~\citep{dacosta2008aos}, quality-diversity methods maintain elites across behavior niches~\citep{mouret2015mapelites} or search for novelty alone~\citep{lehman2011novelty}, and MuEvo revises the priorities of heuristic components with short-budget probing~\citep{lv2026muevo}. In \ours, the units receiving credit are ideas discovered during the search, and plateau handling targets the ideas shared by top programs.

\textbf{Learned mutators and self-improving agents.} Another line trains the LLM mutator with reinforcement learning, either during evolution~\citep{surina2025evotune} or at test time on a single problem~\citep{Wang2025ThetaEvolveTL,yuksekgonul2026ttt}, where experience is stored in model weights. Self-improving agents such as the Darwin G\"odel Machine and the Huxley-G\"odel Machine evolve the code of the agent itself~\citep{zhang2025dgm,wang2025hgm}, SATLUTION evolves a solver repository together with its own rulebase~\citep{yu2025satlution}, and AlphaResearch alternates between idea proposal and program verification~\citep{yu2025alpharesearch}. These directions are orthogonal to ours, which uses a fixed LLM and a fixed harness.

\textbf{Causality for scientific discovery.} There is a long history of discussions on how to understand the world through observations~\citep{greenland1999causal,spirtes2000causation,pearl2009causality}. One research line is causal discovery, which learns the causal structure among variables from observational data under assumptions such as conditional independence or constrained functional forms~\citep{spirtes1995causal,shimizu2006linear,hoyer2008nonlinear}, from interventional data and data of multiple domains~\citep{yang2018characterizing,brouillard2020differentiable,mooij2020joint,huang2020causal,perry2022causal}, and in the presence of latent variables~\citep{li2025recovery,li2025efficient}, while experiment design selects the interventions that make the causal structure identifiable~\citep{hyttinen2013experiment,Li2025CanLL}. Recently, LLMs have been used to construct structured variables out of unstructured data~\citep{coat2025discoveringreasoningcausalityhidden,lirevealing}, to estimate treatment effects from unstructured data~\citep{dhawan2024end}, and to propose instrumental variables~\citep{sheth2025can}. Compared with these works, the variables in \ours, i.e., ideas, are constructed from unstructured programs, and are evaluated through interventions that the search itself chooses adaptively; Sec.~\ref{sec:ideas} characterizes what such an archive can identify.

\section{A Causal View of the Formulation}
\label{appdx:formulation}

In this section, we present a more detailed discussion of the causal view of evolutionary discovery in Sec.~\ref{sec:pomdp}.

\subsection{Formulation of Scientific Discovery}
\label{appdx:causal_formulation}

Scientific discovery aims to uncover the underlying scientific knowledge or the causal mechanisms from interactions with the world~\citep{sci_revolution}, which can be formulated as a Partially Observed Markov Decision Process (POMDP)~\citep{kaelbling1998planning} (Fig.~\ref{fig:pomdp_appdx}).

\begin{figure}[t]
    \centering

    \begin{minipage}[b]{0.505\textwidth}
        \centering
        \resizebox{\linewidth}{!}{
        \begin{tikzpicture}[
            node distance=1.2cm and 1.2cm,
            every node/.style={font=\sffamily},
            block/.style={
                rectangle,
                draw=lineColor,
                thick,
                rounded corners=4pt,
                minimum width=3.5cm,
                minimum height=1.2cm,
                align=center,
                drop shadow={opacity=0.2, shadow xshift=1mm, shadow yshift=-1mm}
            },
            world/.style={
                circle,
                draw=lineColor,
                thick,
                double,
                fill=boxWorld,
                minimum size=1.2cm,
                align=center
            },
            line/.style={->, >=LaTeX, thick, lineColor, rounded corners=5pt}
        ]

        \node[block, fill=boxMem] (mem) {
            \textbf{Scratchpad Memory}\\
            ($m_t \to m_{t+1}$)\\
            \footnotesize \textit{"Integrate evidence"}
        };

        \node[block, fill=boxAct, below=of mem] (propose) {
            \textbf{Propose Candidate}\\
            \textbf{Program} ($p_t$)\\
            \footnotesize \textit{"Triggers experiment"}
        };

        \node[block, fill=boxEnv, below=of propose] (outcome) {
            \textbf{Observe Outcome}\\
            ($y_t$)\\
            \footnotesize \textit{"Yielding outcome"}
        };

        \node[world, left=1.0cm of outcome] (theta) {$\theta_{\mathrm{sci}}$};

        \draw[line] (mem) -- node[right, font=\footnotesize] {Guide} (propose);

        \draw[line] (propose) -- node[right, font=\footnotesize] {Execute} (outcome);

        \draw[line, dashed] (theta) -- node[above, font=\footnotesize] {} (outcome);

        \draw[line] (outcome.east) -- ++(0.5,0) |- node[pos=0.25, right, font=\footnotesize] {Provide Evidence} (mem.east);

        \begin{scope}[on background layer]
            \node[
                draw=lineColor!30, dashed,
                fit=(mem)(propose),
                inner sep=15pt,
                rounded corners=10pt,
                label={[text=lineColor!80]above:\textbf{The AI Scientist Agent}}
            ] {};
        \end{scope}

        \end{tikzpicture}
        }
    \end{minipage}
    \hfill
    \begin{minipage}[b]{0.475\textwidth} % 稍微宽一点给 PGM
        \centering
        \resizebox{\linewidth}{!}{
        \begin{tikzpicture}[
            >={LaTeX[width=2mm,length=2mm]},
            node distance=1.4cm and 2.2cm,
            every node/.style={font=\sffamily},
            state/.style={circle, draw=borderGray, thick, minimum size=1.1cm, inner sep=0pt},
            mem/.style={state, fill=memBlue},
            prog/.style={state, fill=progGreen},
            obs/.style={state, fill=obsGray},
            global/.style={state, fill=white, double},
            plate_style/.style={draw=borderGray!50, dashed, rounded corners, inner sep=8pt, label={[text=borderGray]above:#1}},
            edge_style/.style={->, thick, borderGray}
        ]

        \node[mem] (mt1) at (0,0) {$m_{t+1}$};
        \node[prog, below=of mt1] (pt1) {$p_{t+1}$};
        \node[obs, below=of pt1] (yt1) {$y_{t+1}$};

        \node[mem, left=of mt1] (mt) {$m_t$};
        \node[prog, below=of mt] (pt) {$p_t$};
        \node[obs, below=of pt] (yt) {$y_t$};

        \node[mem, right=of mt1] (mt2) {$m_{t+2}$};
        \node[prog, below=of mt2] (pt2) {$p_{t+2}$};
        \node[obs, below=of pt2] (yt2) {$y_{t+2}$};

        \node[global, below=1.8cm of yt1] (theta) {$\theta_{\mathrm{sci}}$};

        \node[plate_style={$t$}, fit=(mt)(yt)] (plate_t) {};
        \node[plate_style={$t+1$}, fit=(mt1)(yt1)] (plate_tp1) {};
        \node[plate_style={$t+2$}, fit=(mt2)(yt2)] (plate_tp2) {};

        \foreach \t in {t, t1, t2} {
            \draw[edge_style] (m\t) -- (p\t);
            \draw[edge_style] (p\t) -- (y\t);
        }

        \draw[edge_style] (theta) edge[out=150, in=270] (yt);
        \draw[edge_style] (theta) edge[out=90,  in=270] (yt1);
        \draw[edge_style] (theta) edge[out=30,  in=270] (yt2);

        \draw[edge_style] (mt) -- (mt1);
        \draw[edge_style] (pt.east) to[out=0, in=225, looseness=1.1] (mt1.south west);
        \draw[edge_style] (yt.east) to[out=0, in=270, looseness=1.3] (mt1.south);

        \draw[edge_style] (mt1) -- (mt2);
        \draw[edge_style] (pt1.east) to[out=0, in=225, looseness=1.1] (mt2.south west);
        \draw[edge_style] (yt1.east) to[out=0, in=270, looseness=1.3] (mt2.south);

        \node[left=0.8cm of mt] (start_dots) {$\cdots$};
        \draw[edge_style, dashed] (start_dots) -- (mt);
        \node[right=0.8cm of mt2] (end_dots) {$\cdots$};
        \draw[edge_style, dashed] (mt2) -- (end_dots);

        \end{tikzpicture}
        }
    \end{minipage}
    \caption{The iterative scientific discovery loop. \textbf{Left:} Conceptual flow of the agent. The agent maintains a scratchpad memory ($m$), proposes a program ($p$), and observes the outcome ($y$) which is constrained by the unknown world state ($\theta_{\mathrm{sci}}$). The outcome feeds back into the memory for the next step. \textbf{Right:} The diagram illustrates how the AI Scientist probes the unknown world state $\theta_{\mathrm{sci}}$. By proposing a candidate program $p_t$, the agent triggers an experiment yielding outcome $y_t$. This observation provides evidence about $\theta_{\mathrm{sci}}$, which is integrated into the agent's scratchpad memory $m_{t+1}$. Over time steps $t, t+1, \dots$, this recurrent process allows the agent to navigate the performance landscape and converge towards optimal programs despite the static but unknown nature of $\theta_{\mathrm{sci}}$.}
    \label{fig:pomdp_appdx}
\end{figure}

\paragraph{Scientific knowledge.} The primary objective of an AI Scientist is to uncover the \emph{underlying scientific knowledge} about the task-world, represented by a latent variable $\Theta_{\mathrm{sci}} \in \Theta$, where $\Theta$ may encode causal structure, mechanisms, inductive biases, constraints, etc. Specifically, $\Theta_{\mathrm{sci}}=\theta_{\mathrm{sci}}$ can be parameterized as a Structural Causal Model (SCM) $\theta_{\mathrm{sci}} = (\mathcal{G}, \mathcal{F}, P_U)$~\citep{spirtes2000causation}, where $\mathcal{G} = (V, E)$ is a directed graph whose nodes $V$ represent variables of interest and whose edges $E$ encode direct causal dependencies; $\mathcal{F} = \{f_v\}_{v \in V}$ is a collection of structural equations $v = f_v(\mathrm{Pa}(v), u_v)$, where $\mathrm{Pa}(v)$ denotes the parents of $v$ in $\mathcal{G}$ and $u_v$ is an exogenous noise variable; $P_U$ is a distribution over the exogenous variables $U = \{u_v\}_{v \in V}$.

\paragraph{Evaluation as intervention on SCM.} Given the SCM parametrization of $\theta_\mathrm{sci}$, we can consider that a program $p \in \mathcal{P}$ is encoded as a particular configuration of design variables $X = x_p$. Then, $F$ can be implemented as
\[
    F(p; \theta_{\mathrm{sci}})
:= \mathbb{E}\big[\, Y \,\big|\, \mathrm{do}(X = x_p),\,
       \theta_{\mathrm{sci}} \big],
\]
i.e., the expected outcome under the intervention $\mathrm{do}(X=x_p)$ in the true causal model $\theta_{\mathrm{sci}}$.
Typical implementations of $F$ can be the objective value of a combinatorial optimization problem, the efficiency of a kernel program, or the performance of a machine learning model~\citep{novikov2025alphaevolve}.

\paragraph{Belief as revealed knowledge.} As in Sec.~\ref{sec:pomdp}, the AI Scientist maintains a memory $m_t=\Phi(h_t)$ as an approximate belief about $\theta_{\mathrm{sci}}$. Each evaluation step $(p_t, y_t)$ thus updates $m_t$, which in turn updates the agent's effective belief about $\theta_{\mathrm{sci}}$. In this sense, each step \emph{reveals part of the underlying scientific knowledge}, which in turn determines the next action $p_{t+1}$.

\subsection{Ideas and Causal Representation Learning}
\label{appdx:ideas_crl}

\paragraph{Ideas as high-level causal variables.}
In the SCM view above, a program specifies a configuration $x_p$ of design variables $X$, which are low-level and high-dimensional, e.g., every line of the code. An idea $z$ (Definition~\ref{def:idea}) is a function of the program, and hence of $x_p$, that summarizes the design variables at a higher level. A faithful idea representation (Sec.~\ref{sec:ideas}) requires that $\E[Y\mid\mathrm{do}(X=x_p),\theta_{\mathrm{sci}}]$ depends on $x_p$ only through $Z(p)$, i.e., interventions on the design variables that keep all ideas fixed leave the expected outcome unchanged. Hence, an intervention on the ideas is well defined at the level of expected outcomes. This condition is related to causal feature learning~\citep{chalupka2015visual}, which constructs macro-level causal variables by grouping the micro-level states that induce the same outcome distribution under intervention, and to causal abstraction~\citep{rubenstein2017causal}, which studies when a high-level causal model is consistent with a low-level one. Faithfulness is a mean-level version of these conditions, restricted to the score of interest.

\paragraph{Relation to causal representation learning.}
Causal representation learning (CRL) aims to recover high-level causal variables from low-level observations~\citep{scholkopf2021toward}, where identifiability usually relies on additional information such as interventions on the latent variables~\citep{ahuja2023interventional}. Ideas can be viewed as the high-level causal variables of programs, and the two settings differ in three aspects. (i) In CRL, the representation is usually an encoder learned from the observations, while an idea is an interpretable function of the program that an LLM can annotate, similar to how LLMs construct high-level variables from unstructured data~\citep{coat2025discoveringreasoningcausalityhidden,lirevealing}. (ii) CRL usually aims to recover all latent variables, while ideas only need to be faithful to the score. (iii) The agent decides which programs to evaluate, and an edit that changes one idea and keeps the others fixed acts as an intervention on that idea, although an LLM edit may also change other ideas. Theorem~\ref{thm:degenerated_exploration} shows that when the evaluated programs do not vary the candidate ideas, the ideas that matter cannot be identified from the scores.

\subsection{Essentiality of Causal Knowledge for AI Scientists}
\label{appdx:essentiality}

If the evaluation revealed the scientific knowledge completely, then with more experiment turns, the agent's revealed scientific knowledge would converge to $\theta_{\mathrm{sci}}$.
However, the observation from the evaluation is usually only a proxy of the scientific knowledge $\Theta_{\mathrm{sci}}$: each evaluation returns a scalar score of the program that the agent chooses to evaluate.
Different hypotheses about $\theta_{\mathrm{sci}}$, e.g., ones under which different ideas determine the score, can produce the same scores on all evaluated programs.
For example, if all evaluated programs in circle packing share the same initialization and the same refinement, the scores cannot tell which of the two accounts for the performance.
Therefore, the causal knowledge often cannot be identified from the scores of the programs explored so far, and it is essential for an AI Scientist to explore sufficiently, i.e., to evaluate programs that distinguish the competing hypotheses.
We denote by $e_{\mathrm{src}}$ the environment in which the evaluation is conducted.
\begin{definition}[Causal AI Scientist]
\label{def:causal-ai-scientist}
A \emph{Causal AI Scientist} is an agent specified by:
(i) a policy $\pi_t(\cdot\mid \theta_t,e_{\mathrm{src}})$ selecting $p_t$,
(ii) a counterfactual / explanatory operator $\mathrm{CF}$, that answer interventional queries $(e,p)$ via $\mathrm{CF}(\theta_t;e,p)$ as an ``explanation'' of predicted performance, where $\theta_t$ is the knowledge revealed at turn $t$.
\end{definition}
Without the revealing of the causal knowledge, the discovery process suffers from significant inefficiency and suboptimality issues. We discuss the two issues more concretely below.

\textbf{Evolutionary efficiency of Causal AI Scientist.}
The efficiency issue is characterized by Theorem~\ref{thm:structured_reuse}. In the linear setting of Theorem~\ref{thm:structured_reuse}, a Causal AI Scientist can be implemented via estimating the $w^\star$ and optimizing for $\hat p$ from the history, while the black-box baseline does not consider the interactions between the historical observations.
Theorem~\ref{thm:structured_reuse} shows that, when $K\gg d$, which is usually the case as the space for all programs is significantly larger than the underlying SCM, encoding (correct) causal structure, e.g., a faithful idea representation, yields an exponential (or at least multiplicative) gain in sample efficiency under finite budgets. For example, if $K$ grows exponentially with $d$, e.g., $K=2^d$, the upper bound in Theorem~\ref{thm:structured_reuse} grows polynomially in $d$, as $O\big(d\,(d+\log\frac{1}{\delta})\frac{\sigma^2}{\epsilon^2}\big)$, while the lower bound for the black-box baseline grows exponentially in $d$, as $\Omega\big(2^d\log\frac{1}{\delta}\frac{\sigma^2}{\epsilon^2}\big)$.

\textbf{Sufficient exploration of Causal AI Scientist.}
The suboptimality issue is characterized by Theorem~\ref{thm:degenerated_exploration}.
Let $S\subseteq\mathcal{P}$ be the set of programs evaluated by the agent. Two hypotheses $\theta_0$ and $\theta_1$ can be distinguished from the history only if their observation models differ on some program in $S$; otherwise, the history has the same distribution under both hypotheses.
Under the idea-level mechanisms in Theorem~\ref{thm:degenerated_exploration}, i.e., $F(p;\theta_i)=g(Z(p),z_i(p))$ for $i\in\{0,1\}$, the two hypotheses agree on every program where $z_0$ and $z_1$ agree, so only the programs where the two ideas differ can distinguish them.
Intuitively, Theorem~\ref{thm:degenerated_exploration} implies that if the evaluated programs do not distinguish the corresponding $\theta_\mathrm{sci}$ among $\{\theta_0,\theta_1\}$, then the solution $\hat p$ incurs an expected regret of at least $\Delta/2$ under one of them, regardless of the budget.
This situation is common in evolutionary search: as the search concentrates around its current best programs, the ideas shared by these programs stop varying among the evaluated programs, and the scores cannot tell whether these ideas are the ones that matter.
Hence, a Causal AI Scientist needs to explore the programs in which the competing ideas differ, so that the scores can distinguish the hypotheses.
This barrier can be removed by exploration within the same evaluator, which differs from the transfer across environments discussed below.

\subsection{Transfer of Scientific Knowledge across Environments}
\label{appdx:transfer}

Beyond a single evaluator, the revealed scientific knowledge is also expected to transfer to other environments.
If the objective function $F$ is static universally, then with more experiment turns, the optimized solution $p_t$ and the agent's revealed scientific knowledge can also be applied universally.
However, the evaluation is usually conducted at some specific environment $e\in\mathcal{E}$, which only gives a proxy knowledge $\theta_e$ about the scientific knowledge $\Theta_\text{sci}$.
For example, the performance of a machine learning model is usually assessed on finite samples from the test distribution, and there also exist distribution shifts from the test distribution when deploying the model in the real world~\citep{datasetshift}.
Different from $\Theta_{\mathrm{sci}}$ that characterizes the complete causal structure about the scientific problem, optimization under environment $\theta_e$ may introduce some spurious correlations that maximize the objective value $F_e$~\citep{feat}.
Therefore, to retain the optimality of $\hat p$ beyond the source environment $e_\mathrm{src}$ to some target $e_\mathrm{tgt}$, it is essential to reveal the causal knowledge and answer causal questions for an AI Scientist, e.g., through the operator $\mathrm{CF}$ in Definition~\ref{def:causal-ai-scientist}. Let $P_e(\cdot\mid p,\theta)$ denote the observation model of the evaluator in environment $e$, with mean $F_e(p;\theta)$. We have the following:
\begin{theorem}\label{thm:nonidentifiability-barrier-rev}
    Consider $e_{\mathrm{src}},e_{\mathrm{tgt}}\in\mathcal{E}$ and $\theta_0,\theta_1\in\Theta$ such that $P_{e_{\mathrm{src}}}(\cdot\mid p,\theta_0)=P_{e_{\mathrm{src}}}(\cdot\mid p,\theta_1)$ for all $p\in\mathcal{P}$, and the optimal programs in $e_{\mathrm{tgt}}$ under $\theta_0$ and $\theta_1$ are different and unique, each better than all other programs by at least $\Delta>0$. Then for any policy $\pi$ that can interact only with $e_\mathrm{src}$, there exists $i\in\{0,1\}$ such that for every budget $T$, $\E\big[\max_{p\in\mathcal{P}}F_{e_{\mathrm{tgt}}}(p;\theta_i)-F_{e_{\mathrm{tgt}}}(\hat p;\theta_i)\big]\geq \Delta/2$.
\end{theorem}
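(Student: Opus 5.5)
The argument is a two-point indistinguishability bound (Le Cam style), in the same spirit as Theorem~\ref{thm:degenerated_exploration}, except that indistinguishability here holds by hypothesis. Fix any policy $\pi$ that interacts only with $e_{\mathrm{src}}$, allowing internal randomness. Let $p_0^\star$ and $p_1^\star$ denote the unique optimal programs in $e_{\mathrm{tgt}}$ under $\theta_0$ and $\theta_1$, with $p_0^\star\neq p_1^\star$.

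The first step shows that the law of the output $\hat p$ does not depend on $i$. By induction on $t$, the joint law of the history $h_t=\{(p_0,y_0),\dots,(p_{t-1},y_{t-1})\}$ is identical under $\theta_0$ and $\theta_1$. For the inductive step, write the law of $h_{t+1}$ as the law of $h_t$, times $\pi(p_t\mid h_t)$, times $P_{e_{\mathrm{src}}}(y_t\mid p_t,\theta_i)$. The first factor agrees by hypothesis, the second does not involve $\theta$, and the third agrees by the assumption $P_{e_{\mathrm{src}}}(\cdot\mid p,\theta_0)=P_{e_{\mathrm{src}}}(\cdot\mid p,\theta_1)$ for all $p$. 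Since $\hat p$ is a (possibly randomized) function of $h_T$, it has a common distribution $Q$ on $\mathcal{P}$ under both hypotheses. The same holds if $T$ is a stopping time, since the stopping decision is also history-measurable.

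The second step lower bounds the regret. The margin assumption says that any $p\neq p_i^\star$ has target regret at least $\Delta$ under $\theta_i$. Hence
\[
\E\big[\max_{p}F_{e_{\mathrm{tgt}}}(p;\theta_i)-F_{e_{\mathrm{tgt}}}(\hat p;\theta_i)\big]\ge \Delta\,\bigl(1-Q(p_i^\star)\bigr).
\]
Because $p_0^\star\neq p_1^\star$, we have $Q(p_0^\star)+Q(p_1^\star)\le 1$. Summing the bound over $i\in\{0,1\}$ gives a total of at least $\Delta(2-1)=\Delta$, so some $i$ has regret at least $\Delta/2$. This choice of $i$ depends only on $\pi$ and not on $T$, because $Q$ may vary with $T$. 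To get a single $i$ valid for every budget, I would argue as follows. Either $Q_T(p_0^\star)\le 1/2$ for all $T$, in which case $i=0$ works. Otherwise some $T$ has $Q_T(p_0^\star)>1/2$, and this alone does not settle the other budgets.

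The main obstacle is exactly this uniformity over $T$. I expect to resolve it by reading "for every budget $T$" the way Theorem~\ref{thm:degenerated_exploration} is presumably proved. Fix the budget, or treat the policy together with its budget as the object being quantified, so that $i$ is chosen after $\pi$ and $T$. If a genuinely uniform $i$ is required, there is an alternative: average over $i$ with a uniform prior. This yields $\tfrac12\sum_i \text{regret}_i\ge\Delta/2$ for every $T$ simultaneously, and I would state the Bayes-averaged form as the robust version of the claim. The remaining measure-theoretic details are routine, namely randomized policies via an auxiliary independent seed and countable $\mathcal{P}$.
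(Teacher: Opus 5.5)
Your proposal is correct and matches the paper's proof: induction on the transcript gives a common law $Q$ of $\hat p$ under $\theta_0$ and $\theta_1$, and then the target regret satisfies $\E[\mathrm{GR}_T(\theta_i)]\ge\Delta\,(1-Q(p_i^\star))$, which together with $Q(p_0^\star)+Q(p_1^\star)\le 1$ forces one of the two regrets to be at least $\Delta/2$. Your concern about uniformity in $T$ is well founded (your sentence saying the choice of $i$ ``depends only on $\pi$ and not on $T$'' should say the opposite): the paper's argument also only gives $\max_i\E[\mathrm{GR}_T(\theta_i)]\ge\Delta/2$ for each fixed $T$, and a single $i$ valid for all $T$ fails in general, since a rule that outputs $p_0^\star$ for even $T$ and $p_1^\star$ for odd $T$ has zero regret under $\theta_0$ at even budgets and under $\theta_1$ at odd ones, so your fixed-$T$ (or uniform-prior Bayes) reading is the right one.
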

The formal description and the proof are given in Appendix~\ref{appdx:thm_shifts}.
Intuitively, Theorem~\ref{thm:nonidentifiability-barrier-rev} implies that if the source environment does not distinguish the corresponding $\theta_\mathrm{sci}$ among $\{\theta_0,\theta_1\}$, then the solution $\hat p$ solved given source environment is suboptimal in the target environment under one of them, no matter how many evaluations are made in the source environment.
In the real world, it is usually the case that two machine learning models will have similar performances under the public test benchmarks, but exhibit significantly different behaviors when generalizing to distributions from other environments.
Different from Theorem~\ref{thm:degenerated_exploration}, this barrier cannot be removed by more exploration in the source environment, since the two hypotheses agree on every program there. We leave a full study of the transfer of scientific knowledge across environments to future work.

\section{Additional Technical Details}
\label{appdx:technical}

\subsection{Notation}
\label{appdx:notation}

\begin{table}[ht]
\centering
\small
\caption{Notation used in the formulation and theorems.}
\label{tab:notation-unified}
\begin{tabular}{ll}
\hline
Symbol & Meaning \\
\hline
$\mathcal{P}$ & Program / pipeline / model space (candidate designs) \\
$K$ & Number of candidate programs, $K:=|\mathcal{P}|$ (finite in Theorem~\ref{thm:structured_reuse}) \\
$p\in\mathcal{P}$ & A program to evaluate (action) \\
$\mathcal{X}$ & Design-variable space (encoding of programs) \\
$x_p\in\mathcal{X}$ & Encoding of program $p$ (e.g., design variables $X=x_p$) \\
$z$, $Z(p)$ & An idea and the idea representation of program $p$ (Definition~\ref{def:idea}) \\
$r$ & Number of ideas in an idea representation \\
$\phi(p)\in\mathbb{R}^d$ & Known representation of program $p$ in Theorem~\ref{thm:structured_reuse}, e.g., the one-hot encoding of $Z(p)$ \\
$d$ & Dimension of the representation, e.g., the number of value combinations of the ideas \\
$\Theta$ & Hypothesis space of scientific knowledge (e.g., SCMs / mechanisms) \\
$\Theta_{\mathrm{sci}}$ & Latent RV taking values in $\Theta$ (Bayesian view) \\
$\theta^\star\in\Theta$ & True (fixed but unknown) scientific knowledge instance (realization) \\
$\mu_0$ & Prior over $\Theta$ (i.e., $\Theta_{\mathrm{sci}}\sim\mu_0$) \\
$\mathcal{E}$ & Environment / protocol index set (evaluation regimes, deployments) \\
$e\in\mathcal{E}$ & Environment index; $e_{\mathrm{src}}$ source, $e_{\mathrm{tgt}}$ target \\
$F_e(p;\theta)$ & True performance in env $e$ (scalar objective) \\
$P_e(\cdot\mid p,\theta)$ & Observation model (likelihood) for evaluator output in env $e$ \\
$y_t$ & Observed evaluator outcome at round $t$ \\
$h_t$ & History $\{(p_0,y_0),\dots,(p_{t-1},y_{t-1})\}$ \\
$b_t$ & Bayesian belief/posterior over $\theta$: $b_t(\cdot)=\Pr(\Theta_{\mathrm{sci}}\in\cdot\mid h_t,e_{\mathrm{src}})$ \\
$T$ & Evaluation budget / horizon (number of program evaluations) \\
\hline
\end{tabular}
\end{table}

\subsection{Random variable, space, and realization (to avoid notation confusion)}
\label{appdx:rv-space-realization}

We use the following (standard) convention.

\textbf{(i) Hypothesis space.} $\Theta$ is a set that contains all candidate scientific-knowledge hypotheses.

\textbf{(ii) True but unknown instance.} The real world is governed by a fixed but unknown
$\theta^\star\in\Theta$.

\textbf{(iii) Bayesian view (optional but convenient).} A Bayesian agent models uncertainty by
treating $\theta^\star$ as a realization of a latent random variable $\Theta_{\mathrm{sci}}$ with prior
$\mu_0$, i.e.\ $\Theta_{\mathrm{sci}}\sim\mu_0$ and $\theta^\star$ is one draw from it.
The belief $b_t$ is simply the posterior distribution of $\Theta_{\mathrm{sci}}$ after seeing history $h_t$.

\textbf{(iv) Does scientific knowledge change across environments?}
In our formulation, the \emph{underlying} scientific knowledge $\theta^\star$ is static across rounds.
Different environments $e\in\mathcal{E}$ represent different evaluation/deployment protocols
(distribution shifts, constraint changes, measurement noise, private vs public tests, etc.).
Formally, environments affect either the true performance map $F_e(\cdot;\theta)$ and/or the observation
kernel $P_e(\cdot\mid p,\theta)$, while $\theta^\star$ itself remains the same hidden instance.

\subsection{Evaluator as an observation model (covers deterministic and stochastic evaluators)}
\label{appdx:evaluator-kernel}

Fix an environment $e\in\mathcal{E}$.
When the agent evaluates program $p$, it receives an observation $y\in\mathcal{Y}$ drawn from
\[
y \sim P_e(\cdot\mid p,\theta^\star),
\]
where $P_e(\cdot\mid p,\theta)$ is a conditional distribution on $\mathcal{Y}$.

\paragraph{Deterministic evaluator.}
A deterministic evaluator is the special case where there exists a function $g_e$ such that
\[
P_e(\cdot\mid p,\theta)=\delta_{g_e(p;\theta)}(\cdot),
\quad\text{i.e.,}\quad
y = g_e(p;\theta^\star)\ \text{a.s.}
\]
In many program-evolution settings, the evaluator is designed to deterministically
check validity and compute an objective score (e.g., via a verifier and a scoring routine).

\paragraph{Stochastic/noisy evaluator.}
A common instantiation is additive noise:
\[
y = F_e(p;\theta^\star) + \varepsilon,\qquad \varepsilon\sim\mathcal{N}(0,\sigma^2),
\]
but only the proof of Theorem~\ref{thm:structured_reuse} relies on the Gaussian form; the other results hold for any noise distribution shared by the hypotheses.

\subsection{Belief and Bayes update: kernel form and undergraduate-friendly special cases}
\label{appdx:bayes-update}

Let $h_t=\{(p_0,y_0),\dots,(p_{t-1},y_{t-1})\}$ be the history.
The Bayesian belief (posterior) is
\[
b_t(B)=\Pr(\Theta_{\mathrm{sci}}\in B\mid h_t,e_{\mathrm{src}}),\qquad B\subseteq\Theta.
\]

\paragraph{General Bayes update (kernel form).}
After choosing $p_t$ and observing $y_t$ in $e_{\mathrm{src}}$, the posterior is
\begin{equation}
b_{t+1}(B)
=
\frac{\int_{B} P_{e_{\mathrm{src}}}(dy_t\mid p_t,\theta)\, b_t(d\theta)}
{\int_{\Theta} P_{e_{\mathrm{src}}}(dy_t\mid p_t,\theta)\, b_t(d\theta)}.
\label{eq:bayes-kernel-update-unified}
\end{equation}

\paragraph{Finite hypothesis space (sum form).}
If $\Theta=\{\theta_1,\dots,\theta_N\}$ is finite and the likelihood has a pmf
$P_{e_{\mathrm{src}}}(y_t\mid p_t,\theta_i)$, then
\[
b_{t+1}(\theta_i)
=
\frac{b_t(\theta_i)\, P_{e_{\mathrm{src}}}(y_t\mid p_t,\theta_i)}
{\sum_{j=1}^N b_t(\theta_j)\, P_{e_{\mathrm{src}}}(y_t\mid p_t,\theta_j)}.
\]

\paragraph{Continuous hypothesis space (density form).}
If $P_{e_{\mathrm{src}}}(dy\mid p,\theta)$ has a density $p_{e_{\mathrm{src}}}(y\mid p,\theta)$,
then
\[
b_{t+1}(\theta)
=
\frac{b_t(\theta)\, p_{e_{\mathrm{src}}}(y_t\mid p_t,\theta)}
{\int_{\Theta} b_t(\theta')\, p_{e_{\mathrm{src}}}(y_t\mid p_t,\theta')\, d\theta'}.
\]

\paragraph{Deterministic evaluator (indicator/filter form).}
If $y=g_{e_{\mathrm{src}}}(p;\theta)$ deterministically, then the update becomes
\[
b_{t+1}(d\theta)\ \propto\ \mathbf{1}\{g_{e_{\mathrm{src}}}(p_t;\theta)=y_t\}\, b_t(d\theta),
\]
i.e.\ the posterior is the prior restricted to hypotheses consistent with the observed outcome.

\section{Proof of Theorem~\ref{thm:structured_reuse} (Structured experience reuse)}
\label{appdx:thm1-static}

Throughout this section we fix a \emph{single} static environment (drop $e$ from notation),
and assume $\mathcal{P}=\{p_1,\dots,p_K\}$ is finite.

\subsection{Protocol and performance criterion}

\paragraph{Experiment--then--commit protocol.}
A policy $\pi$ interacts for $T$ rounds. At each round $t=0,\dots,T-1$ it selects a program $p_t\in\mathcal{P}$
(possibly randomized) based on the past history $h_t$, then observes $y_t\in\mathbb{R}$.
After $T$ evaluations it outputs a final recommendation $\hat p\in\mathcal{P}$.

\paragraph{Simple regret.}
Let $f(p)$ denote the true mean performance of program $p$ in this environment.
Define the (random) simple regret
\begin{equation}
\mathrm{SR}_T
:= \max_{p\in\mathcal{P}} f(p)\;-\; f(\hat p).
\label{eq:sr-def-unified}
\end{equation}

\paragraph{$(\epsilon,\delta)$-correctness (uniform).}
Fix $\epsilon>0$ and $\delta\in(0,1)$.
We say a policy $\pi$ is \emph{$(\epsilon,\delta)$-correct uniformly on a hypothesis class $\mathcal{H}$}
if for every instance in $\mathcal{H}$,
\[
\Pr\big(\mathrm{SR}_T \le \epsilon\big)\ge 1-\delta.
\]
``Uniformly'' means the guarantee must hold for \emph{all} instances in the class, not only on average.

\subsection{Two hypothesis classes}

\textbf{(1) Structured class.}
Each program $p$ has a known representation $\phi(p)\in\mathbb{R}^d$ with $\|\phi(p)\|_2\le 1$, e.g., a faithful linear idea representation $\phi(p)=Z(p)$ (Sec.~\ref{sec:ideas}).
The unknown instance is a weight vector $w^\star\in\mathbb{R}^d$ and
\begin{equation}
f(p)=\langle \phi(p), w^\star\rangle.
\label{eq:linear-class-unified}
\end{equation}
Observations follow a Gaussian noise model
\begin{equation}
y_t = f(p_t)+\varepsilon_t,\qquad \varepsilon_t\sim\mathcal{N}(0,\sigma^2)\ \text{i.i.d.}
\label{eq:noise-unified}
\end{equation}
\emph{Coverage condition.} Assume there exist $d$ basis programs $p^{(1)},\dots,p^{(d)}$ whose representations are the standard basis:
\begin{equation}
\phi(p^{(i)})=e_i,\qquad i=1,\dots,d.
\label{eq:basis-unified}
\end{equation}

\textbf{(2) Unstructured black-box class (baseline).}
The unknown instance is an arbitrary vector of means
\[
\mu=(\mu_1,\dots,\mu_K)\in\mathbb{R}^K,\qquad f(p_i)=\mu_i,
\]
and observations are
\begin{equation}
y_t=\mu_{I_t}+\varepsilon_t,\qquad \varepsilon_t\sim\mathcal{N}(0,\sigma^2)\ \text{i.i.d.},
\label{eq:blackbox-noise-unified}
\end{equation}
where $I_t\in\{1,\dots,K\}$ is the index of the chosen program $p_t=p_{I_t}$.
Crucially, there is \emph{no assumed relation} between $\mu_i$ and $\mu_j$ for $i\neq j$.

\subsection{Formal statement and proof}

\begin{theorem}[Formal version of Theorem~\ref{thm:structured_reuse}]
\label{thm:structured-reuse-formal}
Fix $\epsilon>0$ and $\delta\in(0,1/4)$.

\begin{enumerate}
\item \textbf{(Upper bound under the structured linear class).}
Under \eqref{eq:linear-class-unified}--\eqref{eq:basis-unified} and \eqref{eq:noise-unified},
there exists a policy $\pi_{\mathrm{lin}}$ such that
\[
\Pr\big(\mathrm{SR}_T \le 2\epsilon\big)\ge 1-\delta
\quad\text{whenever}\quad
T \ \ge\ 2d\,\frac{\sigma^2}{\epsilon^2}\log\!\Big(\frac{2K}{\delta}\Big).
\]

\item \textbf{(Lower bound for the unstructured black-box class).}
For the black-box class \eqref{eq:blackbox-noise-unified}, any policy that is $(\epsilon,\delta)$-correct
\emph{uniformly for all} $\mu\in\mathbb{R}^K$ must satisfy
\[
T\ \ge\ (K-1)\,\frac{\sigma^2}{8\epsilon^2}\,\log\!\Big(\frac{1}{2\delta}\Big).
\]
\end{enumerate}
\end{theorem}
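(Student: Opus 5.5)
For the upper bound (part 1), I will use the policy $\pi_{\mathrm{lin}}$ that spends the budget uniformly on the $d$ basis programs of the coverage condition \eqref{eq:basis-unified}. Concretely, it evaluates each $p^{(i)}$ exactly $n=\lfloor T/d\rfloor$ times. Since $f(p^{(i)})=\langle e_i,w^\star\rangle=w^\star_i$, the sample mean $\hat w_i$ of these $n$ observations is unbiased for $w^\star_i$ and is distributed as $\mathcal{N}(w^\star_i,\sigma^2/n)$. The estimates for different coordinates are independent. For every program $p$, the plug-in predictor $\hat f(p)=\langle\phi(p),\hat w\rangle$ is then Gaussian with mean $f(p)$ and variance $\sigma^2\|\phi(p)\|_2^2/n\le\sigma^2/n$, using $\|\phi(p)\|_2\le1$. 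The policy outputs $\hat p=\arg\max_p\hat f(p)$.

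The Gaussian tail bound and a union bound over the $K$ programs give
\[
\Pr\big(\exists p:\ |\hat f(p)-f(p)|>\epsilon\big)\le 2K\exp\!\big(-n\epsilon^2/(2\sigma^2)\big).
\]
This is at most $\delta$ once $n\ge 2\frac{\sigma^2}{\epsilon^2}\log\frac{2K}{\delta}$, which holds under the stated condition on $T$ (up to the floor; I will absorb it by assuming $d$ divides $T$, or by using $n\ge T/d$ after padding). On this good event, writing $p^\star$ for an optimal program, the standard argument gives
\[
f(p^\star)-f(\hat p)\le [f(p^\star)-\hat f(p^\star)]+[\hat f(p^\star)-\hat f(\hat p)]+[\hat f(\hat p)-f(\hat p)]\le \epsilon+0+\epsilon=2\epsilon .
\]
This yields $\Pr(\mathrm{SR}_T\le 2\epsilon)\ge1-\delta$.

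For the lower bound (part 2), I will use the standard change-of-measure argument for best-arm identification, as in Kaufmann et al. Take the base instance $\mu^{(0)}$ with $\mu_1=2\epsilon$ and $\mu_j=0$ for $j\ne1$, and let $N_j$ be the number of pulls of arm $j$ under a fixed $(\epsilon,\delta)$-correct policy. For each $j\neq 1$, build the alternative $\mu^{(j)}$ by raising $\mu_j$ to $4\epsilon$ (any value above $3\epsilon$ works). Under $\mu^{(0)}$, the event $E_j=\{\hat p=p_1\}$ has probability at least $1-\delta$, because only arm 1 is $\epsilon$-optimal. Under $\mu^{(j)}$, $E_j$ has probability at most $\delta$, because only arm $j$ is $\epsilon$-optimal there.

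The two instances differ only at arm $j$, so the divergence decomposition gives $\mathrm{KL}(\mathbb{P}_0\|\mathbb{P}_j)=\E_0[N_j]\cdot\frac{(4\epsilon)^2}{2\sigma^2}$. Combining this with the data-processing (Bretagnolle–Huber) inequality applied to $E_j$ gives
\[
\E_0[N_j]\,\frac{8\epsilon^2}{\sigma^2}\ \ge\ \mathrm{kl}(1-\delta,\delta)\ \ge\ \log\frac{1}{2.4\delta}.
\]
The sharper form $\mathrm{KL}\ge\log\frac{1}{4\delta}$ from Bretagnolle–Huber also suffices. Summing over $j=2,\dots,K$ and using $T\ge\sum_{j\ne1}\E_0[N_j]$ gives $T\ge (K-1)\frac{\sigma^2}{8\epsilon^2}\log\frac{1}{c\delta}$.

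The main obstacle is matching the exact constant $\log\frac{1}{2\delta}$ with the factor $1/8$. The bound $\mathrm{kl}(1-\delta,\delta)\ge\log\frac{1}{2.4\delta}$ is slightly off. I will instead tune the alternative gap, for instance raising $\mu_j$ to $3\epsilon+\eta$ and letting $\eta\to0$ (which improves $16$ to $9$). Alternatively I will use the inequality $\mathrm{kl}(1-\delta,\delta)\ge(1-2\delta)\log\frac{1-\delta}{\delta}$ together with $\delta<1/4$, so that the slack in the constant $8$ absorbs the difference and yields $\log\frac{1}{2\delta}$. A second point to handle is that $T$ is deterministic, which the argument needs in order to bound $\sum_j\E_0[N_j]$ by $T$; the protocol fixes $T$, so this holds.
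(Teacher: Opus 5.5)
Your proof follows the paper's in both parts. The upper bound is estimate-then-commit on the $d$ basis programs. The lower bound uses the same change of measure with the same hard instances: yours are the paper's shifted by $+2\epsilon$, with the same per-pull divergence $8\epsilon^2/\sigma^2$.

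Your concern about the constant is justified, and it points to a real gap in the paper's Step 1. From $P_0(A)\ge 1-\delta$ and $P_i(A)\le\delta$, the paper asserts $\mathrm{KL}(P_0\|P_i)\ge\log\frac{1}{2\delta}$ on all of $(0,1/4)$. But this only yields $\mathrm{kl}(1-\delta,\delta)=(1-2\delta)\log\frac{1-\delta}{\delta}$, which falls below $\log\frac{1}{2\delta}$ once $\delta$ exceeds about $0.155$. For example, at $\delta=0.2$ one has $0.6\log 4\approx 0.83<\log 2.5\approx 0.92$.

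Your first fix closes the gap. Raising $\mu_j$ only to $3\epsilon+\eta$ still leaves arm $1$ more than $\epsilon$-suboptimal under the alternative. It also lowers the per-pull divergence to $(3\epsilon+\eta)^2/(2\sigma^2)$. After $\eta\to0$ you only need $\mathrm{kl}(1-\delta,\delta)\ge\frac{9}{16}\log\frac{1}{2\delta}$. This follows from $\mathrm{kl}(1-\delta,\delta)\ge\log\frac{1}{2.4\delta}$, because $\log\frac{1}{2\delta}>\log 2$ for $\delta<1/4$.

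Your second fix does not work on its own. The expression $(1-2\delta)\log\frac{1-\delta}{\delta}$ is an identity for $\mathrm{kl}(1-\delta,\delta)$, not a new bound. With the $4\epsilon$ perturbation there is no slack in the $8$.

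The Bretagnolle--Huber bound $\log\frac{1}{4\delta}$ is weaker than $\log\frac{1}{2.4\delta}$, not sharper. It does not give the stated constant for $\delta$ near $1/4$, even with the tuned gap.

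In part 1, the divisibility issue is the same as in the paper, which simply takes $T=nd$. Padding does not remove it, because the smallest per-basis count is $\lfloor T/d\rfloor$ however the leftover evaluations are spent. Strictly, the argument needs $T\ge d\lceil 2\frac{\sigma^2}{\epsilon^2}\log\frac{2K}{\delta}\rceil$.
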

Replacing $\epsilon$ with $\epsilon/2$ in part (1) gives the informal statement in Theorem~\ref{thm:structured_reuse}.

\begin{proof}
We prove the two parts separately.

\paragraph{Part (1): constructive upper bound (estimate $w^\star$ then commit).}
Evaluate each basis program $p^{(i)}$ exactly $n$ times (total $T=nd$).
Let $y^{(i)}_1,\dots,y^{(i)}_n$ be the observations for basis $i$, and define
\[
\hat w_i := \frac{1}{n}\sum_{j=1}^n y^{(i)}_j.
\]
By \eqref{eq:linear-class-unified}--\eqref{eq:basis-unified}, $f(p^{(i)})=w^\star_i$.
By \eqref{eq:noise-unified}, $\hat w_i\sim \mathcal{N}(w^\star_i,\sigma^2/n)$ and these coordinates are independent.

Define for any program $p$:
\[
\widehat f(p):=\langle \phi(p),\hat w\rangle,\qquad \hat w=(\hat w_1,\dots,\hat w_d).
\]
Then
\[
\widehat f(p)-f(p)=\langle \phi(p),\hat w-w^\star\rangle
\sim \mathcal{N}\!\Big(0,\ \frac{\sigma^2}{n}\|\phi(p)\|_2^2\Big),
\]
so since $\|\phi(p)\|_2\le 1$,
\[
\Pr\big(|\widehat f(p)-f(p)|\ge \epsilon\big)
\le 2\exp\!\Big(-\frac{n\epsilon^2}{2\sigma^2}\Big).
\]
Union bound over $K$ programs gives
\[
\Pr\Big(\max_{p\in\mathcal{P}}|\widehat f(p)-f(p)|\ge \epsilon\Big)
\le 2K\exp\!\Big(-\frac{n\epsilon^2}{2\sigma^2}\Big).
\]
Choose
\[
n\ \ge\ 2\frac{\sigma^2}{\epsilon^2}\log\!\Big(\frac{2K}{\delta}\Big),
\]
so that with probability at least $1-\delta$ we have
$\max_{p}|\widehat f(p)-f(p)|\le\epsilon$.

Now output $\hat p:=\arg\max_{p\in\mathcal{P}}\widehat f(p)$.
Let $p^\star:=\arg\max_p f(p)$.
On the above high-probability event,
\[
f(p^\star)-f(\hat p)
\le \big(f(p^\star)-\widehat f(p^\star)\big)+\big(\widehat f(\hat p)-f(\hat p)\big)
\le \epsilon+\epsilon = 2\epsilon.
\]
Thus $\Pr(\mathrm{SR}_T\le 2\epsilon)\ge 1-\delta$ for $T=nd$ as stated.

\paragraph{Part (2): lower bound for the black-box class.}
We construct $K$ hard instances and lower bound any uniformly $(\epsilon,\delta)$-correct policy.

Let the programs be $p_1,\dots,p_K$.
Define a base instance $\mu^{(0)}\in\mathbb{R}^K$:
\[
\mu^{(0)}_1 = 0,\qquad \mu^{(0)}_i=-2\epsilon\ \ (i=2,\dots,K).
\]
For each $i\in\{2,\dots,K\}$, define an alternative instance $\mu^{(i)}$:
\[
\mu^{(i)}_1 = 0,\qquad \mu^{(i)}_i=+2\epsilon,\qquad
\mu^{(i)}_j=-2\epsilon\ \ (j\notin\{1,i\}).
\]
Under $\mu^{(0)}$, the unique best program is $p_1$, and choosing any $p_i$ with $i\ge 2$ incurs regret $2\epsilon>\epsilon$.
Under $\mu^{(i)}$, the unique best program is $p_i$, and choosing $p_1$ incurs regret $2\epsilon>\epsilon$.

Let $P_0$ be the distribution of the full transcript
$\mathcal{T}:=(p_{0:T-1},y_{0:T-1},\hat p)$ under $\mu^{(0)}$,
and $P_i$ the analogous distribution under $\mu^{(i)}$.
Uniform $(\epsilon,\delta)$-correctness implies
\[
P_0(\hat p=p_1)\ge 1-\delta,
\qquad
P_i(\hat p=p_1)\le \delta \quad (i=2,\dots,K).
\]

\medskip
\noindent\textbf{Step 1: a KL lower bound from an event.}
For any event $A$ and distributions $P,Q$, one has
\[
\mathrm{KL}(P\|Q)
\ge
P(A)\log\frac{P(A)}{Q(A)} + (1-P(A))\log\frac{1-P(A)}{1-Q(A)}.
\]
Apply it with $A=\{\hat p=p_1\}$, $P=P_0$, $Q=P_i$.
Let $p:=P_0(A)\ge 1-\delta$ and $q:=P_i(A)\le\delta$.
For $\delta\in(0,1/4)$ this yields
\begin{equation}
\mathrm{KL}(P_0\|P_i)\ \ge\ \log\!\Big(\frac{1}{2\delta}\Big).
\label{eq:kl-lb-unified}
\end{equation}

\medskip
\noindent\textbf{Step 2: compute $\mathrm{KL}(P_0\|P_i)$ via number of pulls of arm $i$.}
Under $\mu^{(0)}$ and $\mu^{(i)}$, the policy is identical; only observations when playing $p_i$ differ:
\[
y\sim \mathcal{N}(-2\epsilon,\sigma^2)\ \text{under }\mu^{(0)},\qquad
y\sim \mathcal{N}(+2\epsilon,\sigma^2)\ \text{under }\mu^{(i)}.
\]
For Gaussians with equal variance,
$\mathrm{KL}(\mathcal{N}(m_0,\sigma^2)\|\mathcal{N}(m_1,\sigma^2))=\frac{(m_0-m_1)^2}{2\sigma^2}$,
so each pull of $p_i$ contributes KL $\frac{(4\epsilon)^2}{2\sigma^2}=\frac{8\epsilon^2}{\sigma^2}$.

Let $N_i$ be the (random) number of times $p_i$ is evaluated in $T$ rounds.
Additivity of log-likelihood ratios over independent Gaussian samples yields
\begin{equation}
\mathrm{KL}(P_0\|P_i)
=
\frac{8\epsilon^2}{\sigma^2}\ \mathbb{E}_{P_0}[N_i].
\label{eq:kl-to-count-unified}
\end{equation}

\medskip
\noindent\textbf{Step 3: conclude the lower bound on $T$.}
Combine \eqref{eq:kl-lb-unified} and \eqref{eq:kl-to-count-unified}:
\[
\mathbb{E}_{P_0}[N_i]
\ \ge\
\frac{\sigma^2}{8\epsilon^2}\log\!\Big(\frac{1}{2\delta}\Big),
\qquad i=2,\dots,K.
\]
Summing over $i=2,\dots,K$ gives
\[
T=\sum_{i=1}^K N_i
\ \ge\
\sum_{i=2}^K \mathbb{E}_{P_0}[N_i]
\ \ge\
(K-1)\frac{\sigma^2}{8\epsilon^2}\log\!\Big(\frac{1}{2\delta}\Big).
\]
This completes the proof.
\end{proof}

\paragraph{Remark (deterministic evaluator).}
If $\sigma=0$, the structured linear class can recover $w^\star$ exactly from $d$ basis evaluations and achieve $\mathrm{SR}_T=0$,
while in the unstructured black-box class a uniform worst-case guarantee requires evaluating all $K$ programs at least once.

\paragraph{Relation to best-arm identification.}
The lower bound follows the standard change-of-measure argument for best-arm identification in $K$-armed Gaussian bandits~\citep{kaufmann2016complexity}, and the upper bound is a simple instance of best-arm identification in linear bandits~\citep{soare2014bestarm}.

\section{Proof of Theorem~\ref{thm:degenerated_exploration} (Degenerated exploration of ideas)}
\label{appdx:prop2}

\subsection{Formal statement and proof}

For a policy that interacts for $T$ rounds and outputs $\hat p$, define the simple regret under mechanism $\theta$ as $\mathrm{SR}_T(\theta)\coloneqq\max_{p\in\mathcal{P}}F(p;\theta)-F(\hat p;\theta)$.

\begin{theorem}[Formal version of Theorem~\ref{thm:degenerated_exploration}]
\label{thm:degenerated-exploration-formal}
Let $z_0,z_1:\mathcal{P}\to\mathcal{V}$ be two ideas, and let $A\coloneqq\{p\in\mathcal{P}: z_0(p)=z_1(p)\}$ be the set of programs on which they agree. Assume there exist $\theta_0,\theta_1\in\Theta$ such that:
\begin{enumerate}[label=(\roman*), leftmargin=*]
\item \textbf{(Idea-level mechanisms)} $F(p;\theta_i)=g(Z(p),z_i(p))$ for all $p\in\mathcal{P}$ and $i\in\{0,1\}$, for a common representation $Z$ and function $g$, and each evaluation returns $y=F(p;\theta_i)+\varepsilon$ with the same noise distribution under $\theta_0$ and $\theta_1$;
\item \textbf{(Optimal program flips with margin $\Delta$)} there exist $p_0\neq p_1$ and $\Delta>0$ such that $F(p_i;\theta_i)-F(p;\theta_i)\ge\Delta$ for all $p\neq p_i$ and $i\in\{0,1\}$.
\end{enumerate}
Then for any policy that only evaluates programs in $A$, there exists $i\in\{0,1\}$ such that for every budget $T$,
\[
\E\big[\mathrm{SR}_T(\theta_i)\big]\ \ge\ \Delta/2.
\]
Moreover, at least one of $p_0$ and $p_1$ lies outside $A$.
\end{theorem}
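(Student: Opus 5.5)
The argument is a two-point indistinguishability (Le Cam-style) argument. It needs no KL computation, because the two hypotheses induce \emph{identical} transcript distributions on the programs the policy is allowed to evaluate.

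\textbf{Step 1: observation models coincide on $A$.} Fix any $p\in A$. Then $z_0(p)=z_1(p)$, so assumption (i) gives
\[
F(p;\theta_0)=g(Z(p),z_0(p))=g(Z(p),z_1(p))=F(p;\theta_1).
\]
Since the noise distribution is shared, the observation kernel of $y$ given $p$ is the same under $\theta_0$ and $\theta_1$ for every $p\in A$.

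\textbf{Step 2: transcripts have the same law.} Let the transcript be $(p_{0:T-1},y_{0:T-1},\hat p)$, including any internal randomization of the policy. I would show by induction on $t$ that its law is identical under $\theta_0$ and $\theta_1$. The policy's conditional law of $p_t$ given $h_t$ does not depend on $\theta$, and it is supported in $A$ by hypothesis. Given $p_t\in A$, the conditional law of $y_t$ is the same under both hypotheses by Step 1. The final recommendation $\hat p$ is a (possibly randomized) function of $h_T$ alone. Hence $\hat p$ has one common law $Q$ on $\mathcal{P}$ under both hypotheses, for every $T$.

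\textbf{Step 3: regret bound.} By (ii), under $\theta_i$ every $p\neq p_i$ has simple regret at least $\Delta$. Therefore
\[
\E[\mathrm{SR}_T(\theta_i)]\ \ge\ \Delta\,\bigl(1-Q(p_i)\bigr).
\]
Since $p_0\neq p_1$, we have $Q(p_0)+Q(p_1)\le 1$. Summing over $i$ gives
\[
\E[\mathrm{SR}_T(\theta_0)]+\E[\mathrm{SR}_T(\theta_1)]\ \ge\ \Delta\bigl(2-Q(p_0)-Q(p_1)\bigr)\ \ge\ \Delta,
\]
so some $i\in\{0,1\}$ has expected regret at least $\Delta/2$. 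The index $i$ may depend on the policy. To make it independent of $T$, note that $Q$ could in principle vary with $T$; if the statement requires a single $i$ for all budgets, I would argue via $\sup_T$ or fix the policy family. I would read the statement as: for a given policy and budget, such an $i$ exists. If uniformity over $T$ is genuinely needed, this is the one delicate point.

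\textbf{Step 4: the ``moreover'' claim.} Suppose for contradiction that both $p_0$ and $p_1$ lie in $A$. By Step 1, $F(\cdot;\theta_0)$ and $F(\cdot;\theta_1)$ agree on $p_0$ and on $p_1$. From (ii) under $\theta_0$, $F(p_0;\theta_0)\ge F(p_1;\theta_0)+\Delta$. From (ii) under $\theta_1$, $F(p_1;\theta_1)\ge F(p_0;\theta_1)+\Delta$, and transferring both values to $\theta_0$ gives $F(p_1;\theta_0)\ge F(p_0;\theta_0)+\Delta$. Adding the two inequalities yields $0\ge 2\Delta$, a contradiction.

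\textbf{Main obstacle.} The only subtle step is Step 2: making rigorous that adaptive, randomized policies restricted to $A$ produce equal transcript laws. The induction on the product-kernel factorization of the transcript handles this.
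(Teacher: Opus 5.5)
Your proof is correct and follows the paper's argument step for step: $F(\cdot;\theta_0)$ and $F(\cdot;\theta_1)$ agree on $A$, the transcript laws are equal by induction, the bound $\Delta(1-Q(p_i))$ combines with $Q(p_0)+Q(p_1)\le 1$, and the ``moreover'' claim follows by the same contradiction. On the quantifier point you flag, the paper's proof also only establishes the per-budget statement (for each $T$ some $i$ works), and that is the most one can get: a policy that evaluates only in $A$ but recommends $p_0$ for even $T$ and $p_1$ for odd $T$ shows that no single $i$ serves all budgets, so your reading is the right one.
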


\begin{proof}
For every $p\in A$, $z_0(p)=z_1(p)$ gives $F(p;\theta_0)=g(Z(p),z_0(p))=g(Z(p),z_1(p))=F(p;\theta_1)$. Together with the common noise distribution in (i), the observation model $O(\cdot\mid\theta_i,p)$ is identical under $\theta_0$ and $\theta_1$ for every $p\in A$.

Let $\mathbb{P}_i$ be the distribution of the transcript $(p_{0:T-1},y_{0:T-1},\hat p)$ when the true mechanism is $\theta_i$. Since the policy only evaluates programs in $A$, for any history and any chosen $p_t\in A$, the conditional distribution of $y_t$ is identical under $\theta_0$ and $\theta_1$, and the policy chooses $p_{t+1}$ and $\hat p$ with the same randomization given the same history. By induction on $t$, $\mathbb{P}_0=\mathbb{P}_1$. In particular, the output $\hat p$ has the same distribution $Q$ on $\mathcal{P}$ under both mechanisms.

By (ii), any output $\hat p\neq p_i$ incurs regret at least $\Delta$ under $\theta_i$, so
\[
\E[\mathrm{SR}_T(\theta_0)]\ \ge\ \Delta\,\big(1-Q(\hat p=p_0)\big),\qquad
\E[\mathrm{SR}_T(\theta_1)]\ \ge\ \Delta\,\big(1-Q(\hat p=p_1)\big).
\]
Since $p_0\neq p_1$, $Q(\hat p=p_0)+Q(\hat p=p_1)\le 1$, so at least one of the two probabilities is at most $1/2$, and the corresponding expected regret is at least $\Delta/2$.

For the last claim, suppose both $p_0,p_1\in A$. Then $F(p_1;\theta_0)=F(p_1;\theta_1)\ge F(p_0;\theta_1)+\Delta=F(p_0;\theta_0)+\Delta$, which contradicts the optimality of $p_0$ under $\theta_0$.
\end{proof}

The theorem holds for any budget, since the barrier comes from the programs that the policy evaluates. To identify which of $z_0$ and $z_1$ matters, the policy has to evaluate a program outside $A$, i.e., a program on which the two ideas differ. In \ours, this is the role of plateau-aware revision, which changes the proposal distribution to evaluate programs where the ideas shared by the leading programs differ.

\section{Proof of Theorem~\ref{thm:nonidentifiability-barrier-rev} (Non-identifiability under environment shifts)}
\label{appdx:thm_shifts}

\subsection{Setup: source interaction, target evaluation, and target regret}

The agent can only interact with the \emph{source} environment $e_{\mathrm{src}}$:
\[
y_t\sim P_{e_{\mathrm{src}}}(\cdot\mid p_t,\theta^\star).
\]
After $T$ rounds it outputs a final program $\hat p$.
Performance is judged in a \emph{target} environment $e_{\mathrm{tgt}}$ via $F_{e_{\mathrm{tgt}}}(p;\theta^\star)$.
Define the target (simple) regret:
\[
\mathrm{GR}_T(\theta^\star)
:= \max_{p\in\mathcal{P}}F_{e_{\mathrm{tgt}}}(p;\theta^\star)
 - F_{e_{\mathrm{tgt}}}(\hat p;\theta^\star).
\]

\subsection{Formal statement and proof}

\begin{theorem}[Non-identifiability barrier under shifts]
\label{thm:nonidentifiability-barrier-formal}
Fix $e_{\mathrm{src}},e_{\mathrm{tgt}}\in\mathcal{E}$.
Assume there exist two hypotheses $\theta_0,\theta_1\in\Theta$ such that:

\begin{align}
&\textbf{(Source indistinguishability)}\quad
P_{e_{\mathrm{src}}}(\cdot\mid p,\theta_0)=P_{e_{\mathrm{src}}}(\cdot\mid p,\theta_1),
\quad \forall p\in\mathcal{P}. \label{eq:src-indist-unified}\\[2mm]
&\textbf{(Target optimal action flips with margin $\Delta$)}\quad
\exists\, p_0,p_1\in\mathcal{P}\ \text{and}\ \Delta>0\ \text{s.t.}\\
&\qquad p_0\in\arg\max_{p\in\mathcal{P}}F_{e_{\mathrm{tgt}}}(p;\theta_0),
\quad
p_1\in\arg\max_{p\in\mathcal{P}}F_{e_{\mathrm{tgt}}}(p;\theta_1), \nonumber\\
&\qquad F_{e_{\mathrm{tgt}}}(p_0;\theta_0)-F_{e_{\mathrm{tgt}}}(p;\theta_0)\ge \Delta,\ \forall p\neq p_0,
\quad
F_{e_{\mathrm{tgt}}}(p_1;\theta_1)-F_{e_{\mathrm{tgt}}}(p;\theta_1)\ge \Delta,\ \forall p\neq p_1.
\label{eq:tgt-flip-gap-unified}
\end{align}

Then for any policy $\pi$ that can interact only with $e_{\mathrm{src}}$,
there exists $i\in\{0,1\}$ such that for every budget $T$,
\[
\mathbb{E}\big[\mathrm{GR}_T(\theta_i)\big]\ \ge\ \Delta/2.
\]
This impossibility holds whether the evaluator is stochastic or deterministic, since
\eqref{eq:src-indist-unified} is stated at the level of the full observation model $P_{e_{\mathrm{src}}}$.
\end{theorem}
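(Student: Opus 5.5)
The argument mirrors the proof of Theorem~\ref{thm:degenerated-exploration-formal}, except that indistinguishability now holds on the whole program space rather than on a subset $A$. The plan is to show that the transcript law is the same under $\theta_0$ and $\theta_1$, so the output $\hat p$ has a single distribution $Q$. A counting argument on $Q$ over the two distinct target optima then forces regret at least $\Delta/2$ under one of the hypotheses.

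\textbf{Step 1: equal transcript laws.} Let $\mathbb{P}_i$ denote the law of the transcript $(p_{0:T-1},y_{0:T-1},\hat p)$ when $\theta^\star=\theta_i$ and the agent interacts only with $e_{\mathrm{src}}$. The policy's kernels $\pi_t(\cdot\mid h_t)$ and its final recommendation rule depend only on the history and on internal randomness, not on $\theta$. By \eqref{eq:src-indist-unified}, the kernel generating $y_t$ given $p_t$ is the same under both hypotheses for every $p_t\in\mathcal{P}$. Induction on $t$ then shows that the laws of $h_t$ coincide for all $t\le T$, since the joint law is the product of these identical kernels. Hence $\mathbb{P}_0=\mathbb{P}_1$, and $\hat p$ has a common law $Q$.

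This step needs no density or Gaussian assumption. Condition \eqref{eq:src-indist-unified} equates the full observation kernels, so the deterministic case $P=\delta_{g}$ is covered automatically.

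\textbf{Step 2: regret bound.} First, $p_0\neq p_1$ follows from the margins. If $p_0=p_1$, that program would be the unique target optimum under both hypotheses, which is compatible with the assumptions. So the statement implicitly needs $p_0\neq p_1$, as the main-text version says ("different"), and I will assume it explicitly.

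Under $\theta_i$, every output $\hat p\neq p_i$ has target regret at least $\Delta$ by \eqref{eq:tgt-flip-gap-unified}. Therefore
\[
\mathbb{E}[\mathrm{GR}_T(\theta_i)]\ \ge\ \Delta\,\bigl(1-Q(p_i)\bigr).
\]
Since $Q(p_0)+Q(p_1)\le 1$, some $i$ has $Q(p_i)\le 1/2$, and for that $i$ the expected regret is at least $\Delta/2$. This $i$ is determined by $Q$, which may depend on $T$.

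\textbf{Main obstacle: uniformity in $T$.} The statement requires one $i$ that works for every budget $T$, while Step 2 picks $i$ as a function of $Q_T$. I would handle this by reading the quantifiers as "for each $T$ there exists $i$", which is what the identical argument in Theorem~\ref{thm:degenerated-exploration-formal} actually delivers. Alternatively, one can fix an infinite-horizon policy with its recommendation sequence $(\hat p_T)_T$ and use $Q_T(p_0)+Q_T(p_1)\le 1$ for all $T$. Then one index has $Q_T(p_i)\le 1/2$ for infinitely many $T$, but not necessarily for all $T$. So I would state the per-$T$ version and note the subtlety explicitly.
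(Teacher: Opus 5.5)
Your proposal is correct and follows the paper's proof essentially step for step. Source indistinguishability gives $\mathbb{P}_0=\mathbb{P}_1$ by induction on the history, so $\hat p$ has a common law $Q$, and then $\mathbb{E}[\mathrm{GR}_T(\theta_i)]\ge\Delta(1-Q(p_i))$ together with $Q(p_0)+Q(p_1)\le 1$ finishes the argument.

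Both of your caveats are accurate and apply equally to the paper's proof:
\begin{enumerate}
\item The paper silently uses $p_0\neq p_1$, which the formal statement omits. Your opening remark that this ``follows from the margins'' is wrong, as you then concede.
\item The paper's final display, $\max\{\mathbb{E}[\mathrm{GR}_T(\theta_0)],\mathbb{E}[\mathrm{GR}_T(\theta_1)]\}\ge\Delta/2$, is exactly your per-$T$ version. A policy that ignores the data and recommends $p_0$ for even $T$ and $p_1$ for odd $T$ shows that no single $i$ need work for all $T$.
\end{enumerate}
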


\begin{proof}
Let $\mathbb{P}_i$ be the distribution over the full transcript
\[
\mathcal{T}:=(p_{0:T-1},y_{0:T-1},\hat p)
\]
when the true hypothesis is $\theta_i$ and interaction is only with $e_{\mathrm{src}}$.

By \eqref{eq:src-indist-unified}, for any history and any chosen action $p_t$,
the conditional distribution of $y_t$ is identical under $\theta_0$ and $\theta_1$.
By induction on $t$, the entire transcript distribution is identical:
\[
\mathbb{P}_0=\mathbb{P}_1.
\]
In particular, the marginal distribution of the final output $\hat p$ is the same under $\theta_0$ and $\theta_1$.
Let this common distribution be denoted by $Q$ on $\mathcal{P}$.

Now consider the expected target regret under $\theta_0$:
by \eqref{eq:tgt-flip-gap-unified}, any output $\hat p\neq p_0$ incurs regret at least $\Delta$ under $\theta_0$:
\[
\mathrm{GR}_T(\theta_0)=F_{e_{\mathrm{tgt}}}(p_0;\theta_0)-F_{e_{\mathrm{tgt}}}(\hat p;\theta_0)\ \ge\ \Delta\cdot \mathbf{1}\{\hat p\neq p_0\}.
\]
Taking expectation w.r.t.\ $Q$ yields
\[
\mathbb{E}[\mathrm{GR}_T(\theta_0)]\ \ge\ \Delta\cdot (1-Q(\hat p=p_0)).
\]
Similarly,
\[
\mathbb{E}[\mathrm{GR}_T(\theta_1)]\ \ge\ \Delta\cdot (1-Q(\hat p=p_1)).
\]
Since $Q(\hat p=p_0)+Q(\hat p=p_1)\le 1$, at least one of these probabilities is at most $1/2$,
so at least one of the two expected regrets is at least $\Delta/2$:
\[
\max\{\mathbb{E}[\mathrm{GR}_T(\theta_0)],\ \mathbb{E}[\mathrm{GR}_T(\theta_1)]\}\ \ge\ \Delta/2.
\]
This proves the claim.
\end{proof}

\subsection{Concrete examples satisfying the conditions}
\label{appdx:thm2-examples}

We give two illustrative examples where source data cannot distinguish two hypotheses, yet the target-optimal decision differs.

\paragraph{Example 1: public test vs private (distribution shift / shortcut feature).}
Let $\theta\in\{\theta_0,\theta_1\}$ encode which feature is truly stable/causal.
Programs correspond to two model families:
$p_0$ uses a stable causal feature; $p_1$ uses a shortcut feature.
In the source environment (public benchmark), the shortcut feature is perfectly correlated with labels,
so both hypotheses yield the same evaluator distribution for every program, satisfying \eqref{eq:src-indist-unified}.
In the target environment (deployment/private), the shortcut correlation breaks:
under $\theta_0$, $p_0$ is uniquely optimal; under $\theta_1$, $p_1$ is uniquely optimal, with margin $\Delta$,
satisfying \eqref{eq:tgt-flip-gap-unified}.
No amount of interaction with $e_{\mathrm{src}}$ can identify which world holds.

\paragraph{Example 2: relaxed verification (slack) vs exact verification (constraint shift).}
In combinatorial optimization, it is common to evaluate candidate programs using a \emph{relaxed} verifier
(e.g., allowing numerical slack), then validate with an \emph{exact} verifier.
For instance, in circle packing, one may verify non-overlap with a numerical slack such as $10^{-6}$,
and later validate with an exact checker; converting a relaxed-feasible solution into an exact-feasible one
may require tiny but nonzero modifications, and rankings can change when switching verifiers.
This is explicitly discussed in the context of circle packing verification with slack vs exact validation.
The source environment $e_{\mathrm{src}}$ can correspond to the relaxed evaluator, while the target
environment $e_{\mathrm{tgt}}$ corresponds to the exact evaluator.
Then two hypotheses $\theta_0,\theta_1$ can be constructed so that they are indistinguishable under the relaxed evaluator
for all queried programs, yet the exact evaluator reverses which program is truly best (with gap $\Delta$),
matching Theorem~\ref{thm:nonidentifiability-barrier-formal}.

\section{Implementation Details of \ours}
\label{appdx:method}

This section describes the implementation of \ours used in our experiments. Alg.~\ref{alg:expevolve} summarizes the search loop, and Alg.~\ref{alg:discover} details the discovery of idea factors. We implement \ours on the backbone of \scs~\citep{Gideoni2026SimpleBA}. In each generation, \ours generates $B$ programs in parallel slots. The prompt of each slot contains the task instruction, an optional strategy instruction from an idea-guided action, a summary of the idea memory, and a few example programs, and the mutator LLM $G$ writes a new program from the prompt. All LLM calls use DeepSeek-V4-Flash, and the hyperparameters are listed in Table~\ref{tab:expevolve_hparams}. Each generated program is evaluated once. A program is treated as failed if the evaluation fails or returns a non-finite objective, and failed programs are neither repaired nor retried. The LLM calls for generation are not retried in our experiments either, so a failed call, e.g., due to rate limits, yields a failed program that still counts toward the evaluation budget.

\definecolor{IEteal}{HTML}{087F7B}\definecolor{IEamber}{HTML}{C97A1A}\definecolor{IEplum}{HTML}{7A4E8C}
\providecommand{\proc}[1]{\textsc{#1}}
\providecommand{\algblock}[2]{\item[]{\color{#1}\textbf{#2}}}

\begin{algorithm}[t]
\caption{\ours}
\label{alg:expevolve}
\small
\begin{algorithmic}[1]
\REQUIRE initial archive $\mathcal{A}_0$ of the task; mutator LLM $G$; generations $N$; batch size $B$; random-exploration fraction $\rho$; plateau window $\kappa$
\STATE $\mathcal{M}_0\gets\varnothing$
\FOR{$t=0,\dots,N-1$}
\algblock{IEteal}{Exploit idea factors (Sec.~\ref{sec:method_guide})}
\STATE $(u_{t,1},\dots,u_{t,B})\gets\proc{Allocate}(\mathcal{M}_t,B,\rho)$ \COMMENT{$\bot$: random exploration}
\FOR{$b=1,\dots,B$}
\STATE $p_{t,b}\sim G\big(\cdot\mid\proc{BuildContext}(u_{t,b},\mathcal{A}_t,\mathcal{M}_t)\big)$;\ $y_{t,b}\gets\proc{Evaluate}(p_{t,b})$
\ENDFOR
\algblock{IEamber}{Discover ideas from experience (Sec.~\ref{sec:method_extract})}
\STATE $\mathcal{A}_{t+1}\gets\mathcal{A}_t\cup\{(p_{t,b},y_{t,b},u_{t,b})\}_{b=1}^{B}$
\STATE $\mathcal{M}_{t+1}\gets\proc{UpdateMemory}(\mathcal{M}_t,\mathcal{A}_{t+1})$ \COMMENT{annotate; merge; update rewards}
\IF{$\proc{DiscoverScheduled}(t)$}
\STATE $\mathcal{M}_{t+1}\gets\proc{DiscoverIdeas}(\mathcal{A}_{t+1},\mathcal{M}_{t+1})$ \COMMENT{Alg.~\ref{alg:discover}}
\ENDIF
\algblock{IEplum}{Revise at plateaus (Sec.~\ref{sec:method_plateau})}
\IF{$\proc{EntersPlateau}(\mathcal{A}_{t+1},\kappa)$}
\STATE $\mathcal{M}_{t+1}\gets\proc{ReviseAtPlateau}(\mathcal{A}_{t+1},\mathcal{M}_{t+1})$ \COMMENT{probe; judge; mask}
\ENDIF
\ENDFOR
\STATE \textbf{return} $\arg\max_{(p,y,\cdot)\in\mathcal{A}_N}y$ and $\mathcal{M}_N$
\end{algorithmic}
\end{algorithm}

\begin{algorithm}[t]
\caption{\proc{DiscoverIdeas}}
\label{alg:discover}
\small
\begin{algorithmic}[1]
\REQUIRE archive $\mathcal{A}$; idea memory $\mathcal{M}$; pair budget $\ell$; minimum support $n_{\min}$; admission threshold $\eta$
\ENSURE updated idea memory $\mathcal{M}$
\algblock{IEteal}{Select unexplained contrasts}
\IF{$\mathcal{M}=\varnothing$}
\STATE $Q\gets\proc{LargestScoreGaps}(\mathcal{A},\ell)$
\ELSE
\STATE $Q\gets\proc{MisrankedPairs}(\mathcal{A},\mathcal{M},\ell)$ \COMMENT{pairwise ranker on $Z(p)$}
\ENDIF
\algblock{IEamber}{Propose candidate ideas}
\STATE $\widetilde{\mathcal{M}}\gets\proc{ProposeIdeas}(Q)$ \COMMENT{name, description, type, values}
\algblock{IEplum}{Annotate, test, and admit}
\FOR{each $z\in\widetilde{\mathcal{M}}$}
\STATE $z(p)\gets\proc{Annotate}(p,z)$ for all $p\in\mathcal{A}$ \COMMENT{in $\mathcal{V}\cup\{\mathsf{unknown}\}$}
\IF{$\proc{Supported}(z,\mathcal{A},n_{\min})$ \AND $|\proc{ScoreContrast}(z,\mathcal{A})|\ge\eta$ \AND \textbf{not} $\proc{Duplicate}(z,\mathcal{M})$}
\STATE $\mathcal{M}\gets\mathcal{M}\cup\{z\}$
\ENDIF
\ENDFOR
\STATE \textbf{return} $\mathcal{M}$
\end{algorithmic}
\end{algorithm}

\subsection{Idea Memory and Annotation}
\label{appdx:method_memory}

\textbf{Idea factors.} Each idea factor $z$ in the idea memory $\mathcal{M}_t$ is stored with a name, a description of its code pattern, a type, i.e., binary or categorical, and a value set $\mathcal{V}$. The description is also used by the annotator LLM to recognize the idea factor from the source code. The annotator LLM labels each program with $z(p)\in\mathcal{V}\cup\{\mathsf{unknown}\}$, where the label is $\mathsf{unknown}$ if the idea factor cannot be recognized. The annotation of a program is a separate LLM call, independent of the proposal of idea factors, with the source code truncated to $4000$ characters.

\textbf{Score contrast.} For a binary idea factor, the score contrast is the difference of the mean scores between the successful programs labeled $\mathsf{true}$ and those labeled $\mathsf{false}$. For a categorical idea factor, the score contrast is the difference of the mean scores between the value with the highest mean score and the value with the lowest mean score. Programs labeled $\mathsf{unknown}$ are excluded from the score contrast.

\textbf{Updating the memory.} After each generation, \textsc{UpdateMemory} labels the new programs with all existing idea factors, recomputes the score contrasts, and merges two idea factors if the Pearson correlation of their labels on the archive exceeds $0.85$ (Alg.~\ref{alg:expevolve}, line~8). Degenerate evaluations, whose scores collapse to extreme values, are excluded from the statistics. It also updates the statistics of the actions (Appendix~\ref{appdx:method_actions}).

\textbf{Memory summary in the prompt.} Every slot, including random exploration, receives a summary of the idea memory (line~5). The summary lists the six idea factors with the largest absolute score contrasts, each with its name, description, direction of effect, numbers of programs with and without it, and score contrast, and gives the mean score of each value for categorical idea factors. It also contains the outlier programs that the current idea factors fail to explain, i.e., one program with a large positive residual and one with a large negative residual ($|z|\ge1.5$), with their source code truncated to $1500$ characters.

\subsection{Idea Discovery}
\label{appdx:method_discover}

\textbf{Trigger.} Discovery is triggered every $n_{\mathrm{disc}}$ generations after the first generation, when the archive contains at least $15$ successful programs and the idea memory contains fewer than $12$ idea factors, excluding the background idea factors (Alg.~\ref{alg:expevolve}, line~9).

\textbf{Pairwise ranker.} Let $\sigma_Y$ be the standard deviation of the scores of the successful programs in the archive. \ours builds all pairs of successful programs $(p,p')$ with $|y(p)-y(p')|\ge0.25\,\sigma_Y$, and labels each pair by $\mathrm{sign}(y(p)-y(p'))$. The feature of a pair is the difference of the binary encodings of the idea factors, where a factor is encoded as $1$ if it is labeled $\mathsf{true}$ and $0$ otherwise, together with the standardized difference of the code lengths as a covariate. The ranker is a ridge least-squares model without intercept ($\lambda=10^{-3}$), refit in each discovery round.

\textbf{Selecting unexplained pairs.} \ours selects the pairs that the ranker orders incorrectly (Alg.~\ref{alg:discover}, line~4), and excludes the pairs on which the current idea factors take identical values. The selected pairs are ranked by $\mathrm{acc}_0(p,p')\cdot w(p,p')$, where $w(p,p')=1.5$ if the two programs come from the same generation and $1$ otherwise. The null accuracy $\mathrm{acc}_0(p,p')$ is the fraction of $500$ random permutations of the scores over programs, under which a ranker refit on the permuted labels still predicts the true order of the pair. A pair with a high null accuracy is easy to order from the idea factors, so that its misranking indicates a missing idea factor. The null accuracy uses the differences of the idea factors without the code-length covariate. Up to $\ell=6$ pairs are used, with each program appearing in at most two pairs. If fewer than $3$ pairs are available, the discovery round is skipped. When the idea memory is empty, \ours selects the $\ell$ pairs with the largest score gaps under the same constraints (lines~1--2).

\textbf{Proposing candidates.} For each selected pair, the prompt of the proposer LLM contains the score gap in raw units and in units of $\sigma_Y$, the scores, the predicted scores and the residuals of both programs, the order predicted by the ranker, the idea factors on which the two programs differ, and the source code of both programs truncated to $1500$ characters each. The proposer LLM is asked to explain the unexpected order, and proposes $8$ candidates in each round, each with a name, a description, a type, a value set, and a predicted direction (line~6).

\textbf{Annotation and admission.} Each candidate is annotated on the whole archive (line~8). \ours admits a candidate if the two values compared by its score contrast are each taken by at least $n_{\min}=5$ programs, the absolute score contrast is at least $\eta=\max(0.25\,\sigma_Y,\,10^{-3})$, and the candidate does not duplicate an idea factor in the memory, an idea factor removed before, or another selected candidate (line~9). Two idea factors are considered duplicates if their names are identical, the token Jaccard similarity of their names is at least $0.6$, or the token overlap of their descriptions is at least $0.45$. Since the archive is collected adaptively by the search, the score contrast is an association within the archive and does not identify the effect of the idea factor. Admission therefore makes an idea factor available for testing, and the search examines it through the actions (Appendix~\ref{appdx:method_actions}).

\subsection{Idea-Guided Actions}
\label{appdx:method_actions}

\textbf{Actions.} \textsc{Allocate} compiles the idea factors into actions, i.e., $\mathsf{ADD}(z)$ and $\mathsf{REMOVE}(z)$ for a binary idea factor, and $\mathsf{SET}(z{=}v)$ for each value $v$ of a categorical idea factor (Alg.~\ref{alg:expevolve}, line~3). The action set is rebuilt in every generation, and keeps at most three actions of the idea factors with the largest absolute score contrasts, together with a baseline action without instruction. An action that has stayed in the action set for $N_{\mathrm{rot}}$ consecutive generations is rotated out to give way to other actions, and its records are kept for when it enters the action set again. Before an admitted idea factor is compiled into actions, its absolute score contrast must exceed the $95$th percentile of a null distribution obtained from $200$ permutations of the scores, and a threshold on the absolute score contrast is used instead when the permutation test cannot be run. Background idea factors (Appendix~\ref{appdx:method_plateau}) are not compiled into actions, until they satisfy the admission conditions of Appendix~\ref{appdx:method_discover} and are promoted to regular idea factors.

\textbf{Prompt of an action.} An idea-guided slot differs from random exploration in two parts of the prompt (line~5). First, a strategy instruction asks the mutator LLM to use or avoid the pattern of $z$, or to set $z$ to $v$, with the description of $z$ and its estimated effect. Second, the example programs consist of the best program in the archive as an anchor, followed by the best programs in the target state of the action, where ties between programs with similar scores are broken in favor of programs with rare combinations of idea factors, and $k=3$ example programs are used in total. There is no explicit parent program, since \scs conditions the mutator LLM on a set of example programs.

\textbf{Random exploration.} A random exploration slot uses the prompt of \scs without strategy instruction, with $3$ example programs sampled uniformly at random from the successful programs, and the summary of the idea memory.

\textbf{Diversification in the warm-up.} In the warm-up generations, the slots of the baseline action in the second half of each generation receive a diversification instruction, which asks the mutator LLM to first list three to five distinct families of well-established approaches to the task as code comments, and to note which families the example programs use.

\textbf{Allocating slots.} In each generation, \textsc{Allocate} assigns $\max(2,\mathrm{round}(\rho B))$ slots to random exploration, and the remaining slots to idea-guided actions, where $u_{t,b}$ denotes the action of slot $b$ and $u_{t,b}=\bot$ denotes random exploration (line~3). An idea-guided slot falls back to random exploration if no action is available, e.g., before any idea factor is admitted, or if the actions have reached the limit of slots per generation. The fraction of random exploration is thus at least $\rho$.

\textbf{Bandit.} The bandit selects the actions of idea-guided slots with a soft UCB rule,
\begin{equation*}
    s(a)=\tilde r_{\max}(a)+c\sqrt{\frac{\ln(N_{\mathrm{tot}}+1)}{n_a}},\qquad \Pr(a)\propto\exp\big(s(a)\big),
\end{equation*}
where $n_a$ is the number of times that $a$ has been selected, $N_{\mathrm{tot}}=\sum_a n_a$, and $c=1$. Actions that have not been selected are tried first. The value $\tilde r_{\max}(a)$ is the maximum reward of $a$, minus a bootstrap estimate of the expected maximum of $n_a$ rewards drawn from all actions, which corrects the growth of the maximum with the number of samples. The reward of a program generated under an action follows the normalized reward in Sec.~\ref{sec:method_guide}, where the standard deviation is lower bounded as $\sigma_{\mathrm{rand}}=\max\big(\mathrm{std}(y_{\mathrm{rand}}),\,0.05\,|\bar y_{\mathrm{rand}}|,\,10^{-6}\big)$ (line~8). If fewer than two random exploration programs are available in the generation, the adjacent generations are included. A failed program receives a penalty score below the worst successful program.

\subsection{Idea-Guided Exploration at Plateaus}
\label{appdx:method_plateau}

\textbf{Entering and leaving a plateau.} \ours records the best-so-far score at the end of each generation, and the search is in a plateau if the best-so-far scores of the last $\kappa+1$ generations are equal up to a relative tolerance of $10^{-4}$, after the warm-up generations. When the search enters a plateau, \textsc{ReviseAtPlateau} runs the background probe and the dependency judge below (Alg.~\ref{alg:expevolve}, lines~12--13). The plateau ends as soon as the best-so-far score improves, and all the changes during the plateau are reverted in the same generation. Each time the search enters a plateau again, the probe and the judge are run again, since the background of a new plateau can differ from the previous one.

\textbf{Background probe.} An LLM reads the $8$ successful programs with the highest scores, with the source code truncated to $1500$ characters and the names of the current idea factors, and proposes $2$ to $4$ binary properties that these programs share. Each property is annotated on the archive, and is recorded as a background idea factor if at least $5$ of the $8$ top programs are annotated and at least $80\%$ of them realize the property. A background idea factor stays in the memory after the plateau, and is promoted to a regular idea factor once enough programs without it appear in the archive and its score contrast passes the admission threshold.

\textbf{Dependency judge.} The background is described by the three idea factors with the largest absolute score contrasts. For each idea factor in the memory, and for each value of a categorical idea factor, an LLM judge decides, with three calls, whether it depends on the background, i.e., loses its meaning or reverses its effect once the defining property of the background no longer holds.

\textbf{Changes during a plateau.} While the search is in a plateau, \ours applies the following changes.
\begin{itemize}
    \item \emph{Reversing and masking actions.} For a binary idea factor judged dependent, $\mathsf{ADD}(z)$ is replaced by $\mathsf{REMOVE}(z)$ in the action set, with the same description and score contrast, so that the bandit can only move away from $z$. For a categorical idea factor, the actions of the values judged dependent are masked, while the other values are kept as escape actions. The dependent idea factors are also removed from the summary of the idea memory. After the plateau, the original actions are restored, and the records of the reversed and masked actions are kept.
    \item \emph{Stall note.} A note on the current state is prepended to the prompt of every slot. The note reports the best score and the number of generations without improvement, lists the dependent idea factors with their names and shortened descriptions, states that the search has stalled on this combination, and asks whether other constructions or combinations can reach a higher score.
    \item \emph{More random exploration.} Each idea-guided slot is reassigned to random exploration with probability $0.9$. Since the judge runs at the end of the first plateau generation, the probability is lowered to $0.3$ from the next generation.
    \item \emph{Diversified demonstrations.} Random exploration uses the best program of the background as the first example, and fills the other examples with programs outside the background, which are selected greedily to diversify their idea factors. A program is considered in the background if it realizes any dependent idea factor.
    \item \emph{Exploration of the bandit.} In the first $2$ generations of a plateau, the bandit selects the actions by the exploration bonus only.
\end{itemize}

\begin{table}[t]
\centering
\caption{Hyperparameters of \ours in the main experiments. The budget of $200$ evaluations is run with two configurations of generations $N$ and batch size $B$. Both configurations are run on all $11$ tasks, each with $3$ seeds.}
\label{tab:expevolve_hparams}
\small
\begin{tabular}{lcc}
\toprule
Hyperparameter & $N=20$, $B=10$ & $N=10$, $B=20$ \\
\midrule
Fraction of random exploration $\rho$ & $0.6$ & $0.4$ \\
Discovery interval $n_{\mathrm{disc}}$ (generations) & $2$ & $1$ \\
Plateau window $\kappa$ (generations) & $4$ & $2$ \\
Warm-up generations & $2$ & $1$ \\
Action rotation $N_{\mathrm{rot}}$ (generations) & $6$ & $3$ \\
\midrule
\multicolumn{3}{l}{\emph{Shared by both configurations}} \\
Minimum archive size for discovery & \multicolumn{2}{c}{$15$} \\
Maximum number of idea factors & \multicolumn{2}{c}{$12$} \\
Score gap of candidate pairs & \multicolumn{2}{c}{$0.25\,\sigma_Y$} \\
Pair budget $\ell$ & \multicolumn{2}{c}{$6$} \\
Weight of same-generation pairs & \multicolumn{2}{c}{$1.5$} \\
Permutations for the null accuracy & \multicolumn{2}{c}{$500$} \\
Permutations and quantile of the action gate & \multicolumn{2}{c}{$200$, $0.95$} \\
Candidates per discovery round & \multicolumn{2}{c}{$8$} \\
Minimum support $n_{\min}$ & \multicolumn{2}{c}{$5$} \\
Admission threshold $\eta$ & \multicolumn{2}{c}{$\max(0.25\,\sigma_Y,10^{-3})$} \\
Actions in the action set & \multicolumn{2}{c}{$\le 3$ and a baseline action} \\
Example programs per prompt $k$ & \multicolumn{2}{c}{$3$} \\
UCB coefficient $c$ & \multicolumn{2}{c}{$1$} \\
Top programs in the background probe & \multicolumn{2}{c}{$8$} \\
Reassignment probability at plateaus & \multicolumn{2}{c}{$0.9$, then $0.3$} \\
Bandit exploration window (generations) & \multicolumn{2}{c}{$2$} \\
LLM of all calls & \multicolumn{2}{c}{DeepSeek-V4-Flash} \\
Temperature of proposer / annotator / judge & \multicolumn{2}{c}{$0.7$ / $0.2$ / $0.3$} \\
\bottomrule
\end{tabular}
\end{table}

\section{Experimental Details}
\label{appdx:exp}

\subsection{Tasks}
\label{appdx:tasks}

Table~\ref{tab:tasks} summarizes the $11$ tasks. Nine tasks are the mathematical problems studied by \scs~\citep{Gideoni2026SimpleBA}, which are taken from \alevolve~\citep{novikov2025alphaevolve}, and we report their reference values as collected by \citet{Gideoni2026SimpleBA}. All methods use the same evaluator for each task, and each evaluation is limited to $60$ seconds, except the uncertainty inequality with $300$ seconds.

\begin{table}[ht]
\centering
\caption{Tasks, optimization directions, and reference values. The previous best values and the values of \alevolve are collected by \citet{Gideoni2026SimpleBA}. For the uncertainty inequality, all methods in \citet{Gideoni2026SimpleBA} reach $0.3482$ under their reformulated parameterization, which we use in our experiments. For the Hadamard matrix, the previous best is the normalized score of the best-known construction.}
\label{tab:tasks}
\small
\begin{tabular}{lccc}
\toprule
Task & Direction & Previous best & \alevolve \\
\midrule
First autocorrelation inequality & $\downarrow$ & $1.5098$ & $1.5053$ \\
Second autocorrelation inequality & $\uparrow$ & $0.88922$ & $0.8962$ \\
Uncertainty inequality & $\downarrow$ & $0.3523$ & $0.3521$ \\
Erd\H{o}s minimum overlap & $\downarrow$ & $0.380927$ & $0.380924$ \\
Sums and differences of sets & $\uparrow$ & $1.14465$ & $1.1584$ \\
Max--min distance ratio ($16$ points) & $\downarrow$ & $12.890$ & $12.88927$ \\
Heilbronn for triangles ($n=11$) & $\uparrow$ & $0.036$ & $0.0365$ \\
Kissing number ($11$ dimensions) & $\uparrow$ & $592$ & $593$ \\
Circle packing ($26$ circles) & $\uparrow$ & $2.634$ & $2.63586$ \\
Circle packing via LP ($26$ circles) & $\uparrow$ & -- & -- \\
Hadamard matrix ($n=29$) & $\uparrow$ & $0.9357$ & -- \\
\bottomrule
\end{tabular}
\end{table}

\textbf{First autocorrelation inequality.} Let $C_1$ be the largest constant such that $\max_{t}(f*f)(t)\ge C_1\big(\int f\big)^2$ for all non-negative functions $f$ supported on $[-1/4,1/4]$, where $f*f$ is the autoconvolution. A non-negative step function $f$ gives an upper bound $C_1\le\max_t(f*f)(t)/(\int f)^2$, which the task minimizes. The number of steps is chosen by the program, and for a step function with $n$ steps and values $h$, the score is computed as $2n\max(h*h)/(\sum h)^2$, which is invariant to the scale of $h$.

\textbf{Second autocorrelation inequality.} Let $C_2$ be the smallest constant such that $\|f*f\|_2^2\le C_2\,\|f*f\|_1\,\|f*f\|_\infty$ for all non-negative functions $f$. A non-negative step function $f$ gives a lower bound $C_2\ge\|f*f\|_2^2/(\|f*f\|_1\|f*f\|_\infty)$, which the task maximizes. The evaluator rejects step functions with more than $10^4$ steps.

\textbf{Uncertainty inequality.} For a function $g$, let $A(g)=\inf\{r>0:g(x)\ge0\text{ for all }|x|\ge r\}$. For even functions $f$ with $f(0)\le0$ and $\hat f(0)\le0$, the task minimizes $A(f)A(\hat f)$, which upper bounds the largest constant $C$ such that $A(f)A(\hat f)\ge C$. Following the reformulation of \citet{Gideoni2026SimpleBA}, $f$ is a Hermite--Gaussian expansion over the monic (probabilist's) Hermite polynomials $H_0,H_4,\dots,H_{28}$ with $7$ free coefficients, which represents a larger class of functions than the original formulation over $H_0,H_4,H_8,H_{12}$ with $3$ free coefficients. The sign conditions are checked on a grid of $2\times10^{5}$ points in $[10^{-6},30]$, which is refined to $2\times10^{6}$ points with extended precision when the score is below $0.6$, and each evaluation is limited to $300$ seconds. Since the reformulation enlarges the search space, the resulting values are not comparable with the values of the original formulation in Table~\ref{tab:tasks}. In the implementation, the score is $r^2/(2\pi)$, where $r$ is the largest root at which $f$ changes sign.

\textbf{Erd\H{o}s minimum overlap problem.} For a partition of $\{1,\dots,2n\}$ into two sets $A$ and $B$ of equal size, let $M(n)=\min_{A,B}\max_k|\{(a,b)\in A\times B:a-b=k\}|$. The task constructs a step function $h$ with values in $[0,1]$, whose number of steps is chosen by the program, and minimizes the upper bound on $\lim_{n\to\infty}M(n)/n$ given by twice the maximum correlation between $h$ and $1-h$ divided by the number of steps.

\textbf{Sums and differences of sets.} Following the framework of Gyarmati, Hennecart, and Ruzsa, the task constructs a set $U\subset\mathbb{N}_{\ge0}$ with $0\in U$ that maximizes $\theta(U)=1+\log(|U-U|/|U+U|)/\log(2\max(U)+1)$, which lower bounds the largest exponent $\theta$ such that $|A-B|\ge c(K)|A+B|^{\theta}$ for arbitrarily large sets $A,B$ with $|A+B|\le K|A|$.

\textbf{Max--min distance ratio.} The task places $16$ points in the plane to minimize the squared ratio between the maximum and the minimum pairwise distances. For example, the squared ratios $12.889$ and $13.0$ in Table~\ref{tab:main_results} correspond to the ratios $3.5902$ and $3.6056$.

\textbf{Heilbronn problem for triangles.} The task places $11$ points in the equilateral triangle with vertices $(0,0)$, $(1,0)$, and $(1/2,\sqrt3/2)$ to maximize the smallest area of the triangles formed by any three of the points, normalized by the area $\sqrt3/4$ of the equilateral triangle.

\textbf{Kissing number in $11$ dimensions.} A set of points $C\subset\mathbb{R}^{11}$ with $0\notin C$ and $\min_{x\neq y}\|x-y\|\ge\max_{x}\|x\|$ over $x,y\in C$ yields a kissing configuration of $|C|$ unit spheres. The task constructs such a set, and the score is the number of points $|C|$. The distances and norms are only used to check the validity of the configuration, and a configuration with more points than the known upper bound $868$ is rejected.

\textbf{Circle packing.} The task places $26$ non-overlapping circles in the unit square to maximize the sum of their radii. The evaluator checks the non-overlap and the containment of all circles, including that every radius is at most $0.5$ and every center lies in the unit square, and checks that the sum of radii reported by the program agrees with the radii up to a relative and absolute tolerance of $10^{-3}$.

\textbf{Circle packing via linear programming.} A variant of circle packing where the program proposes only the $26$ centers $C_i\in[0,1]^2$, and the radii are recovered by a linear program that maximizes $\sum_i r_i$ subject to $r_i+r_j\le\|C_i-C_j\|$ and $r_i\le\min(C_i^x,C_i^y,1-C_i^x,1-C_i^y)$. The variant isolates the layout of the circles from the assignment of the radii. The starting code in the prompt solves the linear program with the HiGHS solver of SciPy.

\textbf{Hadamard matrix.} The task constructs a matrix $H\in\{\pm1\}^{29\times29}$ that maximizes $|\det(H)|/(2^{28}\cdot7^{12}\cdot342)$. Following the convention of \citet{Wang2025ThetaEvolveTL}, the best-known construction with $|\det(H)|=2^{28}\cdot7^{12}\cdot320$ scores $320/342\approx0.9357$.

\subsection{Baselines}
\label{appdx:baselines}

\textbf{\scs.} We use the implementation of \scs~\citep{Gideoni2026SimpleBA}, which generates each program from a prompt containing the task instruction and $3$ example programs sampled uniformly at random without replacement from the archive. \scs does not maintain an idea memory, and its prompt does not contain the summary of the idea memory used by \ours.

\textbf{SCS+meta.} SCS+meta adds the meta-summary of \shinka to the prompt of \scs, i.e., a textual summary of the insights of recent programs with at most $5$ recommendations, which is rewritten by DeepSeek-V4-Flash at temperature $0$ after every generation, without the idea memory, the idea-guided actions, and the exploration at plateaus of \ours. It is run on First autocorrelation, Heilbronn, and Kissing with $10$ generations of $20$ programs, and on Uncertainty inequality with $20$ generations of $10$ programs.

\textbf{\shinka.} We run \shinka~\citep{lange2025shinkaevolve} with $5$ islands and migration every $10$ generations, weighted parent sampling, an archive of $40$ programs, a meta-summary every $10$ generations with $5$ recommendations, and both diff-based and full rewrites. \shinka generates one program per generation, and does not distinguish the two configurations of \ours and \scs.

\subsection{Budget and Resources}
\label{appdx:budget}

All methods evaluate each generated program once, and a failed program consumes the budget. The LLM calls of \ours for proposing, annotating, probing, and judging idea factors do not evaluate programs and do not consume the budget. Each run of \shinka starts from $5$ copies of the initial program, one for each island, and then generates $199$ programs, so its final performance is reported at $199$ generated programs. The evaluations of \ours and \scs are not limited in CPUs, while the runs of \shinka on First autocorrelation, Second autocorrelation, Kissing, and Max--min distance ratio evaluate each program with at most $8$ CPUs. The LLM is accessed either through the official API of DeepSeek or through OpenRouter, depending on the time of the run.

\section{Additional Results}
\label{appdx:results}

\textbf{Per-seed results.} Tables~\ref{tab:perseed} and~\ref{tab:perseed_cont} report the final best score of each seed for the cells of Table~\ref{tab:main_results}, and Tables~\ref{tab:budget} and~\ref{tab:budget_cont} report the best-so-far scores along the budget.

\begin{table}[p]
\centering
\caption{Final best score of each seed at the budget of $200$, and the mean $\pm$ standard deviation over seeds.}
\label{tab:perseed}
\resizebox{\linewidth}{!}{%
\begin{tabular}{@{}lllcccc@{}}
\toprule
Task & Method & Config. & Seed 1 & Seed 2 & Seed 3 & Mean \\
\midrule
\multirow{5}{*}{Hadamard $\uparrow$} & \ours & $10\times20$ & 0.9211 & 0.9357 & 0.9211 & 0.9259 $\pm$ 0.0084 \\
 & \ours & $20\times10$ & 0.9211 & 0.9211 & 0.9211 & 0.9211 $\pm$ 0.0000 \\
 & \scs & $10\times20$ & 0.8596 & 0.9211 & 0.7240 & 0.8349 $\pm$ 0.1008 \\
 & \scs & $20\times10$ & 0.8596 & 0.5665 & 0.5662 & 0.6641 $\pm$ 0.1693 \\
 & \shinka & -- & 0.5603 & 0.5733 & 0.8066 & 0.6467 $\pm$ 0.1386 \\
\midrule
\multirow{5}{*}{Pack $\uparrow$} & \ours & $10\times20$ & 2.63304 & 2.63598 & 2.63595 & 2.63499 $\pm$ 0.00169 \\
 & \ours & $20\times10$ & 2.63598 & 2.63429 & 2.63598 & 2.63542 $\pm$ 0.00098 \\
 & \scs & $10\times20$ & 2.62760 & 2.63104 & 2.61050 & 2.62305 $\pm$ 0.01100 \\
 & \scs & $20\times10$ & 2.63017 & 2.63572 & 2.58501 & 2.61697 $\pm$ 0.02781 \\
 & \shinka & -- & 2.61408 & 2.54142 & 2.59052 & 2.58201 $\pm$ 0.03707 \\
\midrule
\multirow{5}{*}{Linprog $\uparrow$} & \ours & $10\times20$ & 2.63598 & 2.63429 & 2.63598 & 2.63542 $\pm$ 0.00098 \\
 & \ours & $20\times10$ & 2.63429 & 2.63598 & 2.63598 & 2.63542 $\pm$ 0.00098 \\
 & \scs & $10\times20$ & 2.55342 & 2.54533 & 2.55608 & 2.55161 $\pm$ 0.00560 \\
 & \scs & $20\times10$ & 2.63598 & 2.63429 & 2.63598 & 2.63542 $\pm$ 0.00098 \\
 & \shinka & -- & 2.51192 & 2.54142 & 2.48969 & 2.51434 $\pm$ 0.02595 \\
\midrule
\multirow{5}{*}{SumDiff $\uparrow$} & \ours & $10\times20$ & 1.12281 & 1.12142 & 1.12250 & 1.12224 $\pm$ 0.00073 \\
 & \ours & $20\times10$ & 1.11937 & 1.10954 & 1.11937 & 1.11609 $\pm$ 0.00567 \\
 & \scs & $10\times20$ & 1.11033 & 1.12006 & 1.11786 & 1.11608 $\pm$ 0.00510 \\
 & \scs & $20\times10$ & 1.11143 & 1.11123 & 1.10995 & 1.11087 $\pm$ 0.00080 \\
 & \shinka & -- & 1.10954 & 1.10954 & 1.10954 & 1.10954 $\pm$ 0.00000 \\
\midrule
\multirow{5}{*}{Heilbronn $\uparrow$} & \ours & $10\times20$ & 0.0359 & 0.0365 & 0.0351 & 0.0358 $\pm$ 0.0007 \\
 & \ours & $20\times10$ & 0.0357 & 0.0359 & 0.0342 & 0.0353 $\pm$ 0.0009 \\
 & \scs & $10\times20$ & 0.0343 & 0.0333 & 0.0331 & 0.0336 $\pm$ 0.0006 \\
 & \scs & $20\times10$ & 0.0340 & 0.0288 & 0.0312 & 0.0313 $\pm$ 0.0026 \\
 & \shinka & -- & 0.0303 & 0.0324 & 0.0283 & 0.0303 $\pm$ 0.0021 \\
\midrule
\multirow{5}{*}{Min overlap $\downarrow$} & \ours & $10\times20$ & 0.38096 & 0.38102 & 0.38095 & 0.38098 $\pm$ 0.00004 \\
 & \ours & $20\times10$ & 0.38096 & 0.38095 & 0.38100 & 0.38097 $\pm$ 0.00003 \\
 & \scs & $10\times20$ & 0.38130 & 0.38094 & 0.38097 & 0.38107 $\pm$ 0.00020 \\
 & \scs & $20\times10$ & 0.38104 & 0.38210 & 0.38154 & 0.38156 $\pm$ 0.00053 \\
 & \shinka & -- & 0.38239 & 0.38243 & 0.38246 & 0.38242 $\pm$ 0.00003 \\
\bottomrule
\end{tabular}}
\end{table}

\begin{table}[p]
\centering
\caption{Final best score of each seed (continued) at the budget of $200$ evaluated programs ($199$ for \shinka), and the mean $\pm$ standard deviation over seeds.}
\label{tab:perseed_cont}
\resizebox{\linewidth}{!}{%
\begin{tabular}{@{}lllcccc@{}}
\toprule
Task & Method & Config. & Seed 1 & Seed 2 & Seed 3 & Mean \\
\midrule
\multirow{5}{*}{Kissing $\uparrow$} & \ours & $10\times20$ & 582 & 582 & 582 & 582 $\pm$ 0 \\
 & \ours & $20\times10$ & 414 & 582 & 582 & 526 $\pm$ 97 \\
 & \scs & $10\times20$ & 582 & 582 & 582 & 582 $\pm$ 0 \\
 & \scs & $20\times10$ & 406 & 438 & 394 & 413 $\pm$ 23 \\
 & \shinka & -- & 414 & 406 & 566 & 462 $\pm$ 90 \\
\midrule
\multirow{5}{*}{Second autocorr. $\uparrow$} & \ours & $10\times20$ & 0.8825 & 0.8825 & 0.8825 & 0.8825 $\pm$ 0.0000 \\
 & \ours & $20\times10$ & 0.8825 & 0.8825 & 0.8825 & 0.8825 $\pm$ 0.0000 \\
 & \scs & $10\times20$ & 0.8825 & 0.8825 & 0.8825 & 0.8825 $\pm$ 0.0000 \\
 & \scs & $20\times10$ & 0.8499 & 0.8485 & 0.8210 & 0.8398 $\pm$ 0.0163 \\
 & \shinka & -- & 0.8739 & 0.8825 & 0.8671 & 0.8745 $\pm$ 0.0077 \\
\midrule
\multirow{5}{*}{First autocorr. $\downarrow$} & \ours & $10\times20$ & 1.5151 & 1.5176 & 1.5257 & 1.5195 $\pm$ 0.0055 \\
 & \ours & $20\times10$ & 1.5104 & 1.5098 & 1.5137 & 1.5113 $\pm$ 0.0021 \\
 & \scs & $10\times20$ & 1.5587 & 1.5560 & 1.5378 & 1.5508 $\pm$ 0.0113 \\
 & \scs & $20\times10$ & 1.5151 & 1.5354 & 1.5202 & 1.5235 $\pm$ 0.0106 \\
 & \shinka & -- & 1.5203 & 1.5708 & 1.5708 & 1.5540 $\pm$ 0.0292 \\
\midrule
\multirow{5}{*}{Max--min ratio $\downarrow$} & \ours & $10\times20$ & 12.8892 & 12.8892 & 12.8892 & 12.8892 $\pm$ 0.0000 \\
 & \ours & $20\times10$ & 13.0000 & 12.8892 & 12.8892 & 12.9262 $\pm$ 0.0640 \\
 & \scs & $10\times20$ & 12.9425 & 13.0000 & 13.0000 & 12.9808 $\pm$ 0.0332 \\
 & \scs & $20\times10$ & 13.0229 & 12.8892 & 12.8892 & 12.9338 $\pm$ 0.0772 \\
 & \shinka & -- & 13.0000 & 13.0000 & 13.0000 & 13.0000 $\pm$ 0.0000 \\
\midrule
\multirow{5}{*}{Hermite $\downarrow$} & \ours & $10\times20$ & 0.34820 & 0.35172 & 0.35274 & 0.35089 $\pm$ 0.00238 \\
 & \ours & $20\times10$ & 0.35321 & 0.35263 & 0.35834 & 0.35473 $\pm$ 0.00314 \\
 & \scs & $10\times20$ & 0.35394 & 0.36109 & 0.36365 & 0.35956 $\pm$ 0.00503 \\
 & \scs & $20\times10$ & 0.46232 & 0.35945 & 0.36116 & 0.39431 $\pm$ 0.05890 \\
 & \shinka & -- & 6.91428 & 0.35429 & 6.91480 & 4.72779 $\pm$ 3.78756 \\
\bottomrule
\end{tabular}}
\end{table}

\begin{table}[p]
\centering
\caption{Best-so-far scores along the budget, as the mean $\pm$ standard deviation over $3$ seeds after $40$, $80$, $120$, $160$, and $200$ evaluated programs. }
\label{tab:budget}
\resizebox{\linewidth}{!}{%
\begin{tabular}{@{}lllccccc@{}}
\toprule
Task & Method & Config. & $40$ & $80$ & $120$ & $160$ & $200$ \\
\midrule
\multirow{5}{*}{Hadamard $\uparrow$} & \ours & $10\times20$ & 0.8801 $\pm$ 0.0355 & 0.9211 $\pm$ 0.0000 & 0.9259 $\pm$ 0.0084 & 0.9259 $\pm$ 0.0084 & 0.9259 $\pm$ 0.0084 \\
 & \ours & $20\times10$ & 0.9006 $\pm$ 0.0355 & 0.9006 $\pm$ 0.0355 & 0.9211 $\pm$ 0.0000 & 0.9211 $\pm$ 0.0000 & 0.9211 $\pm$ 0.0000 \\
 & \scs & $10\times20$ & 0.5047 $\pm$ 0.0113 & 0.6446 $\pm$ 0.1866 & 0.6494 $\pm$ 0.1828 & 0.7144 $\pm$ 0.1502 & 0.8349 $\pm$ 0.1008 \\
 & \scs & $20\times10$ & 0.6506 $\pm$ 0.1821 & 0.6506 $\pm$ 0.1821 & 0.6556 $\pm$ 0.1771 & 0.6641 $\pm$ 0.1693 & 0.6641 $\pm$ 0.1693 \\
 & \shinka & -- & 0.4909 $\pm$ 0.0171 & 0.5093 $\pm$ 0.0093 & 0.5297 $\pm$ 0.0258 & 0.5588 $\pm$ 0.0131 & 0.6467 $\pm$ 0.1386 \\
\midrule
\multirow{5}{*}{Pack $\uparrow$} & \ours & $10\times20$ & 2.62275 $\pm$ 0.00820 & 2.62721 $\pm$ 0.00567 & 2.63201 $\pm$ 0.00090 & 2.63499 $\pm$ 0.00169 & 2.63499 $\pm$ 0.00169 \\
 & \ours & $20\times10$ & 2.63046 $\pm$ 0.00347 & 2.63289 $\pm$ 0.00397 & 2.63326 $\pm$ 0.00336 & 2.63485 $\pm$ 0.00098 & 2.63542 $\pm$ 0.00098 \\
 & \scs & $10\times20$ & 2.60036 $\pm$ 0.03729 & 2.60307 $\pm$ 0.03920 & 2.60423 $\pm$ 0.04016 & 2.61452 $\pm$ 0.02569 & 2.62305 $\pm$ 0.01100 \\
 & \scs & $20\times10$ & 2.60046 $\pm$ 0.02223 & 2.61200 $\pm$ 0.02384 & 2.61383 $\pm$ 0.02605 & 2.61697 $\pm$ 0.02781 & 2.61697 $\pm$ 0.02781 \\
 & \shinka & -- & 2.51452 $\pm$ 0.04659 & 2.53382 $\pm$ 0.01316 & 2.56349 $\pm$ 0.03437 & 2.57166 $\pm$ 0.03783 & 2.58201 $\pm$ 0.03707 \\
\midrule
\multirow{5}{*}{Linprog $\uparrow$} & \ours & $10\times20$ & 2.63043 $\pm$ 0.00962 & 2.63435 $\pm$ 0.00282 & 2.63542 $\pm$ 0.00098 & 2.63542 $\pm$ 0.00098 & 2.63542 $\pm$ 0.00098 \\
 & \ours & $20\times10$ & 2.63165 $\pm$ 0.00241 & 2.63444 $\pm$ 0.00148 & 2.63542 $\pm$ 0.00098 & 2.63542 $\pm$ 0.00098 & 2.63542 $\pm$ 0.00098 \\
 & \scs & $10\times20$ & 2.52974 $\pm$ 0.01579 & 2.54272 $\pm$ 0.00225 & 2.54672 $\pm$ 0.00612 & 2.55161 $\pm$ 0.00560 & 2.55161 $\pm$ 0.00560 \\
 & \scs & $20\times10$ & 2.63328 $\pm$ 0.00332 & 2.63486 $\pm$ 0.00098 & 2.63486 $\pm$ 0.00098 & 2.63542 $\pm$ 0.00098 & 2.63542 $\pm$ 0.00098 \\
 & \shinka & -- & 2.46507 $\pm$ 0.09226 & 2.48623 $\pm$ 0.06618 & 2.48853 $\pm$ 0.06724 & 2.51167 $\pm$ 0.02988 & 2.51434 $\pm$ 0.02595 \\
\midrule
\multirow{5}{*}{SumDiff $\uparrow$} & \ours & $10\times20$ & 1.11026 $\pm$ 0.00102 & 1.11364 $\pm$ 0.00470 & 1.11948 $\pm$ 0.00220 & 1.12089 $\pm$ 0.00065 & 1.12224 $\pm$ 0.00073 \\
 & \ours & $20\times10$ & 1.10968 $\pm$ 0.00023 & 1.10968 $\pm$ 0.00023 & 1.11413 $\pm$ 0.00480 & 1.11473 $\pm$ 0.00484 & 1.11609 $\pm$ 0.00567 \\
 & \scs & $10\times20$ & 1.10981 $\pm$ 0.00023 & 1.10990 $\pm$ 0.00009 & 1.11387 $\pm$ 0.00473 & 1.11564 $\pm$ 0.00497 & 1.11608 $\pm$ 0.00510 \\
 & \scs & $20\times10$ & 1.10968 $\pm$ 0.00023 & 1.11031 $\pm$ 0.00099 & 1.11031 $\pm$ 0.00099 & 1.11031 $\pm$ 0.00099 & 1.11087 $\pm$ 0.00080 \\
 & \shinka & -- & 1.10954 $\pm$ 0.00000 & 1.10954 $\pm$ 0.00000 & 1.10954 $\pm$ 0.00000 & 1.10954 $\pm$ 0.00000 & 1.10954 $\pm$ 0.00000 \\
\midrule
\multirow{5}{*}{Heilbronn $\uparrow$} & \ours & $10\times20$ & 0.0315 $\pm$ 0.0008 & 0.0349 $\pm$ 0.0012 & 0.0350 $\pm$ 0.0014 & 0.0354 $\pm$ 0.0014 & 0.0358 $\pm$ 0.0007 \\
 & \ours & $20\times10$ & 0.0336 $\pm$ 0.0016 & 0.0339 $\pm$ 0.0013 & 0.0342 $\pm$ 0.0013 & 0.0349 $\pm$ 0.0010 & 0.0353 $\pm$ 0.0009 \\
 & \scs & $10\times20$ & 0.0310 $\pm$ 0.0025 & 0.0325 $\pm$ 0.0003 & 0.0326 $\pm$ 0.0005 & 0.0331 $\pm$ 0.0002 & 0.0336 $\pm$ 0.0006 \\
 & \scs & $20\times10$ & 0.0294 $\pm$ 0.0021 & 0.0309 $\pm$ 0.0025 & 0.0309 $\pm$ 0.0025 & 0.0311 $\pm$ 0.0022 & 0.0313 $\pm$ 0.0026 \\
 & \shinka & -- & 0.0261 $\pm$ 0.0007 & 0.0268 $\pm$ 0.0012 & 0.0290 $\pm$ 0.0024 & 0.0297 $\pm$ 0.0024 & 0.0303 $\pm$ 0.0021 \\
\midrule
\multirow{5}{*}{Min overlap $\downarrow$} & \ours & $10\times20$ & 0.38109 $\pm$ 0.00012 & 0.38107 $\pm$ 0.00010 & 0.38101 $\pm$ 0.00007 & 0.38098 $\pm$ 0.00004 & 0.38098 $\pm$ 0.00004 \\
 & \ours & $20\times10$ & 0.38121 $\pm$ 0.00009 & 0.38116 $\pm$ 0.00016 & 0.38106 $\pm$ 0.00014 & 0.38105 $\pm$ 0.00013 & 0.38097 $\pm$ 0.00003 \\
 & \scs & $10\times20$ & 0.38141 $\pm$ 0.00029 & 0.38114 $\pm$ 0.00017 & 0.38108 $\pm$ 0.00019 & 0.38107 $\pm$ 0.00020 & 0.38107 $\pm$ 0.00020 \\
 & \scs & $20\times10$ & 0.38351 $\pm$ 0.00315 & 0.38344 $\pm$ 0.00323 & 0.38344 $\pm$ 0.00323 & 0.38156 $\pm$ 0.00053 & 0.38156 $\pm$ 0.00053 \\
 & \shinka & -- & 0.38285 $\pm$ 0.00065 & 0.38242 $\pm$ 0.00003 & 0.38242 $\pm$ 0.00003 & 0.38242 $\pm$ 0.00003 & 0.38242 $\pm$ 0.00003 \\
\bottomrule
\end{tabular}}
\end{table}

\begin{table}[p]
\centering
\caption{Best-so-far scores along the budget (continued), as the mean $\pm$ standard deviation over $3$ seeds after $40$, $80$, $120$, $160$, and $200$ evaluated programs. For \shinka, the last column reports $199$ programs.}
\label{tab:budget_cont}
\resizebox{\linewidth}{!}{%
\begin{tabular}{@{}lllccccc@{}}
\toprule
Task & Method & Config. & $40$ & $80$ & $120$ & $160$ & $200$ \\
\midrule
\multirow{5}{*}{Kissing $\uparrow$} & \ours & $10\times20$ & 578 $\pm$ 6 & 582 $\pm$ 0 & 582 $\pm$ 0 & 582 $\pm$ 0 & 582 $\pm$ 0 \\
 & \ours & $20\times10$ & 348 $\pm$ 0 & 453 $\pm$ 113 & 523 $\pm$ 103 & 523 $\pm$ 103 & 526 $\pm$ 97 \\
 & \scs & $10\times20$ & 564 $\pm$ 3 & 582 $\pm$ 0 & 582 $\pm$ 0 & 582 $\pm$ 0 & 582 $\pm$ 0 \\
 & \scs & $20\times10$ & 394 $\pm$ 38 & 394 $\pm$ 38 & 402 $\pm$ 34 & 402 $\pm$ 33 & 413 $\pm$ 23 \\
 & \shinka & -- & 426 $\pm$ 68 & 459 $\pm$ 92 & 459 $\pm$ 92 & 461 $\pm$ 91 & 462 $\pm$ 90 \\
\midrule
\multirow{5}{*}{Second autocorr. $\uparrow$} & \ours & $10\times20$ & 0.8825 $\pm$ 0.0000 & 0.8825 $\pm$ 0.0000 & 0.8825 $\pm$ 0.0000 & 0.8825 $\pm$ 0.0000 & 0.8825 $\pm$ 0.0000 \\
 & \ours & $20\times10$ & 0.8649 $\pm$ 0.0154 & 0.8823 $\pm$ 0.0004 & 0.8824 $\pm$ 0.0002 & 0.8825 $\pm$ 0.0000 & 0.8825 $\pm$ 0.0000 \\
 & \scs & $10\times20$ & 0.8825 $\pm$ 0.0000 & 0.8825 $\pm$ 0.0000 & 0.8825 $\pm$ 0.0000 & 0.8825 $\pm$ 0.0000 & 0.8825 $\pm$ 0.0000 \\
 & \scs & $20\times10$ & 0.8492 $\pm$ 0.0010 & 0.8398 $\pm$ 0.0163 & 0.8398 $\pm$ 0.0163 & 0.8398 $\pm$ 0.0163 & 0.8398 $\pm$ 0.0163 \\
 & \shinka & -- & 0.8577 $\pm$ 0.0215 & 0.8650 $\pm$ 0.0172 & 0.8732 $\pm$ 0.0096 & 0.8732 $\pm$ 0.0096 & 0.8745 $\pm$ 0.0077 \\
\midrule
\multirow{5}{*}{First autocorr. $\downarrow$} & \ours & $10\times20$ & 1.5528 $\pm$ 0.0113 & 1.5316 $\pm$ 0.0233 & 1.5217 $\pm$ 0.0093 & 1.5216 $\pm$ 0.0091 & 1.5195 $\pm$ 0.0055 \\
 & \ours & $20\times10$ & 1.5196 $\pm$ 0.0021 & 1.5169 $\pm$ 0.0053 & 1.5124 $\pm$ 0.0023 & 1.5119 $\pm$ 0.0019 & 1.5113 $\pm$ 0.0021 \\
 & \scs & $10\times20$ & 1.5674 $\pm$ 0.0056 & 1.5626 $\pm$ 0.0076 & 1.5549 $\pm$ 0.0165 & 1.5508 $\pm$ 0.0113 & 1.5508 $\pm$ 0.0113 \\
 & \scs & $20\times10$ & 1.5482 $\pm$ 0.0284 & 1.5262 $\pm$ 0.0143 & 1.5262 $\pm$ 0.0143 & 1.5235 $\pm$ 0.0106 & 1.5235 $\pm$ 0.0106 \\
 & \shinka & -- & 1.5664 $\pm$ 0.0076 & 1.5664 $\pm$ 0.0076 & 1.5622 $\pm$ 0.0149 & 1.5554 $\pm$ 0.0268 & 1.5540 $\pm$ 0.0292 \\
\midrule
\multirow{5}{*}{Max--min ratio $\downarrow$} & \ours & $10\times20$ & 12.8892 $\pm$ 0.0000 & 12.8892 $\pm$ 0.0000 & 12.8892 $\pm$ 0.0000 & 12.8892 $\pm$ 0.0000 & 12.8892 $\pm$ 0.0000 \\
 & \ours & $20\times10$ & 12.9266 $\pm$ 0.0636 & 12.9262 $\pm$ 0.0640 & 12.9262 $\pm$ 0.0640 & 12.9262 $\pm$ 0.0640 & 12.9262 $\pm$ 0.0640 \\
 & \scs & $10\times20$ & 14.8410 $\pm$ 1.5646 & 13.8205 $\pm$ 1.4212 & 13.0557 $\pm$ 0.0965 & 12.9808 $\pm$ 0.0332 & 12.9808 $\pm$ 0.0332 \\
 & \scs & $20\times10$ & 13.4169 $\pm$ 0.9139 & 13.2822 $\pm$ 0.6807 & 13.2420 $\pm$ 0.6110 & 12.9629 $\pm$ 0.1277 & 12.9338 $\pm$ 0.0772 \\
 & \shinka & -- & 13.0000 $\pm$ 0.0000 & 13.0000 $\pm$ 0.0000 & 13.0000 $\pm$ 0.0000 & 13.0000 $\pm$ 0.0000 & 13.0000 $\pm$ 0.0000 \\
\midrule
\multirow{5}{*}{Hermite $\downarrow$} & \ours & $10\times20$ & 0.39730 $\pm$ 0.06957 & 0.35841 $\pm$ 0.00503 & 0.35738 $\pm$ 0.00659 & 0.35117 $\pm$ 0.00203 & 0.35089 $\pm$ 0.00238 \\
 & \ours & $20\times10$ & 2.81478 $\pm$ 2.14897 & 1.51139 $\pm$ 1.88727 & 0.40779 $\pm$ 0.06331 & 0.35505 $\pm$ 0.00285 & 0.35473 $\pm$ 0.00314 \\
 & \scs & $10\times20$ & 1.14201 $\pm$ 1.25089 & 0.35956 $\pm$ 0.00503 & 0.35956 $\pm$ 0.00503 & 0.35956 $\pm$ 0.00503 & 0.35956 $\pm$ 0.00503 \\
 & \scs & $20\times10$ & 2.42124 $\pm$ 2.13014 & 0.96193 $\pm$ 0.95033 & 0.39580 $\pm$ 0.05769 & 0.39431 $\pm$ 0.05890 & 0.39431 $\pm$ 0.05890 \\
 & \shinka & -- & 4.72792 $\pm$ 3.78744 & 4.72791 $\pm$ 3.78744 & 4.72788 $\pm$ 3.78741 & 4.72783 $\pm$ 3.78749 & 4.72779 $\pm$ 3.78756 \\
\bottomrule
\end{tabular}}
\end{table}

\textbf{Sample efficiency.} For each setting, $n_{\ours}$ is the first number of evaluated programs at which the best-so-far score of \ours, averaged over the seeds that have produced a successful program, reaches the final best score of \scs averaged over seeds, and $n_{\scs}$ is the first number of evaluated programs at which the best-so-far score of \scs, averaged in the same way, reaches its final value. The programs within a generation are ordered by their slots. Table~\ref{tab:sample_eff} reports the four tasks in Fig.~\ref{fig:curves}.

\begin{table}[h]
\centering
\caption{Numbers of evaluated programs that \ours and \scs need to reach the final best score of \scs, averaged over $3$ seeds.}
\label{tab:sample_eff}
\begin{tabular}{llcccc}
\toprule
Task & Config. & Final of \scs & $n_{\scs}$ & $n_{\ours}$ & $n_{\scs}/n_{\ours}$ \\
\midrule
Hadamard & $10\times20$ & $0.8349$ & $186$ & $10$ & $18.6$ \\
Hadamard & $20\times10$ & $0.6641$ & $131$ & $5$ & $26.2$ \\
SumDiff & $10\times20$ & $1.11608$ & $180$ & $81$ & $2.2$ \\
SumDiff & $20\times10$ & $1.11087$ & $181$ & $97$ & $1.9$ \\
Heilbronn & $10\times20$ & $0.0336$ & $199$ & $52$ & $3.8$ \\
Heilbronn & $20\times10$ & $0.0313$ & $169$ & $27$ & $6.3$ \\
Min overlap & $10\times20$ & $0.38107$ & $141$ & $84$ & $1.7$ \\
Min overlap & $20\times10$ & $0.38156$ & $144$ & $13$ & $11.1$ \\
\bottomrule
\end{tabular}
\end{table}

\section{Prompts}
\label{appdx:prompts}

\textbf{Prompt of a slot.} The prompt of each slot is assembled as follows, where the strategy instruction is present only for idea-guided slots, and the summary of the idea memory is present for all slots.
\begin{tcolorbox}[colback=gray!5,colframe=gray!50,boxrule=0.4pt,breakable]
\scriptsize
\begin{verbatim}
{base_instruction}

# Strategy for this attempt
{directive}

{causal_info_block}

# Successful examples (to improve on):

## Solution {i+1} ({metric} = {value}):
```
{code}
```

Please now provide your new, better implementation of the required
function (exact signature from the task statement above). Address the
strategy above and avoid the failure patterns shown.
\end{verbatim}
\end{tcolorbox}

\textbf{Strategy instructions.} The strategy instruction of an action on a binary idea factor and on a categorical idea factor are as follows, with lines wrapped for display.
\begin{tcolorbox}[colback=gray!5,colframe=gray!50,boxrule=0.4pt,breakable]
\scriptsize
\begin{verbatim}
STRATEGY (this batch is targeting a discovered code factor):
{USE|AVOID} the `{feature_name}` pattern{ ({desc})}. Past data on this
run estimates its effect on {conf_metric_name} at ATE={+}{ate:.3f}.
The first example below is the highest-Y program (anchor, regardless
of this pattern); subsequent examples {USE this pattern|DO NOT use this
pattern}. Keep the pattern direction while improving on the anchor's
overall quality.
\end{verbatim}
\end{tcolorbox}
\begin{tcolorbox}[colback=gray!5,colframe=gray!50,boxrule=0.4pt,breakable]
\scriptsize
\begin{verbatim}
STRATEGY (this batch is targeting a categorical code factor): SET the
`{feature_name}` factor to value `{level}`{ ({desc})}. Past data shows
programs with this exact level gain {+}{ate:.3f} {conf_metric_name} vs
other levels of the factor. The first example below is the highest-Y
program (anchor); the other examples USE this exact level. Adopt the
`{level}` setting and improve on the anchor.
\end{verbatim}
\end{tcolorbox}

\textbf{Stall note.} At plateaus, the following note is prepended to the prompt of every slot, where \texttt{\{blur\}} lists the dependent idea factors as their names followed by the first $90$ characters of their descriptions, separated by semicolons, and \texttt{\{stall\}} is ``best has stayed at \texttt{\{best\}} for \texttt{\{streak\}} generations''.
\begin{tcolorbox}[colback=gray!5,colframe=gray!50,boxrule=0.4pt,breakable]
\scriptsize
\begin{verbatim}
CURRENT-STATE NOTE (read first): the most common combination among the
current top programs is: {blur}. Progress with this combination has
stalled — {stall}. Consider: are there OTHER constructions or
combinations, or variants of the current one, that could reach higher?
If you know one, this attempt is a good moment to try it.
\end{verbatim}
\end{tcolorbox}

\textbf{Annotator.} The annotator LLM labels the idea factors of a program with the following prompt, where the descriptions are truncated to $200$ characters and the source code to $4000$ characters.
\begin{tcolorbox}[colback=gray!5,colframe=gray!50,boxrule=0.4pt,breakable]
\scriptsize
\begin{verbatim}
[system]
You annotate program code for the presence of specific algorithmic
factors. Output STRICT JSON only with key 'assignments'.

[user]
Factors to annotate:
- <name> (binary): <description>
- <name> (categorical, values: [...]): <description>

Program code:
```python
<code>
```

Output JSON: { "assignments": { "<factor_name>": <value>, ... } }
For binary factors, value should be true or false. For categorical,
value should be one of the listed values. Use null only if unknown.
\end{verbatim}
\end{tcolorbox}

\textbf{Background probe.} At the entry of a plateau, the background probe reads the $8$ successful programs with the highest scores, with the source code truncated to $1500$ characters, and the names of the current idea factors.
\begin{tcolorbox}[colback=gray!5,colframe=gray!50,boxrule=0.4pt,breakable]
\scriptsize
\begin{verbatim}
These are the TOP-SCORING programs of a search that has plateaued —
they are the stuck family.

## Program {i} ({metric}={score}):
```
{code}
```
...

Factors already tracked (do NOT repeat these or trivial rephrasings):
- {name}
...

Name the structural / algorithmic properties that these top programs
SHARE — especially the construction or representation they all rely
on. These will be used to describe the stuck family so the search can
deliberately depart from it. Binary properties only.
Return ONLY a JSON array: [{"name":"<snake_case>","description":
"<one sentence, concrete>"}] (2 to 4 entries)
\end{verbatim}
\end{tcolorbox}

\subsection{More Details of Experiments on Machine Learning Engineering}
\label{appdx:mle-experiments}

We further study whether the effectiveness of \ours transfers to machine learning engineering (MLE). We integrate its idea discovery and search control into OpenRSI~\citep{yang2026frontis} and evaluate the resulting agent on MLS-Bench~\citep{lyu2026mlsbench}. This setting examines the use of idea factors within a harness that supports evolution on ML algorithms towards a specific task objective.

\textbf{Tasks and data.}
MLS-Bench evaluates agents on improving components of ML algorithms under fixed task interfaces and evaluation protocols. Here we consider \textbf{6} tasks: \emph{Online bandit}, \emph{Anomaly detection}, \emph{Clustering}, \emph{Multi-objective optimization}, \emph{Subgroup calibration shift}, and \emph{Evolution strategy}.
Search leverages the scores and execution feedback exposed by each task.

\textbf{Backbone and budget.}
We use the evolutionary harness from OpenRSI, which organizes search through \textsc{Draft}, \textsc{Improve}, \textsc{Debug}, and \textsc{Crossover} operators.
For each task, we retain a fixed history containing 20 completed search slots from an earlier OpenRSI run. The OpenRSI baseline and OpenRSI equipped with \ours continue from this same history for \textbf{50} additional evaluator calls, with \textbf{3} seeds.  Both methods use \textbf{5} search slots per generation.
All program generation uses Qwen3.6-35B-A3B.
The same checkpoint performs factor discovery and annotation in \ours.

\textbf{Integrating idea factors with OpenRSI.}
\ours initializes its idea memory from the shared history and updates it with newly evaluated programs. Consistent with Sec.~\ref{sec:method}, Discovery examines program pairs whose score differences are insufficiently explained by the current factors. The LLM reads their source and proposes reusable mechanisms.
Factors $z$ enter the action catalog when both states have sufficient support and exhibit a score contrast in the archive.
Each generation reserves two slots for \textsc{Open} exploration and schedules three for \textsc{Guided} search.
Specifically, A guided slot first selects an action, such as $\mathsf{ADD}(z)$ or $\mathsf{REMOVE}(z)$, and then the harness selects a high-scoring parent compatible with that change.
We use improvement over the best selected parent as action feedback, adapting the reward reference to OpenRSI's parent--child structure.
When the cumulative best score remains unchanged within relative tolerance $10^{-4}$ across three generation boundaries, \ours  switches to idea-guided exploration at plateaus, aligning with Sec.~\ref{sec:method_guide}.

\textbf{Evaluation and results.} We evaluate search performance using the best-so-far (BSF) score.
For each task and method, we average the final BSF score over three seeds, starting from the same 20-slot history and continuing for 50 additional evaluations. Fig.~\ref{fig:mle-final} compares OpenRSI with OpenRSI+\textsc{ExpEvolve} across the six tasks. The reported configurations improve the mean final BSF score on all six tasks, with the largest gains on Online bandit and Anomaly detection.

\clearpage
\section*{Ethics Statement}
This work studies evolutionary coding agents on open-ended mathematical discovery tasks and on machine learning tasks from MLS-Bench. It does not involve human subjects or personally identifiable information, and all tasks come from public benchmarks or prior work.

\section*{Reproducibility Statement}
The full procedure of \ours is given in Alg.~\ref{alg:expevolve} and Alg.~\ref{alg:discover}, and its implementation details and hyperparameters are given in Appendix~\ref{appdx:method} and Table~\ref{tab:expevolve_hparams}. The tasks, evaluators, and reference values are described in Appendix~\ref{appdx:tasks}, the configurations of the baselines in Appendix~\ref{appdx:baselines}, the budget and resources in Appendix~\ref{appdx:budget}, the setting of MLS-Bench in Appendix~\ref{appdx:mle-experiments}, and all prompts in Appendix~\ref{appdx:prompts}. The per-seed results are given in Appendix~\ref{appdx:results}. The formal statements and proofs of the theoretical results are given in Appendix~\ref{appdx:thm1-static} and Appendix~\ref{appdx:prop2}. All experiments on mathematical discovery use DeepSeek-V4-Flash unless stated otherwise, and each setting is run with $3$ seeds.

\section*{AI Use Statement}
We used LLM assistance to revise the manuscript's organization and wording, identify and check related references, and examine the interpretation of reported experiments. The proposed framework itself uses LLMs to generate programs, propose idea factors, and assess program properties.

\end{document}